\theoremstyle{definition}
\newtheorem{definition}{Definition}
\newtheorem{theorem}{Theorem}
\newtheorem{lemma}{Lemma}
\newtheorem{claim}{Claim}
\newenvironment{proofof}[1]{\smallskip\noindent{\bf Proof of #1}}%
        {\hspace*{\fill}$\Box$\par}
\DeclareMathOperator*{\argmax}{arg\,max}
\DeclareMathOperator*{\maxp}{top^p}
\DeclareMathOperator*{\maxz}{top^0}
\DeclareMathOperator*{\maxten}{top^{10}}
\newcommand{\specialcell}[2][c]{%
  \begin{tabular}[#1]{@{}c@{}}#2\end{tabular}}
\let\emptyset\varnothing
\title{Distributed Maximization of Submodular plus Diversity Functions\\for Multi-label Feature Selection on Huge Datasets}
\begin{document}
% \nipsfinalcopy is no longer used

\twocolumn[
\runningtitle{Multi-Label Feature Selection Using Submodular Plus Diversity Maximization}
\aistatstitle{Distributed Maximization of Submodular plus Diversity Functions\\for Multi-label Feature Selection on Huge Datasets}

\aistatsauthor{ Mehrdad Ghadiri \And Mark Schmidt }

\aistatsaddress{ University of British Columbia } ]

\begin{abstract}
There are many problems in machine learning and data mining which are equivalent to selecting a non-redundant, high ``quality'' set of objects. Recommender systems, feature selection, and data summarization are among many applications of this. In this paper, we consider this problem as an optimization problem that seeks to maximize the sum of a sum-sum diversity function and a non-negative monotone submodular function. The diversity function addresses the redundancy, and the submodular function controls the predictive quality. We consider the problem in big data settings (in other words, distributed and streaming settings) where the data cannot be stored on a single machine or the process time is too high for a single machine. We show that a greedy algorithm achieves a constant factor approximation of the optimal solution in these settings. Moreover, we formulate the multi-label feature selection problem as such an optimization problem. This formulation combined with our algorithm leads to the first distributed multi-label feature selection method. We compare the performance of this method with centralized multi-label feature selection methods in the literature, and we show that its performance is comparable or in some cases is even better than current centralized multi-label feature selection methods.
%Each sample in a multi-label dataset consists of a number of features and a number of labels. The goal of multi-label learning is to train a model which can predict the labels of the samples using their features. Feature selection can be of a great help for learning procedure by selecting a small informative subset of features as a preprocessing phase. It can increase the learning accuracy and decrease the training time by removing redundant and non-relevant features. As the size of datasets are growing very fast, the need for scalable feature selection methods is crucial. In this paper, we formulate the multi-label feature selection problem as an optimization problem which its objective function is sum of a diversity function and a monotone submodular function. By using this formulation and randomized composable core-sets, we can solve multi-label feature selection problem in different settings: centralized, distributed, and streaming. To the best of our knowledge, the proposed method is the first distributed multi-label feature selection method. We show constant-factor approximation guarantees for the optimization problem in distributed and streaming settings and we show the effectiveness of our method by doing an extensive empirical study.
\end{abstract}

\section{Introduction}
Many problems from different areas of machine learning and data mining can be modeled as an optimization problem that tries to maximize the sum of a sum-sum diversity function (which is the sum of the distances between all of the pairs in a given subset) and a non-negative monotone submodular function. Examples include query diversification problem in the area of databases~\citep{DemidovaFZN10,LiuSC09}, search result diversification~\citep{AgrawalGHI09,DrosouP10}, and recommender systems~\citep{YuLA09}. The size of the datasets in these applications is growing rapidly, and there is a need for scalable methods to tackle these problems on huge datasets. Inspired by these applications, we propose an algorithm for approximately solving this optimization problem with a theoretical guarantee in distributed and steaming settings.~\citet{BorodinJLY17} presented a 0.5-approximation for this optimization problem in the centralized setting in which data can be stored and processed on a single machine. In this paper, we consider this problem for big data settings where the data cannot be stored on a single machine, or the process time is too high for a single machine. We show that our algorithm achieves a $1/31$-approximation. Note that solving this problem in a distributed or streaming setting is strictly harder than solving it in the centralized setting because, in the aforementioned settings, the algorithm does not use all of the data. As a result, our algorithm is $\frac{\sqrt{d/k}}{2}$ times faster in the distributed setting and it needs $\sqrt{d/k}$ times less memory in the streaming setting compared to the centralized setting, where $d$ is the size of the ground set (for example, the number of features in the feature selection problem), and $k$ is the number of machines (in the distributed setting) or is the number of partitions of the data (in the streaming setting). Therefore, our algorithm gives a worse approximate solution compared to the centralized method of~\citet{BorodinJLY17} but it is much faster and needs less memory. This trade-off might be interesting and useful in some applications.

One of the problems that can be modeled as such an optimization problem and is in need of scalable methods in modern applications is multi-label feature selection. The diversity part controls the redundancy of the selected features and the submodular part is to
promote features that are relevant to the labels. A multi-label dataset is made up of a number of samples, features, and labels. Each sample is a set of values for the features and labels. Usually, labels have binary values. For example, if a patient has diabetes or not. Multi-label datasets can be found in different areas, including but not limited to semantic image annotation, protein and gene function studies, and text categorization~\citep{kashefmultilabel}. Applications, number, and size of such datasets are growing very rapidly, and it is necessary to develop efficient and scalable methods to deal with them.

Feature selection is a fundamental problem in machine learning. Its goal is to decrease the dimensionality of a dataset in order to improve the learning accuracy, decrease the learning and prediction time, and prevent overfitting. There are three different categories of feature selection methods depending on their interaction with the learning methods. Filter methods select the features based on the intrinsic properties of the data and are totally independent of the learning method. Wrapper methods select the features according to the accuracy of a specific learning method, like SVMs. Finally, embedded methods select the features as a part of their learning procedure~\citep{GuyonE03}. Decision trees and use of $\ell_0$ and $\ell_1$ regularization for feature selection fall into the latter. When the number of features is large, filter methods are a reasonable choice since they are fast, resistant to over-fitting, and independent of the learning model. Therefore, we can quickly select a number of features with filter methods and then try different learning methods to see which one fits the data better (possibly with wrapper or embedded feature selection methods). However, with millions of features, centralized filter methods are not applicable anymore. To deal with such huge datasets, we need scalable methods. Although there were efforts to develop scalable and distributed filter methods for single-label datasets~\citep{ZadehGMZ17,Bolon-CanedoSA15}, to the best of our knowledge, there are no previous distributed multi-label feature selection method.

In this paper, we propose an information theoretic filter feature selection method for multi-label datasets that is usable in distributed, streaming, and centralized settings. In the centralized setting, all of the data is stored and can be processed on a single machine. In the distributed setting, the data is stored on multiple machines, and there is no shared memory between machines. In the streaming setting, although the computation is done on a single machine, this machine does not have enough memory to store all of the data at once. The data in our method is distributed vertically which means that the features are distributed between machines instead of samples (horizontal distribution). Feature selection is considered harder when the data is distributed vertically because we lose much information about the relations of the features~\citep{Bolon-CanedoSA15B}. However, when the number of instances is small, and the number of features is large (for example, biological or medical datasets) vertical distribution is the only reasonable choice. Our work can be seen as an extension of~\citet{BorodinJLY17} to distributed and streaming settings or an extension of~\citet{ZadehGMZ17} to multi-label data. However, our results cannot be derived from these previous works in a straightforward manner. The main contributions of the paper are listed in the following.

\subsection*{Our Contributions}
\begin{itemize}[leftmargin = 0.35cm]
\setlength\itemsep{0.1cm}
\item We present a greedy algorithm for maximizing the sum of a sum-sum diversity function and a non-negative monotone submodular function in the distributed and streaming settings. We prove that it achieves a constant factor approximation of the optimal solution.
\item We formulate the multi-label feature selection problem as such a combinatorial optimization problem. Using this formulation we present information theoretic filter feature selection methods for distributed, steaming, and centralized settings. The distributed method is the first distributed multi-label feature selection method proposed in the literature.
\item We perform an empirical study of the proposed distributed method and compare its results to different centralized multi-label feature selection methods. We show that the results of the distributed method are comparable to the current centralized methods in the literature. We also compare the runtime and the value of the objective function that our centralized and distributed methods achieve. Note that the centralized methods have access to the all of the data and can do computation on it. We do not expect that our distributed or streaming method to beat the centralized methods because it is not possible. However, we argue that our results are comparable to the results of centralized methods and our method is much faster (in case of the distributed setting) and needs much less memory (in case of the streaming setting). We compared our results with the centralized methods (this comparison is unfair to the distributed setting) in the literature because to the best of our knowledge there is no distributed multi-label feature selection method prior to this work.
\end{itemize}
\vspace{-0.2cm}

Our techniques can be used prior to multi-label classification, multi-label regression, and in some multi-task learning setups. The structure of the paper is as follows. In the next section, we review the related work and preliminaries. In Section 3, we formulate the multi-label feature selection problem as the mentioned optimization problem and present the algorithm for maximizing it in the distributed and streaming settings. In Section 4, we show the theoretical approximation guarantee of the proposed algorithm. In Section 5, we evaluate the performance of the proposed distributed algorithm in practice.

\section{Related Work}
In this section, we review the previous works on different aspects of the problem including diversity maximization, submodular maximization, composable core-sets, and feature selection.
\subsection*{Diversity Maximization and Submodular Maximization}
Usually, the diversity maximization problem is defined on a metric space of a set of points $U$ with the goal of finding a subset of them which maximizes a diversity function subject to a constraint. For example, a cardinality constraint or a matroid constraint. If $S$ is a subset of the points, the sum-sum diversity of $S$ is $D(S) = 0.5\sum_{x\in S}\sum_{y\in S} d(x,y)$ where $d(.,.)$ is a metric distance. In the centralized setting, a simple greedy or local search algorithm can achieve a half approximation of the optimal solution subject to $|S|=k$~\citep{maxdispersion,AbbassiMT13}. TA better approximation factor is not achievable under the planted clique conjecture~\citep{BhaskaraGMS16,BorodinJLY17}.

Submodular functions are important concepts in machine learning and data mining with many applications. See~\citet{krause2008beyond} for their applications. A submodular function is a set function with a diminishing marginal gain. A function $f: 2^U \rightarrow \mathbb{R}$ is submodular if $f(A\cup \{x\}) - f(A) \geq f(B\cup \{x\}) - f(B)$ for any $A \subseteq B \subset U$, and $x\in U\setminus B$. It is monotone if $f(A) \leq f(B)$ and it is non-negative if $f(A) \geq 0$ for any $A \subseteq B \subseteq U$. Maximizing a monotone submodular function subject to a cardinality constraint is NP-hard but using a simple greedy algorithm we can achieve $(1-\frac{1}{e})$ of the optimal solution. A better approximation factor is not achievable using a polynomial time algorithm unless P=NP~\citep{0001G14}.

Let $U$ be a set and $f(.)$ be a submodular function defined on $U$ and $d(.,.)$ be a metric distance defined between pairs of elements of $U$.~\citet{BorodinJLY17} showed that in the centralized setting, using a simple greedy algorithm, we can achieve half of the optimal value for maximizing $f(S)+\lambda \sum_{\{u,v\}: u,v \in S} d(u,v)$ subject to $S \subseteq U$ and $|S| = k$. This result is extended to semi-metric distances in~\citet{zadeh2015max}. Similar problems are considered in~\citet{DasguptaKR13} where the diversity part can be other diversity functions. Namely, they considered the sum-sum diversity, the minimum spanning tree, and the minimum of distances between all pairs. They showed that the greedy algorithm achieves a constant factor approximation in all of these cases.

\subsection*{Composable Core-sets}
In computational geometry, a core-set is a small subset of points that approximately preserve a measure of the original set~\citep{agarwal2005geometric}. Composable core-sets extend this property to the combination of sets. Therefore, they can be used in a divide and conquer manner to find an approximate solution. Let $U$ be a set, $f:2^U\rightarrow \mathbb{R}$ be a set function on $U$, $(T^1, \ldots, T^m)$ be a random partitioning of elements of $U$, and $k$ be a positive integer. Let $\texttt{OPT}(T) = \argmax_{S\subseteq T, |S|=k} f(S)$ where $T\subseteq U$. Let $\texttt{ALG}$ be an algorithm which takes $T \subseteq U$ as an input and outputs $S \subseteq T$. For $\alpha>0$, we call $\texttt{ALG}$ an $\alpha$-approximate composable core-set with size $k$ for $f$ if the size of its output is $k$ and $f(\texttt{OPT}(\texttt{ALG}(T^1)\cup \cdots \cup \texttt{ALG}(T^m))) \geq \alpha f(\texttt{OPT}(T^1 \cup \cdots \cup T^m))$~\citep{IndykMMM14}. We call $\texttt{ALG}$ an $\alpha$-approximate \emph{randomized} composable core-set with size $k$ for $f$ if the size of its output is $k$ and $\mathbb{E}[f(\texttt{OPT}(\texttt{ALG}(T^1)\cup \cdots \cup \texttt{ALG}(T^m)))] \geq \alpha f(\texttt{OPT}(T^1 \cup \cdots \cup T^m))$~\citep{MirrokniZ15}. Composable core-sets and randomized composable core-sets can be used in distributed settings (like the MapReduce framework) and streaming settings (see Figure~\ref{Fig:systems}).

Composable core-sets first were used to approximately solve several diversity maximization problems in distributed and streaming settings~\citep{IndykMMM14}. It resulted in an approximation algorithm for the sum-sum diversity maximization with an approximation factor of less than $0.01$. This approximation factor is improved to $\frac{1}{12}$ in~\citet{AghamolaeiFZ15}. Randomized composable core-sets were first introduced to tackle submodular maximization problem in distributed and streaming settings which resulted in a $0.27$-approximation algorithm for monotone submodular functions~\citep{MirrokniZ15}. Then they were used to improve the approximation factor of the sum-sum diversity maximization from $\frac{1}{12}$ to $0.25$~\citep{ZadehGMZ17}. The randomized composable core-sets used in the latter case find the approximate solution with high probability instead of expectation.

There are a number of other works on distributed submodular maximization~\citep{MirzasoleimanKS16,BarbosaENW15}. Moreover, submodular and weak submodular functions are used for distributed \emph{single-label} feature selection~\citep{KhannaEDNG17}. We should note that the discussed objective function in our work is neither submodular nor weak submodular. This is because of the diversity term of the function. An advantage of using this diversity function is that it is evaluated by a pairwise distance function. As a result, it is easy to evaluate our objective function on datasets with few samples. On the contrary, evaluating the pure submodular functions, that were used for feature selection in the literature, are quite hard and need a large amount of data and computing power. 

\subsection*{Feature Selection and Multi-label Feature Selection}
Filter feature selection methods select features independent of the learning algorithm. Hence, they are usually faster and immune to overfitting~\citep{GuyonE03}. Mutual information based methods are a well-known family of filter methods. The best-known method of this kind for single-label feature selection is minimum redundancy and maximum relevance (mRMR) which tries to find a subset of features $S$ that maximizes the following objective function using a greedy algorithm

\vspace{-0.4cm}
\[
\frac{1}{|S|}\sum_{x_i \in S} I(x_i,c) - \frac{1}{|S|^2}\sum_{x_i,x_j\in S} I(x_i,x_j),
\]
\vspace{-0.4cm}

where $I(.,.)$ is the mutual information function, and $c$ is the label vector~\citep{PengLD05}. The proposed method in this paper can be seen as a variation of mRMR which is capable of being used for multi-label feature selection in distributed, streaming, and centralized settings.

Although there have been great advancements in centralized feature selection, there are few works on distributed feature selection, and most of them distribute the data horizontally. \citet{ZadehGMZ17} was the first work on the single-label vertically distributed feature selection that considered the redundancy of the features. Their method selects features using randomized composable core-sets in order to maximize a diversity function defined on the features. Although there are some similarities between the formulations presented in~\citet{ZadehGMZ17} and this work, we should note that the single-label formulation cannot be applied directly to multi-label datasets. Moreover, maximization of the functions and the analysis of the algorithms to prove the theoretical guarantee are completely different.

Most of the multi-label feature selection methods transform the data to a single-label form. Binary relevance (BR) and label powerset (LP) are two common ways to do so. BR methods consider each label separately and use a  single-label feature selection method to select features for each label, and then they aggregate the selected features. A disadvantage of BR methods is that they cannot consider the relations of the labels. LP methods consider the multi-label dataset as one single-label multi-class dataset where each class of its single label are a possible combination of labels in the dataset (treating the labels as a binary string). Then they apply a single-label feature selection method. Although LP methods consider the relations of the labels, they have significant drawbacks. For example, some classes may end up with very few samples or none at all. Moreover, the method is biased toward the combination of the labels which exist in the training set~\citep{kashefmultilabel}. Our proposed method does not transform the data to single-label data and is designed in a way to not suffer from the mentioned disadvantages.

%Multi-label feature selection methods are generally separated to two categories: problem transformation methods and algorithm adaptation methods~\cite{kashefmultilabel}. Problem transformation methods transform the multi-label dataset to single-label multi-class dataset and employ any single-label feature selection method to find features specific to each label and then aggregate the selected features in some way. Algorithm adaptation methods directly generalize an existing feature selection method to handle multi-label data. 

\section{Problem Formulation}
Let $U$ be a set of $d$ features and $L$ be a set of $t$ labels. 
We also have a set $A$ of $n$ instances each of which is a vector of  observations for elements of  $U \cup L$. The goal of {\em  multi-label feature selection} is to find a small {\em non-redundant} subset of $U$ which can {\em predict} labels in $L$ accurately. In order to quantify redundancy  it is natural to use a metric distance $d$ over the feature set to measure dissimilarity. In our application (feature selection) we are particularly interested in the following metric distance. For any $u_i, u_j \in U$, we define
\begin{align*}
d(u_i,u_j) & = 1-\frac{I(u_i,u_j)}{H(u_i,u_j)} \\ & = 1 - \frac{\sum_{x \in u_i, y \in u_j} p(x,y) \log{\frac{p(x,y)}{p(x)p(y)}}}{-\sum_{x \in u_i, y \in u_j}p(x,y)\log p(x,y)},
\end{align*}
where $H(.,.)$ is the joint entropy and $I(.,.)$ is the mutual information. This distance function is called {\em normalized} (values lie between $0$ and $1$) {\em variation of information} and it is a metric~\citep{NguyenEB10}. In~\citet{ZadehGMZ17}, this distance function plus a modular function is used for single-label feature selection. 

In order to quantify the predictive quality of the selected features, we define a non-negative monotone submodular function $g:2^U\rightarrow \mathbb{R}$ which measures the relevance of the selected features to the labels. For any positive integer $p$, we define
\[
g(S) = \sum_{\ell \in L} \maxp_{x\in S} \{MI(x, \ell)\},
\]
where $\maxp_{x\in S} \{MI(x, \ell)\}$ is the sum of the $p$ largest numbers in $\{MI(x, \ell)|x\in S\}$. Here $MI(x, \ell) = \frac{I(x, \ell)}{\sqrt{H(x)H(\ell)}}$ is the normalized mutual information where $H(.)$ is the entropy function and the value $MI(.,.)$ lies in $[0,1]$. Note that if we only have one label (i.e., $|L|=1$), and $p=d$ (the number of all features of the dataset) then $g$ will be exactly the modular function used in~\citet{ZadehGMZ17}. Therefore, our formulation is a generalization of theirs. Using the $\maxp$ function, this formulation tries to select at least $p$ relevant features for each label. In order to understand the importance of $\maxp$ function, we discuss two extreme cases: $p=1$ and $p=d$. If $p=1$ then a feature that is somewhat relevant to all the features can dominate the $g(S)$ and prevent other features, that are highly relevant to one or few features, to get selected. If $p=d$ then a label that has a lot of relevant features can dominate $g(S)$ and prevent other labels to get relevant features, while a few features would be enough for predicting this label with a high accuracy. 
In the following lemma, we show that $g$ has the nice properties we need in our model. Its proof is included in Appendix A.

\begin{lemma} 
$g$ is a non-negative, monotone, submodular function.
\end{lemma}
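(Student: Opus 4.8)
The plan is to verify the three properties in turn, leaving submodularity for last as the only nontrivial point. Non-negativity is immediate: each normalized mutual information value $MI(x,\ell)$ lies in $[0,1]$, and $g(S)$ is a sum of sums of such values, hence $g(S) \ge 0$.

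For monotonicity and submodularity it is cleanest to argue label by label. Since a (non-negative) sum of monotone submodular functions is again monotone and submodular, and $g(S) = \sum_{\ell \in L} h_\ell(S)$ with $h_\ell(S) := \maxp_{x\in S}\{MI(x,\ell)\}$, it suffices to show each $h_\ell$ is monotone and submodular. Fix a label $\ell$, abbreviate $w(x) := MI(x,\ell) \ge 0$, and for a set $S$ let $\tau(S)$ be the $p$-th largest value among $\{w(x) : x \in S\}$ when $|S| \ge p$, and $\tau(S) := 0$ otherwise. The step I would carry out first is the marginal-gain formula: for $x \notin S$,
\[
h_\ell(S \cup \{x\}) - h_\ell(S) = \max\{0,\ w(x) - \tau(S)\}.
\]
If $|S| < p$, every element of $S \cup \{x\}$ is counted in the top-$p$ sum, so the gain is $w(x) = \max\{0, w(x) - 0\}$; if $|S| \ge p$, then $x$ enters the top-$p$ precisely when $w(x) > \tau(S)$, displacing the current $p$-th largest value, for a gain of $w(x) - \tau(S)$, and otherwise the gain is $0$.

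Monotonicity is then immediate, the right-hand side being $\ge 0$. For submodularity I would take $A \subseteq B$ and $x \notin B$ and reduce the required inequality $h_\ell(A\cup\{x\}) - h_\ell(A) \ge h_\ell(B\cup\{x\}) - h_\ell(B)$ to $\max\{0, w(x) - \tau(A)\} \ge \max\{0, w(x) - \tau(B)\}$, which follows once $\tau(A) \le \tau(B)$. The latter holds because there are at least $p$ elements of $A$ — hence of $B$ — with weight at least $\tau(A)$, so the $p$-th largest weight in $B$ is at least $\tau(A)$ (and the case $|A| < p$ gives $\tau(A) = 0 \le \tau(B)$ trivially, as the weights are non-negative). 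Summing over $\ell \in L$ yields the statement for $g$.

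There is essentially no hard step here; the only thing to watch is the bookkeeping in the boundary case $|S| < p$ — equivalently, committing to the convention $\tau(S) = 0$ there — and observing that ties among the values $w(x)$ are harmless, since the top-$p$ sum does not depend on how they are broken.
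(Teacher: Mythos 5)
Your proof is correct and follows essentially the same route as the paper's: decompose $g$ over labels, observe that the marginal gain of adding $x$ is $\max\{0,\, MI(x,\ell) - \tau(S)\}$ where $\tau(S)$ is the $p$-th largest value in $S$ (taken as $0$ when $|S|<p$), and use that $\tau$ is monotone under set inclusion. If anything, your explicit marginal-gain formula handles the $|S|<p$ boundary case and ties a bit more carefully than the paper's case analysis.
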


Hence if we define $f(S)=g(S)+\sum_{\{u,v\}\in S} d(u,v)$, then our  feature selection model reduces to solving the following combinatorial optimization problem.

\vspace{-0.4cm}
\begin{equation}
\label{eqn:defn}
\max_{\substack{S\subseteq U \\ |S| = k}} f(S) = \max_{\substack{S\subseteq U \\ |S| = k}} \{g(S) + \sum_{\{u,v\}\in S} d(u,v)\},
\end{equation}
\vspace{-0.4cm}

\noindent
where $d(.,.)$ is a metric distance and $g(.)$ is a non-negative monotone submodular function. In the actual feature selection method we are free to scale the relative contributions of the diversity or submodular parts, since  both metric and submodular functions are closed under multiplication by a positive constant. Hence, we use a weighted version of the objective function in our application.

\iffalse
We mention that a motivation for using  the $\maxp$ term in $g$'s definition is to allow the user flexibility to set a cut-off for  having enough features to predict a label. In fact, this cut-off
could be label-specific. In other words,   the user could specify $p(\ell)$ for each label $\ell$ which then contributes a $\max^{p(\ell)}$ term; note that  this does not affect submodularity of $g$. Other submodular functions are also possible. For instance, for a fixed label $\ell$, consider the modular function  $M(S)=\sum_{x \in S} MI(x,\ell)$. Then one could specify a threshold $T(\ell)$ above which we are heuristically comfortable in capturing enough  features. Then
$g'(S)=\min\{T(\ell), M(S)\}$ is again submodular and our algorithm would also be amenable to this alternative model.
\fi

\vspace{-0.2cm}
\RestyleAlgo{algoruled}
\begin{algorithm}
\footnotesize
\textbf{Input:} Set of features $U$, set of labels $L$, number of features we want to select $k$.\\[0.6ex]
 \textbf{Output:} Set $S \subset U$ with $|S| = k$.\\[0.6ex]
$S \leftarrow \{\argmax_{u \in U} g(\{u\})\}$\; 
\ForAll{$2 \leq i \leq k$}{
	$u^* \leftarrow \argmax\limits_{u \in U \setminus S}\ \ g(S\cup \{u\}) - g(S) + \sum\limits_{x\in S} d(x, u)$\; 
	\Comment{\ }\parbox{6cm}{{This $\argmax$ has a consistent tiebreaking rule (see Definition 1).}} \\ 
Add $u^*$ to $S$\;
}
Return $S$\;
\caption{Greedy}
\label{alg:AlgGMM}
\end{algorithm}
\vspace{-0.2cm}

The problem  (\ref{eqn:defn}) is NP-hard but~\citet{BorodinJLY17} show that Algorithm~\ref{alg:AlgAGMM} is a half approximation in the centralized setting. Note that this is a greedy algorithm under the objective where $g(S)$ is scaled  by $\frac{1}{2}$.
  On the other hand,  Algorithm~\ref{alg:AlgGMM} is a standard greedy algorithm for (\ref{eqn:defn}) and in the next section we show it is a constant factor randomized composable core-set for any functions $f$ which are the sum of a sum-sum diversity function and a non-negative, monotone, submodular function. Combining these  we conclude that Algorithm~\ref{alg:DistAlgGMM} is a constant factor approximation algorithm for maximizing $f$. Moreover, Algorithm~\ref{alg:DistAlgGMM} can be used both in distributed and streaming settings, as illustrated in Figure~\ref{Fig:systems}.
%\vspace{-0.4cm}
%This function is submodular and instead of $\max$ we can consider sum of $k$ largest numbers in $I(f,\ell)$ and $g(S)$ still would be submodular. Now, we define our objective function
%\[
%h(S) = \lambda \sum_{f_i, f_j \in S} d(f_i,f_j) + (1-\lambda)\frac{|S|(|S|-1)}{2|L|} \sum_{\ell \in L} \max_{f_i \in S} I(f_i, \ell).
%\]
%The first term of $h(S)$ is to prevent redundancy of the selected features and the second term is to make the selected features relevant to the labels. $\frac{|S|(|S|-1)}{2|L|}$ coefficient is just for normalization and $\lambda$ is a hyper-parameter. We want to maximize this function subject to a cardinality constraint $|S| = k$, so the normalization factor would be a constant value based on $k$. Based on \cite{BorodinJLY17}, we can achieve half of the optimal value of $h$ using the greedy algorithm shown in Algorithm~\ref{alg:AlgAGMM} where 
%\[
%h'(S) = \lambda \sum_{f_i, f_j \in S} d(f_i,f_j) + (1-\lambda)\frac{|S|(|S|-1)}{4|L|} \sum_{\ell \in L} \max_{f_i \in S} I(f_i, \ell).
%\]
%This is the centralized version of our multi-label feature selection method.
%Max-sum diversity maximization and monotone submodular maximization are solved approximately using randomized composable core-sets in~\cite{ZadehGMZ17} and~\cite{MirrokniZ15} using algorithms similar to Algorithm~\ref{alg:DistAlgGMM}. This algorithm can be used both in a distributed setting and streaming setting.
In our experiments, to select $k$ features, we use the following function.
\vspace{0.0cm}
\begin{align}
\label{eqn:objform}
h(S) = (1-\lambda)\frac{k(k-1)}{2p|L|} g(S) + \lambda \sum_{x_i, x_j \in S} d(x_i,x_j).%\sum_{\ell \in L} \maxp_{x_i \in S} I(x_i, \ell).
\end{align}
\vspace{-0.4cm}

\noindent
As discussed, the first term of $h(S)$ controls redundancy of the selected features and the second term is to promote features that are relevant to the labels. The term $\frac{k(k-1)}{2p|L|}$ is a normalization coefficient to make the range of both terms the same. Also, $\lambda$ is a hyper-parameter which controls the effect of two criteria on the final function.

\vspace{-0.1cm}
{\RestyleAlgo{algoruled}
\begin{algorithm}[h]
\footnotesize
\textbf{Input:} Set of features $U$, set of labels $L$, number of features we want to select $k$.\\[0.6ex]
 \textbf{Output:} Set $S \subset U$ with $|S| = k$.\\[0.6ex]
$S \leftarrow \{\argmax_{u \in U} g(\{u\})\}$\; 
\ForAll{$2 \leq i \leq k$}{
	$u^* \leftarrow \argmax\limits_{u \in U \setminus S}\ \ \frac{1}{2}(g(S\cup \{u\}) - g(S)) + \sum\limits_{x\in S} d(x, u)$\;
Add $u^*$ to $S$\;
}
Return $S$\;
\caption{AltGreedy}
\label{alg:AlgAGMM}
\end{algorithm}
}
\vspace{-0.4cm}

\section{Theoretical Results}
Let $f(S) = D(S) + g(S)$ be a set function defined on $2^U$ where $g(S)$ is a non-negative, monotone, submodular function and $D(S)$ is a sum-sum diversity function, i.e. $D(S) = \sum_{\{u,v\} \in S} d(u,v)$ where $d(.,.)$ is a metric distance. In this section, we show that Algorithm~\ref{alg:AlgGMM} is a constant factor randomized composable core-set with size $k$ for $f$. We also show that running Algorithm~\ref{alg:DistAlgGMM} which is equivalent to running Algorithm~\ref{alg:AlgGMM} in each slave machine and then running Algorithm~\ref{alg:AlgAGMM} in the master machine on the union of outputs of slave machines is a constant factor randomized approximation algorithm for maximizing $f$ subject to a cardinality constraint. 

\vspace{-0.2cm}
\RestyleAlgo{algoruled}
\begin{algorithm}
\footnotesize
\textbf{Input:} Set of features $U$, set of labels $L$, number of features we want to select $k$, number of machines $m$.\\[0.6ex]
 \textbf{Output:} Set $S \subset U$ with $|S| = k$.\\[0.6ex]
%$S \gets \emptyset$\;
Randomly partition $U$ into $(T_{i})_{i=1}^m$\;
\ForAll{$1 \leq i \leq m$}{
	$S_{i} \leftarrow$ output of $Greedy$($T_{i}$, $L$, $k$)\;
}
$S \leftarrow $ output of  $AltGreedy$($\cup_{i=1}^m S_{i}$, $L$, $k$)\;
Return $S$\;
\caption{Distributed Greedy}
\label{alg:DistAlgGMM}
\end{algorithm}
\vspace{-0.2cm}

We use the following key concept of a $\beta$-nice algorithm from~\citet{MirrokniZ15} throughout our analysis.

\begin{definition}
Let  $f$ be a set function on $2^U$. Let $\texttt{ALG}$ be an algorithm that given any $T \subseteq U$ outputs $\texttt{ALG}(T) \subseteq T$.
 Let $t \in T \setminus \texttt{ALG}(T)$. For $\beta \in \mathbb{R}^+$, we call $\texttt{ALG}$ a $\beta$-nice algorithm if it has the following properties.
 \vspace{-0.7cm}
\begin{itemize}
\setlength\itemsep{0.05cm}
\item $\texttt{ALG}(T) = \texttt{ALG}(T \setminus \{t \})$.
\item $f(\texttt{ALG}(T) \cup \{t \}) - f(\texttt{ALG}(T)) \leq \beta \frac{f(\texttt{ALG}(T))}{k}$.
\end{itemize}
\vspace{-0.4cm}
\end{definition}

The intuition behind the first condition is simply that by removing an element of $T$ which is not used in the algorithm's output, we do not change the output.
This is effectively a condition on how we perform tiebreaking. The second condition helps to bound $f(\texttt{ALG}(T) \cup O)$ where $O$ is a global optima.
Our theoretical analysis heavily relies on the following theorem which is proved in Appendix B.

\begin{theorem}
Let $k \geq 10$.
Algorithm~\ref{alg:AlgGMM} is a $5$-nice algorithm for $f(.) = D(.) + g(.)$.
 Also, if $\texttt{ALG}$ is Algorithm~\ref{alg:AlgGMM}, $T\subseteq U$, and $t\in T\setminus \texttt{ALG}(T)$,  then $\frac{4.5}{k-1} f(\texttt{ALG}(T)) \geq \sum_{x \in \texttt{ALG}(T)} d(t, x)$.
\label{bnice}
\end{theorem}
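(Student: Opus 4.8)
The plan is to analyze the greedy trajectory of Algorithm~\ref{alg:AlgGMM} directly. Write $\texttt{ALG}(T) = S = \{s_1,\dots,s_k\}$ indexed in the order the elements are added, let $S_i = \{s_1,\dots,s_i\}$, and set $\Delta_i = f(S_i) - f(S_{i-1})$ and $\Delta_i^D = D(S_i) - D(S_{i-1}) = \sum_{j<i} d(s_j,s_i)$, so that $f(S) = \sum_{i=1}^k \Delta_i$, $\Delta_i^D \ge 0$, $\Delta_1^D = 0$, and $\Delta_i = \Delta_i^D + \bigl(g(S_i) - g(S_{i-1})\bigr) \ge \Delta_i^D$ by monotonicity of $g$. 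The first property of a $5$-nice algorithm is purely about tiebreaking: the consistent tiebreaking rule makes the $\argmax$ at each step a deterministic function of the current candidate set, and since $t \in T\setminus\texttt{ALG}(T)$ is the $\argmax$ at no step, an easy induction on $i$ shows that deleting $t$ from $T$ leaves all $k$ choices unchanged, hence $\texttt{ALG}(T) = \texttt{ALG}(T\setminus\{t\})$.

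For the two quantitative claims I would first record three consequences of greediness. (i) Since at step $i-1$ the element $s_{i-1}$ was preferred to the still-available $s_i$, and $g$ is submodular, the difference of the relevant $g$-marginals is $\le 0$, giving $\Delta_i - \Delta_{i-1} \le d(s_{i-1},s_i)$ for every $i\ge 2$ (the case $i=2$ also uses that $s_1$ maximizes $u\mapsto g(\{u\})$). (ii) The triangle inequality, summed over the $l-1$ choices of a midpoint, gives $(l-1)\,d(s_{l-1},s_l) \le \sum_{j<l} d(s_{l-1},s_j) + \sum_{j<l} d(s_j,s_l) = \Delta_{l-1}^D + \Delta_l^D$. (iii) Since at every step $i\ge 2$ the element $s_i$ was preferred to the available $t$, $\Delta_i \ge \bigl(g(S_{i-1}\cup\{t\}) - g(S_{i-1})\bigr) + \sum_{j<i} d(s_j,t)$; at $i=k$, dropping the nonnegative $g$-term gives $\sum_{j<k} d(s_j,t) \le \Delta_k$, and combining with the triangle inequality $(k-1)\,d(s_k,t) \le \sum_{j<k} d(s_k,s_j) + \sum_{j<k} d(s_j,t) = \Delta_k^D + \sum_{j<k}d(s_j,t) \le 2\Delta_k$ yields $d(s_k,t) \le \frac{2}{k-1}\Delta_k$.

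The crux is to bound $\Delta_k$. Telescoping (i) gives $\Delta_k - \Delta_i \le \sum_{l=i+1}^k d(s_{l-1},s_l)$; summing over $i=1,\dots,k$, exchanging the order of summation, and substituting (ii) makes each denominator $l-1$ cancel against the number $l-1$ of indices $i<l$, so that $k\Delta_k - f(S) \le \sum_{l=2}^k(\Delta_{l-1}^D + \Delta_l^D) \le 2D(S)$ (using $\Delta_1^D = 0$ and $\sum_{l=2}^k \Delta_l^D = D(S)$); thus $\Delta_k \le \frac{1}{k}\bigl(f(S) + 2D(S)\bigr) \le \frac{3}{k}f(S)$ since $D(S)\le f(S)$. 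Both claims then follow. For the diversity bound, $\sum_{x\in S} d(t,x) = \sum_{j<k} d(s_j,t) + d(s_k,t) \le \Delta_k + \frac{2}{k-1}\Delta_k = \frac{k+1}{k-1}\Delta_k \le \frac{3(k+1)}{k(k-1)}f(S) \le \frac{4.5}{k-1}f(S)$. For $5$-niceness, set $\gamma = g(S\cup\{t\}) - g(S)\ge 0$; submodularity gives $\gamma \le g(S_{k-1}\cup\{t\}) - g(S_{k-1})$, so (iii) at $i=k$ improves to $\sum_{j<k} d(s_j,t)\le \Delta_k - \gamma$, whence $f(S\cup\{t\}) - f(S) = \gamma + \sum_{j<k} d(s_j,t) + d(s_k,t) \le \Delta_k + \frac{2}{k-1}\Delta_k = \frac{k+1}{k-1}\Delta_k \le \frac{3(k+1)}{k(k-1)}f(S) \le \frac{5}{k}f(S)$, the last inequality holding once $k$ is not too small (already for $k\ge 4$, hence certainly for $k\ge 10$). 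The only genuinely delicate point is the double-sum cancellation that produces $\Delta_k \le \frac{1}{k}\bigl(f(S)+2D(S)\bigr)$; everything else is the triangle inequality, monotonicity/submodularity of $g$, and greedy optimality at a single step, together with checking that the constants $5$ and $4.5$ absorb the small-$k$ slack under the hypothesis $k\ge 10$.
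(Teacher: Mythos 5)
Your proof is correct, but it takes a genuinely different route from the paper's. The paper aggregates all $k$ greedy ``preferred over $t$'' inequalities, which produces the weighted sum $\sum_{i=1}^{k-1}(k-i)\,d(t,x_i)$ with non-uniform coefficients, and then spends most of its effort (Claim~1, the ``weight shifting'') rebalancing those coefficients to a uniform $\lceil k/2\rceil-1$ by adding a carefully chosen batch of distinct pairwise distances, paying an extra $D(S)$ on the left-hand side; this yields $f(\texttt{ALG}(T)\cup\{t\})-f(\texttt{ALG}(T))\le \frac{2}{\lceil k/2\rceil-1}f(\texttt{ALG}(T))$. You instead use only the \emph{last} greedy inequality against $t$, which already gives $\Delta(t,S_{k-1})+\sum_{j<k}d(s_j,t)\le\Delta_k$, patch in the missing term $d(s_k,t)$ with a single averaged triangle inequality, and then separately certify that the last marginal gain satisfies $\Delta_k\le\frac{1}{k}\bigl(f(S)+2D(S)\bigr)\le\frac{3}{k}f(S)$ via the near-monotonicity $\Delta_i\le\Delta_{i-1}+d(s_{i-1},s_i)$ and a telescoping/averaging step. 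This is the natural generalization of the argument for purely submodular objectives (where marginal gains are genuinely decreasing, so the last one is at most the average): here the gains are decreasing only up to the increments $d(s_{i-1},s_i)$, whose total contribution is absorbed into $2D(S)$. Your route is more modular, gives strictly better constants ($\frac{3(k+1)}{k(k-1)}\approx\frac{3}{k}$ versus the paper's $\frac{2}{\lceil k/2\rceil-1}\approx\frac{4}{k}$), and the verification that $5$ and $4.5$ suffice already works for $k\ge 4$ and $k\ge 2$ respectively, rather than needing $k\ge 10$. Two trivial points to tidy up: $f(\emptyset)=g(\emptyset)$ need not be $0$ under the paper's hypotheses, so $\sum_i\Delta_i=f(S)-g(\emptyset)\le f(S)$ rather than equality (the inequality is in the direction you need, so nothing breaks); and in step (ii) the $j=l-1$ summand of the triangle inequality is degenerate but harmless since $d(s_{l-1},s_{l-1})=0$.
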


Our main result is that Algorithm~\ref{alg:DistAlgGMM} is a constant factor approximation algorithm.

\begin{figure}[t]
\centering
\begin{subfigure}{0.9\columnwidth}
\includegraphics[width=\columnwidth]{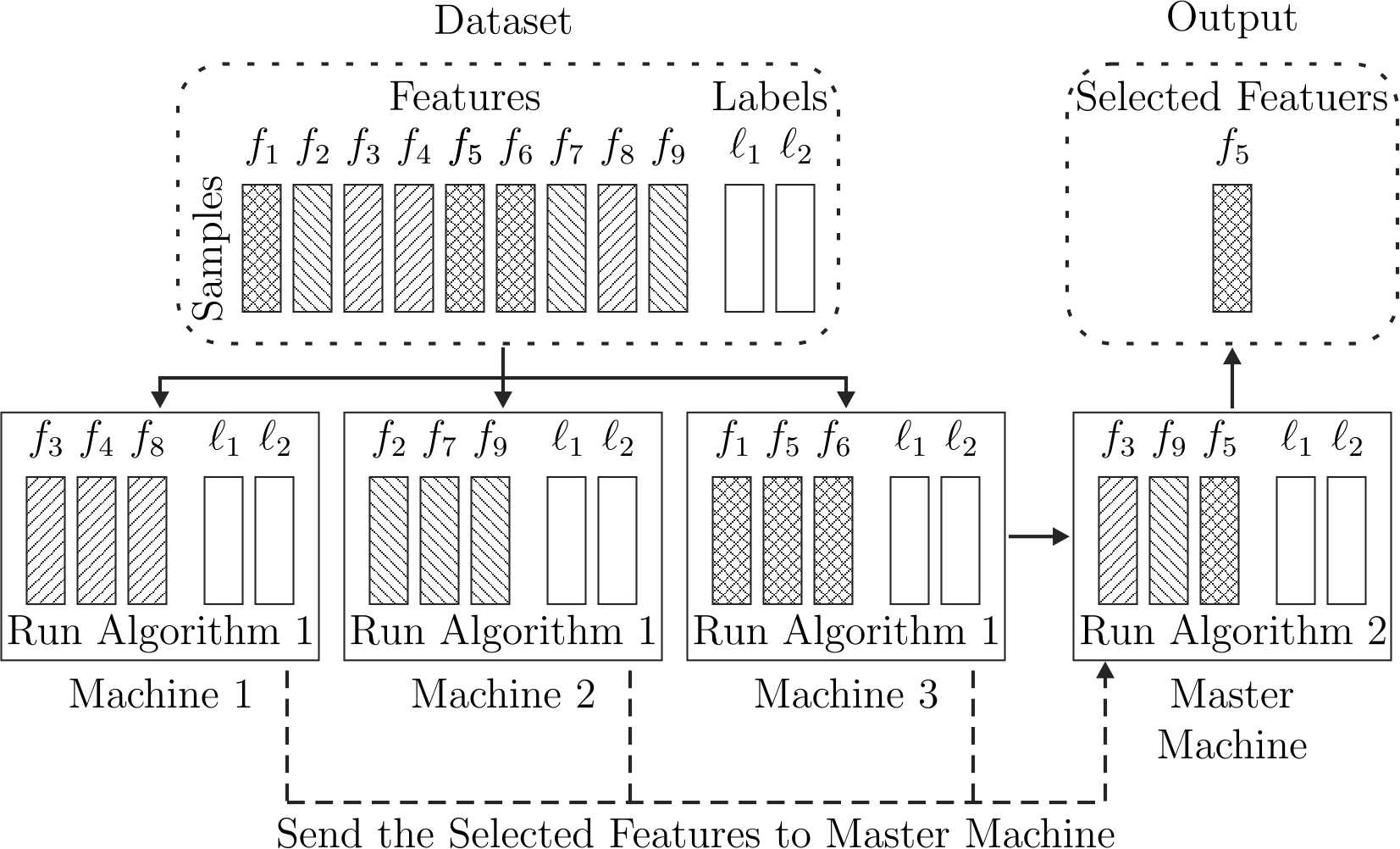}
\subcaption{Distributed setting}
\end{subfigure}
\hfill
\begin{subfigure}{0.9\columnwidth}
\includegraphics[width=\columnwidth]{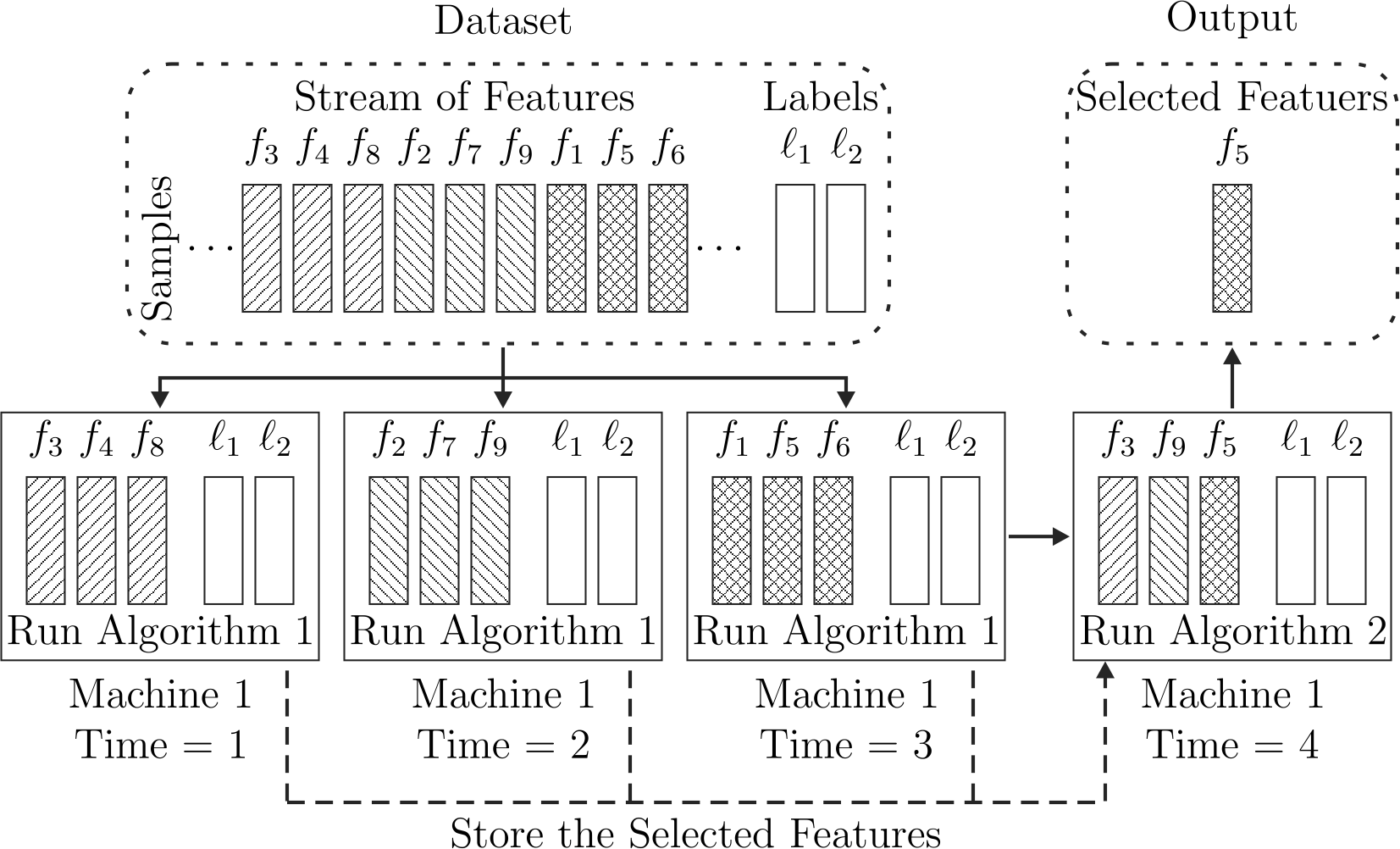}
\subcaption{Streaming setting}
\end{subfigure}
\caption{Algorithm~\ref{alg:DistAlgGMM} operating in big data settings.}
\label{Fig:systems}
\vspace{-0.3cm}
\end{figure}

\begin{theorem}
\label{thm:main}
Let $k \geq 10$.
Algorithm~\ref{alg:DistAlgGMM} gives a $\frac{1}{31}$-approximate solution in expectation for maximizing $f(S)$ subject to $|S| = k$.
\end{theorem}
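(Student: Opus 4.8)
The plan is to combine the $\beta$-nice machinery of \citet{MirrokniZ15} (which shows that a $\beta$-nice algorithm is an $\Omega(1)$-approximate randomized composable core-set for monotone submodular functions) with the extra structure of the diversity term supplied by Theorem~\ref{bnice}. First I would set up notation: let $O \subseteq T^1 \cup \cdots \cup T^m$ with $|O| = k$ be an optimal solution to \eqref{eqn:defn} on the whole ground set, let $S_i = \texttt{Greedy}(T_i)$, let $C = S_1 \cup \cdots \cup S_m$ be the union of core-sets, and let $S = \texttt{AltGreedy}(C)$ be the final output. The goal is a bound of the form $\mathbb{E}[f(S)] \geq \frac{1}{31} f(O)$, and I would split $f = D + g$ and handle the two contributions of $O$ separately through the core-set step, then pay an additional constant factor for the \texttt{AltGreedy} step.

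The core-set step is where the randomness and Theorem~\ref{bnice} come in. Fix a machine index and a random element; the first bullet of $\beta$-niceness says removing an unused element does not change \texttt{Greedy}'s output, and the second bullet bounds the marginal of such an element by $\beta f(\texttt{Greedy}(T))/k$ with $\beta = 5$. Following the \citet{MirrokniZ15} argument, for each element $o \in O$, conditioning on $o$ landing in some $T_i$ while the rest of a fresh random partition is fixed, either $o \in S_i$ (so $o$ survives into $C$) or $o \notin S_i$ and then its marginal gain over $S_i$ — and hence, by monotonicity/submodularity of $g$ and the triangle-inequality-flavored handling of $d$, its "contribution" — is controlled by $\frac{5}{k} f(S_i) \le \frac{5}{k} f(C)$. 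Summing over the $k$ elements of $O$ and taking expectations over the partition yields an inequality of the shape $f(O) \le \mathbb{E}[\text{(value of the best size-}k\text{ subset of } C)] + 5\, \mathbb{E}[f(C)]$, i.e. $\texttt{Greedy}$ is an $\alpha$-approximate randomized composable core-set with $\alpha$ a constant (here around $\tfrac{1}{12}$ after the diversity bookkeeping). The second clause of Theorem~\ref{bnice}, $\frac{4.5}{k-1} f(\texttt{ALG}(T)) \ge \sum_{x \in \texttt{ALG}(T)} d(t,x)$, is exactly what is needed to bound $D(O)$: for each pair in $O$ that is "lost" in the core-set, its distance is absorbed into $f(C)$ at rate $O(1/k)$, and since there are $\binom{k}{2} = O(k^2)$ pairs this again contributes only a constant factor.

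Then I would analyze the \texttt{AltGreedy} step. By \citet{BorodinJLY17}, \texttt{AltGreedy} run on any ground set $V$ returns a set achieving at least $\frac12 \max_{S \subseteq V, |S|=k} f(S)$. Applying this with $V = C$ gives $f(S) \ge \frac12 \max_{S' \subseteq C, |S'|=k} f(S')$, and combining with the core-set inequality from the previous paragraph — taking expectations and using that $f(C)$ itself is at most the value of its best size-$k$ subset times a factor accounting for $|C| = mk$ (handled via the standard submodular-plus-diversity "a size-$mk$ set is not too much better than its best size-$k$ piece" estimate, or more carefully by feeding $O \cap C$-type comparisons through $\beta$-niceness) — collapses everything into $\mathbb{E}[f(S)] \ge \frac{1}{31} f(O)$. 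The constant $31$ presumably comes out as (roughly) $2 \times$ the reciprocal of the core-set constant after the diversity terms are paid for, with the $k \ge 10$ hypothesis used to make the $\frac{5}{k}$ and $\frac{4.5}{k-1}$ slacks small enough that the stated constant holds.

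The main obstacle will be the diversity term in the core-set analysis: unlike a pure submodular function, $f$ is neither submodular nor subadditive, so the clean \citet{MirrokniZ15} telescoping does not directly apply, and one must use the second part of Theorem~\ref{bnice} to argue that any $o \in O$ not retained in its core-set $S_i$ has small total distance to $S_i$, while simultaneously controlling the distances \emph{among} the retained elements and the cross terms between $O \setminus C$ and $O \cap C$. Getting these pairwise distances to telescope correctly against $f(C)$ — rather than double-counting them — is the delicate part, and I expect the bulk of the real work (and the reason the final constant is $\tfrac{1}{31}$ rather than something cleaner) to live there.
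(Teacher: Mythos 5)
Your overall architecture is the same as the paper's: decompose $f(O)=D(O)+g(O)$, bound each part against $f(\texttt{OPT}(\cup_i S^i))$ using the $5$-niceness and the distance bound of Theorem~\ref{bnice} (a \citet{MirrokniZ15}-style randomized-partition argument for $g(O)$, a triangle-inequality argument for $D(O)$), and then pay a factor of $2$ for the \texttt{AltGreedy} step via \citet{BorodinJLY17}; this is exactly how the paper assembles Lemmas~\ref{boundD} and~\ref{boundg} into $f(O)\leq 15.5\,\mathbb{E}[f(\texttt{OPT}(\cup_i S^i))]\leq 31\,\mathbb{E}[f(S)]$. However, two steps in your sketch do not go through as written. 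You bound lost marginals by $\frac{5}{k}f(S_i)\leq\frac{5}{k}f(C)$ and then propose to recover a comparison with the best size-$k$ subset of $C$ via ``a size-$mk$ set is not too much better than its best size-$k$ piece.'' That estimate is false for the diversity term: $D(C)$ sums over $\binom{mk}{2}$ pairs while any size-$k$ subset sums over only $\binom{k}{2}$, so $f(C)/f(\texttt{OPT}(C))$ can be of order $m^2$ and your bound would no longer be a constant factor. The correct (and simpler) move, which the paper makes, is to stop at $f(S_i)$: each $S_i$ is itself a feasible size-$k$ subset of $C$, so $f(S_i)\leq f(\texttt{OPT}(\cup_j S^j))$ directly, with no dependence on $m$.

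The second gap is the diversity bookkeeping you explicitly defer; it is not a routine telescoping but the place where the paper does real work. The second clause of Theorem~\ref{bnice} controls $\sum_{x\in S^i}d(t,x)$ for a \emph{single} missed element $t$, so it says nothing directly about $d(t,t')$ for two missed elements, and naively routing every pair of $O$ through the core-set double-counts distances. The paper resolves this by averaging over all maximal bipartite matchings between the missed elements $Q^i$ and $S^i$ to extract one injective matching of weight at most $\frac{4.5}{2}|Q^i|r$ (Lemma~\ref{matching}), assembling these into a map $e:O\to\cup_i S^i$ in which every image point has at most two preimages, and then applying $d(u,v)\leq d(u,e(u))+d(e(u),e(v))+d(e(v),v)$; the multiplicity-$2$ accounting costs a factor $4$ on $D(\mathrm{range}(e))$ and yields $D(O)\leq 8.5\, f(\texttt{OPT}(\cup_i S^i))$. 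Without the injectivity of the matching and that multiplicity count, the ``absorbed at rate $O(1/k)$ over $\binom{k}{2}$ pairs'' heuristic does not produce a constant. So: right skeleton and right ingredients, but the two lemmas that carry the proof are not yet established by your argument.
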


We note that for $k < 10$, the constant degrades so we focus on the large $k$ regime.
The proof of this theorem follows from two key lemmas which bound the diversity and submodular portions
of an optimal solution. We use $O$ to denote a global optimum. To state the lemmas, we need the following notations. Let $\texttt{OPT}(T) = \argmax_{R\subseteq T} f(R)$ subject to $|R| = k$. Let $U$ be the set of all elements (for example, the set of all features for the feature selection problem) and $(T^1, \ldots, T^m)$ be a random partitioning of the elements of $U$.

\begin{lemma}
Let $\texttt{ALG}$ be Algorithm~\ref{alg:AlgGMM} and $S^i=\texttt{ALG}(T^i)$. Then $D(O) \leq 8.5f(\texttt{OPT}(\cup_{i=1}^m S^i))$.
\label{boundD}
\end{lemma}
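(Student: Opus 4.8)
The plan is to bound the diversity $D(O)$ of an arbitrary global optimum $O$ in terms of $f(\texttt{OPT}(\cup_{i=1}^m S^i))$, where $S^i = \texttt{ALG}(T^i)$ is the output of the Greedy algorithm on the $i$-th random partition. The natural strategy is to split the elements of $O$ according to which partition they land in. For a fixed partition $T^i$, write $O^i = O \cap T^i$. Some elements of $O^i$ are picked up by $\texttt{ALG}$ (i.e., lie in $S^i$) and some are not; call the latter set $O^i \setminus S^i$. The key leverage is the second conclusion of Theorem~\ref{bnice}: for any $t \in T^i \setminus \texttt{ALG}(T^i)$ we have $\sum_{x \in S^i} d(t,x) \leq \frac{4.5}{k-1} f(S^i)$. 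Combined with the first $\beta$-nice property, this lets us say that each ``missed'' optimal element is close to the greedy output $S^i$ of its own partition. Since $S^i \subseteq \cup_j S^j$, and $\texttt{OPT}(\cup_j S^j)$ is the best $k$-subset of that union, I expect $f(S^i) \leq f(\texttt{OPT}(\cup_j S^j))$ (monotonicity-type reasoning: $|S^i| = k$ so $S^i$ is a feasible competitor), which turns all the local bounds into bounds against the single quantity $f(\texttt{OPT}(\cup_j S^j))$.

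The main decomposition I would use is the triangle-inequality-style split of $D(O)$. Writing $D(O) = \sum_{\{u,v\} \subseteq O} d(u,v)$, I partition the pairs: pairs with both endpoints in the same $O^i$, and pairs with endpoints in different partitions. For the cross-partition pairs $\{u,v\}$ with $u \in O^i$, $v \in O^j$, $i \neq j$, I would route through $S^i$ (or through the greedy output of one side): $d(u,v) \leq d(u,x) + d(x,v)$ for any $x \in S^i$, and then average over $x \in S^i$ to get $d(u,v) \leq \frac{1}{k}\sum_{x\in S^i} d(u,x) + \frac{1}{k}\sum_{x\in S^i} d(x,v)$. The terms of the form $\frac{1}{k}\sum_{x \in S^i} d(u,x)$ for $u \notin S^i$ are then controlled by Theorem~\ref{bnice} as $\leq \frac{4.5}{k(k-1)} f(S^i)$, and for $u \in S^i$ they are pieces of $D(S^i) \le f(S^i)$. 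Summing these contributions over all pairs — there are $O(k^2)$ pairs and each contributes $O(1/k^2) f(\texttt{OPT}(\cup S^j))$ — produces a constant times $f(\texttt{OPT}(\cup_j S^j))$. The randomness of the partition enters because we need, for each missed optimal element $t \in O^i$, that $t \in T^i \setminus \texttt{ALG}(T^i)$ so that the $\beta$-nice inequality applies; this is automatic since $t \in O^i \subseteq T^i$ and $t \notin S^i$ by assumption. Actually, I suspect one does not even need randomness for this particular lemma — it may hold deterministically — but the randomized partition framework is what the theorem statement lives in.

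Concretely, I would carry out the steps in this order: (i) fix the partition and fix $O$; (ii) for each $i$, separate $O^i = (O^i \cap S^i) \sqcup (O^i \setminus S^i)$; (iii) bound the within-$O^i$ diversity $D(O^i)$ — for the part inside $S^i$ it is at most $D(S^i) \le f(S^i)$, and for pairs involving a missed element use one application of the triangle inequality through $S^i$ plus Theorem~\ref{bnice}; (iv) bound the cross-partition diversity similarly, routing each such pair through the greedy set of one of the two partitions; (v) collect all terms, note every $f(S^i) \le f(\texttt{OPT}(\cup_j S^j))$, and count the number of pairs to extract the constant; (vi) verify the constant is at most $8.5$, using $k \geq 10$ to absorb the $\frac{k}{k-1}$ slack. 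The arithmetic bookkeeping in step (v) is where the precise constant $8.5$ comes from, and tuning the routing (which partition's greedy set to use for a cross pair, whether to also route through $\texttt{ALG}(T^j)$) so the constant lands at $8.5$ rather than something larger is the delicate part.

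The hard part will be step (iv)–(v): getting the cross-partition pairs bounded without blowing up the constant. A naive double triangle inequality through both $S^i$ and $S^j$ would roughly double the bound; the trick is probably to route through only one side, or to exploit that $D(\cup S^j)$-type quantities telescope. I also expect some care is needed because $\texttt{OPT}(\cup_j S^j)$ need not contain any particular $S^i$, so one genuinely relies on $f(S^i) \le f(\texttt{OPT}(\cup_j S^j))$ rather than any containment; this inequality itself deserves an explicit one-line justification (both are $k$-subsets of $\cup_j S^j$ and $\texttt{OPT}$ maximizes $f$ over such subsets). Provided those pieces fit together, the bound $D(O) \leq 8.5\, f(\texttt{OPT}(\cup_{i=1}^m S^i))$ follows by summation.
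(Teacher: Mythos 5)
Your opening moves are sound: Theorem~\ref{bnice} controls $\sum_{x\in S^i} d(t,x)$ for each missed optimal element $t\in O\cap T^i\setminus S^i$, the inequality $f(S^i)\le f(\texttt{OPT}(\cup_j S^j))$ holds because $S^i$ is a feasible $k$-subset of the union, and you are right that this lemma needs no randomness. But the cross-partition step as you describe it fails. When you route a pair $\{u,v\}$ with $u\in O^i$, $v\in O^j$, $i\ne j$, through $S^i$ only, you are left with the term $\frac{1}{k}\sum_{x\in S^i} d(x,v)$, and Theorem~\ref{bnice} does \emph{not} apply to it: the theorem requires $t\in T^i\setminus\texttt{ALG}(T^i)$, i.e.\ the external point must lie in the same block on which the greedy algorithm was run, whereas here $v\in T^j$. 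There is no bound on the distance from $S^i$ to a point that machine $i$ never saw, so single-sided routing cannot close. The double routing you dismiss as ``roughly doubling the bound'' is unavoidable, and the real difficulty it creates is the middle term: after replacing $u$ by a representative in $S^i$ and $v$ by one in $S^j$, you must bound the sum over all ${k \choose 2}$ pairs of representative-to-representative distances. If, as you propose, you average over all of $S^i$ and $S^j$, this middle term becomes a diversity-type quantity spread over $\cup_i S^i$ (up to $mk$ elements), which is not comparable to $f(\texttt{OPT}(\cup_i S^i))$ (a maximum over $k$-subsets) without losing a factor of order $m^2$.

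The paper's proof supplies exactly the missing device. Using the second conclusion of Theorem~\ref{bnice} and an averaging argument over all maximal matchings, it first shows (Lemma~\ref{matching}) that each $Q^i=O^i\setminus S^i$ admits a bipartite matching into $S^i$ of total weight at most $\frac{4.5}{2}|Q^i|r$, where $r=\max_i f(S^i)/{k \choose 2}$. Gluing these matchings yields a map $e:O\to\cup_i S^i$ (the identity on $O\cap\cup_i S^i$) with the two properties your sketch lacks: its range is a subset of $\cup_i S^i$ of size at most $k$, hence $D(\mathrm{range}(e))\le f(\texttt{OPT}(\cup_i S^i))$ by feasibility, and each point of the range has at most two preimages, so $\sum_{\{u,v\}\subseteq O} d(e(u),e(v))\le 4D(\mathrm{range}(e))$. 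Summing $d(u,v)\le d(u,e(u))+d(e(u),e(v))+d(e(v),v)$ over all pairs then gives $(k-1)\cdot\frac{4.5}{2}kr+4f(\texttt{OPT}(\cup_i S^i))\le 8.5\,f(\texttt{OPT}(\cup_i S^i))$. The near-injectivity of $e$ (a matching rather than an average) is what keeps the set of representatives small and the multiplicities bounded, and it is the idea you would need to add to make your plan go through.
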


\begin{lemma}
\label{boundg}
Let $\texttt{ALG}$ be Algorithm~\ref{alg:AlgGMM} and $S^i=\texttt{ALG}(T^i)$. Then $g(O) \leq  6f(\texttt{OPT}(\cup_{i=1}^m S^i)) + \mathbb{E}[f(\texttt{OPT}(\cup_{i=1}^m S^i))]$.
\end{lemma}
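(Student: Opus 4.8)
The plan is to bound the submodular part $g(O)$ by splitting the optimum according to whether each of its elements survives the per-machine greedy phase. Fix a realization of the random partition, write $S = \cup_{i=1}^m S^i$ and $O^* = \texttt{OPT}(S)$, and partition $O = O_{kept} \cup O_{lost}$, where $O_{kept} = O \cap S$ are the optimum elements some machine's greedy selected and $O_{lost} = O \setminus S$ are the rest. For $o \in O_{lost}$ let $i(o)$ be the machine $o$ was assigned to; since $o \in T^{i(o)} \setminus S^{i(o)}$, both properties of $5$-niceness from Theorem~\ref{bnice} apply to $o$ on that machine. Submodularity of $g$ gives the pointwise inequality
\[
g(O) \le g(O_{kept}) + \sum_{o \in O_{lost}} \bigl(g(O_{kept} \cup \{o\}) - g(O_{kept})\bigr).
\]

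First I would dispatch the $g(O_{kept})$ term. Because $O_{kept} \subseteq S$ and $|O_{kept}| \le k$, I can extend it to a $k$-subset of $S$ (using monotonicity of $f$ and $|S| \ge k$); optimality of $\texttt{OPT}(S)$ together with $g \le f$ (as $D \ge 0$) then yields the sure bound $g(O_{kept}) \le f(O^*)$, accounting for one $f(O^*)$ term. The heart is the second sum, where niceness is the natural tool: for $o \in O_{lost}$ we have $g(S^{i(o)} \cup \{o\}) - g(S^{i(o)}) \le f(S^{i(o)} \cup \{o\}) - f(S^{i(o)}) \le 5\,f(S^{i(o)})/k \le 5\,f(O^*)/k$, the last step using that each $S^{i(o)}$ is a size-$k$ subset of $S$ so $f(S^{i(o)}) \le f(O^*)$. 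If the marginals in the decomposition were taken against $S^{i(o)}$ rather than $O_{kept}$, summing over the at most $k$ lost elements would immediately give $5\,f(O^*)$, and with the previous paragraph we would land exactly at $6\,f(O^*)$.

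The main obstacle is precisely that these conditioning sets do not match: niceness controls the marginal of $o$ against its own machine's output $S^{i(o)}$, whereas the telescoping of $g(O)$ conditions on $O_{kept}$, and the two sets are incomparable, so submodularity alone cannot pass between them. Conditioning the telescoping on the full union $S \supseteq S^{i(o)}$ does fix the marginals, since $g(S \cup \{o\}) - g(S) \le g(S^{i(o)} \cup \{o\}) - g(S^{i(o)})$, but then the base term becomes $g(S)$, the value of a set of up to $mk$ elements, which is not comparable to $f(O^*)$. For modular $g$ the two marginals coincide and the $6\,f(O^*)$ bound is immediate and sure; for general submodular $g$ the nonnegative gap
\[
g(O_{kept} \cup \{o\}) - g(O_{kept}) - \bigl(g(O_{kept} \cup S^{i(o)} \cup \{o\}) - g(O_{kept} \cup S^{i(o)})\bigr)
\]
is exactly the curvature price, and it is this quantity, summed over $O_{lost}$, that I expect to supply the extra additive $\mathbb{E}[f(O^*)]$. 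Indeed, writing $g(O_{kept}\cup\{o\})-g(O_{kept})$ as its value conditioned on $O_{kept}\cup S^{i(o)}$ (bounded by $5\,f(O^*)/k$ as above) plus this gap gives, after summing, the sure bound $\sum_{o\in O_{lost}}(g(O_{kept}\cup\{o\})-g(O_{kept})) \le 5\,f(O^*) + \sum_{o\in O_{lost}}\mathrm{gap}_o$.

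To control the curvature term I would exploit the random partition through the consistency (first) property of niceness: conditioning on the assignment of $U \setminus \{o\}$, the element $o$ lands on a uniformly random machine, and on any machine where it is rejected the output — and hence $O_{kept}$ — is unchanged from the instance with $o$ removed. This decouples $o$ from $O_{kept}$ enough to bound $\mathbb{E}[\sum_{o \in O_{lost}} \mathrm{gap}_o] \le \mathbb{E}[f(O^*)]$, where the classical fact that a random $1/m$-fraction of $O$ retains a proportional share of $g$ is what keeps $\texttt{OPT}(S)$'s across-machine recovery free of any $m$-dependent loss. Combining the pieces then gives $g(O) \le 6\,f(O^*) + \sum_{o\in O_{lost}}\mathrm{gap}_o$ surely, with the correction bounded by $\mathbb{E}[f(O^*)]$ in expectation, which is exactly the stated form $g(O) \le 6\,f(O^*) + \mathbb{E}[f(O^*)]$. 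I expect the genuinely delicate step to be making the curvature-to-$\mathbb{E}[f(O^*)]$ estimate rigorous while holding the constant at exactly one and avoiding any dependence on $m$.
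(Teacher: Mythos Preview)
Your broad plan---split $O$ into kept and lost parts, use $5$-niceness to extract $5f(O^*)$ from the lost elements, bound $g(O_{kept})\le f(O^*)$, and isolate a nonnegative curvature correction---matches the paper and yields the sure $6f(O^*)$ term correctly. The genuine gap is in the curvature step, and it is not merely a matter of making your sketch rigorous: the specific decomposition you chose cannot close at the stated constant.

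The problem is that you take \emph{every} lost-element marginal against the random set $O_{kept}$, so your curvature term is $\mathrm{gap}_o=\Delta(o,O_{kept})-\Delta(o,O_{kept}\cup S^{i(o)})$. After the $1/m$ averaging you describe, you would need, for each fixed machine $i$, an inequality of the form
\[
\sum_{o\in O}\bigl(\Delta(o,O_{kept})-\Delta(o,O_{kept}\cup S^{i})\bigr)\ \le\ g(S^{i}).
\]
But this sum does not telescope---all marginals sit at the same base $O_{kept}$---and the inequality is false for general monotone submodular $g$. Take a coverage function with ground set of size $n$, $O_{kept}=\emptyset$, every $o_j\in O$ covering the whole ground set, and $S^{i}$ also covering everything: each summand equals $n-0=n$, the total is $kn$, yet $g(S^{i})=n$. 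So your curvature sum can be a factor $k$ too large, and the randomness over machines does not fix this.

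The ingredient you are missing is to replace the random base $O_{kept}$ by a \emph{deterministic prefix}. Fix once an ordering $o_1,\dots,o_k$ of $O$, set $O_x=\{o_1,\dots,o_{j-1}\}$ for $x=o_j$, and upper-bound each lost marginal by $\Delta(x,O_x)$ (submodularity, since $O_x\subseteq O_x\cup O_{kept}$). The curvature term becomes $\Delta(x,O_x)-\Delta(x,O_x\cup S^{i})$, and now the per-machine sum over all of $O$ telescopes exactly:
\[
\sum_{x\in O}\bigl(\Delta(x,O_x)-\Delta(x,O_x\cup S^{i})\bigr)=g(O)-g(O\cup S^{i})+g(S^{i})\ \le\ g(S^{i}).
\]
Because $O_x$ is deterministic, your decoupling idea (condition on the assignment of $U\setminus\{x\}$, use the first niceness property, average over the machine receiving $x$) now goes through cleanly, giving
\[
\mathbb{E}[\text{curvature}]\ \le\ \frac{1}{m}\,\mathbb{E}\Bigl[\sum_{i=1}^{m} g(S^{i})\Bigr]\ \le\ \mathbb{E}\bigl[f(\texttt{OPT}(\cup_i S^{i}))\bigr],
\]
with constant exactly one and no dependence on $m$. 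This is precisely how the paper proceeds (its Lemmas~\ref{boundG} and~\ref{bounddDif}).
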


We use Theorem~\ref{bnice} and techniques from a number of papers \citep{ZadehGMZ17,IndykMMM14,MirrokniZ15,AghamolaeiFZ15} to prove these two key lemmas in Appendix B. Even in cases where some parts of proofs are similar to previous work we include a complete proof for the sake of completeness. We should note that our analysis is not a straightforward combination of the ideas in the mentioned papers. Using Lemma~\ref{boundD} and~\ref{boundg}, we can easily prove Theorem~\ref{thm:main}.

\begin{proofof}{Theorem~\ref{thm:main}.}
%Using Lemmas~\ref{boundD}, \ref{boundG}, and \ref{bounddDif}, we have
%\[
%f(O) = D(O) + g(O) \leq 8.5f(\texttt{OPT}(\cup_{i=1}^m S^i)) + 6f(\texttt{OPT}(\cup_{i=1}^m S^i)) + %\mathbb{E}[f(\texttt{OPT}(\cup_{i=1}^m S^i))]
%\]
%Therefore,
Lemma~\ref{boundD} and \ref{boundg} immediately yield
$
f(O) \leq 15.5\mathbb{E}[f(\texttt{OPT}(\cup_{i=1}^m S^i))]
$.
Based on~\citet{BorodinJLY17}, we know that Algorithm~\ref{alg:AlgAGMM} is a half approximation algorithm for maximizing $f$. Therefore, if $\texttt{ALG'}$ is Algorithm~\ref{alg:AlgAGMM} then $f(\texttt{OPT}(\cup_{i=1}^m S^i)) \leq 2f(\texttt{ALG'}(\cup_{i=1}^m S^i))$. Hence
$
f(O) \leq 31\mathbb{E}[f(\texttt{ALG'}(\cup_{i=1}^m S^i))]
$
which is exactly the statement of the theorem.
\end{proofof}

\begin{table}[t]
\centering
\begin{scriptsize}
    \caption{Specifications of the datasets.}
    \vspace{-0.2cm}
    \label{table:dataset}
    %\resizebox{0.55\textwidth}{!}{%
    \begin{tabular}{cccc}
        \toprule Dataset Name & \# Features & \# Instances & \# Labels \\
        \midrule Corel5k & 499 & 5000 & 374 \\
        \midrule Eurlex-ev & 5000 & 19,348 & 3993 \\
        \midrule Synthesized & 800 & 256 & 8 \\
        \bottomrule
        \end{tabular}
\end{scriptsize}
\vspace{-0.3cm}
\end{table}

\section{Empirical Results}
In this section, we investigate the performance of our method in practice. In the first experiment, we compare our distributed method with centralized multi-label feature selection methods in the literature on a classification task. We show that our method's performance is comparable to, or in some cases is even better than previous centralized methods. Next, we compare our distributed and centralized methods on two large datasets. We show that the distributed algorithm achieves almost the same objective function value and it is much faster. This implies that the distributed algorithm achieves a better approximation in practice compared to the theoretical guarantee.

\begin{scriptsize}
\begin{table*}[h]
    \centering
    \caption{Comparison of the distributed and the centralized algorithms. ``h'' and ``m'' means hour and minute.}   
    \vspace{-0.3cm}
    \label{table:centVsDist}
    \resizebox{\textwidth}{!}{
    \begin{tabular}{cccccccccccc} 
        \toprule
        {\specialcell{Dataset\\Name}} & {Reference} & {\# Features} & {\# Instances} & {\# Labels} & {\specialcell{\# Selected\\Features}} & {\# Machines} & {\specialcell{Distributed\\Algorithm\\Objective\\Value}} & {\specialcell{Centralized\\Algorithm\\Objective\\Value}} & {\specialcell{Distributed\\Algorithm\\Runtime}} & {\specialcell{Centralized\\Algorithm\\Runtime}} & {Speed-up}\\ 
        \midrule
        \multirow{4}{*}{RCV1V2} & \multirow{4}{*}{\citep{LewisYRL04}} & \multirow{4}{*}{47,236} & \multirow{4}{*}{6000} & \multirow{4}{*}{101} & 10 & 69 & 22.7 & 22.6 & 2.8m & 1h 33m & 33.2 \\ & & & & & 50 & 31 & 618.7 & 616.4 & 10.8m & 2h 30.0m & 15.1 \\ & & & & & 100 & 22 & 2468.2 & 2490.7 & 20.3m & 3h 39m & 10.8 \\ & & & & & 200 & 16 & 9338.7 & 10,016.0 & 47.0m & 6h 16.8m & 8.0 \\
        \midrule
        \multirow{4}{*}{TMC2007} & \multirow{4}{*}{\citep{srivastava2005discovering}} & \multirow{4}{*}{49,060} & \multirow{4}{*}{28,596} & \multirow{4}{*}{22} & 10 & 71 & 22.8 & 22.6 & 4.6m & 2h 32.5m & 33.4 \\ & & & & & 50 & 32 & 620.0 & 615.6 & 24.2m & 6h 24.7m & 15.9 \\ & & & & & 100 & 23 & 2510.0 & 2487.7 & 59.5m & 11h 6.2m & 11.2 \\ & & & & & 200 & 16 & 10,104.3 & 10,001.4 & 2h 41.3m & 20h 49.8m & 7.7 \\
        \bottomrule\\
    \end{tabular}}
\vspace{-0.7cm}
\end{table*}
\end{scriptsize}

\subsection*{Comparison to Centralized Methods}
As mentioned in Section 2, most of the multi-label feature selection methods convert the multi-label dataset to one or multiple single-label datasets and then use single-label feature selection methods and then aggregate the results. Binary relevance (BR) and label powerset (LP) are the two best known of these conversions. Here, we combine these two conversion methods with two single-label feature selection methods which results in four different centralized feature selection methods. We considered ReliefF (RF)~\citep{Kononenko94,Robnik-SikonjaK03} and information gain (IG)~\citep{zhao2010advancing} for single-label methods. These methods compute a score for each feature and for aggregating their results in Binary Relevance conversion, it is enough to calculate the sum of the scores of each feature and use these scores for selecting features. These methods are used before in the literature for multi-label feature selection \citep{ChenYZCY07,dendamrongvit2011irrelevant,spolaor2011using,SpolaorCML12,SpolaorCML13}.

\vspace{-0.4cm}
\begin{figure}[hb]
\centering
\begin{subfigure}[t]{0.48\columnwidth}
\centering
\includegraphics[width=1.05\linewidth]{./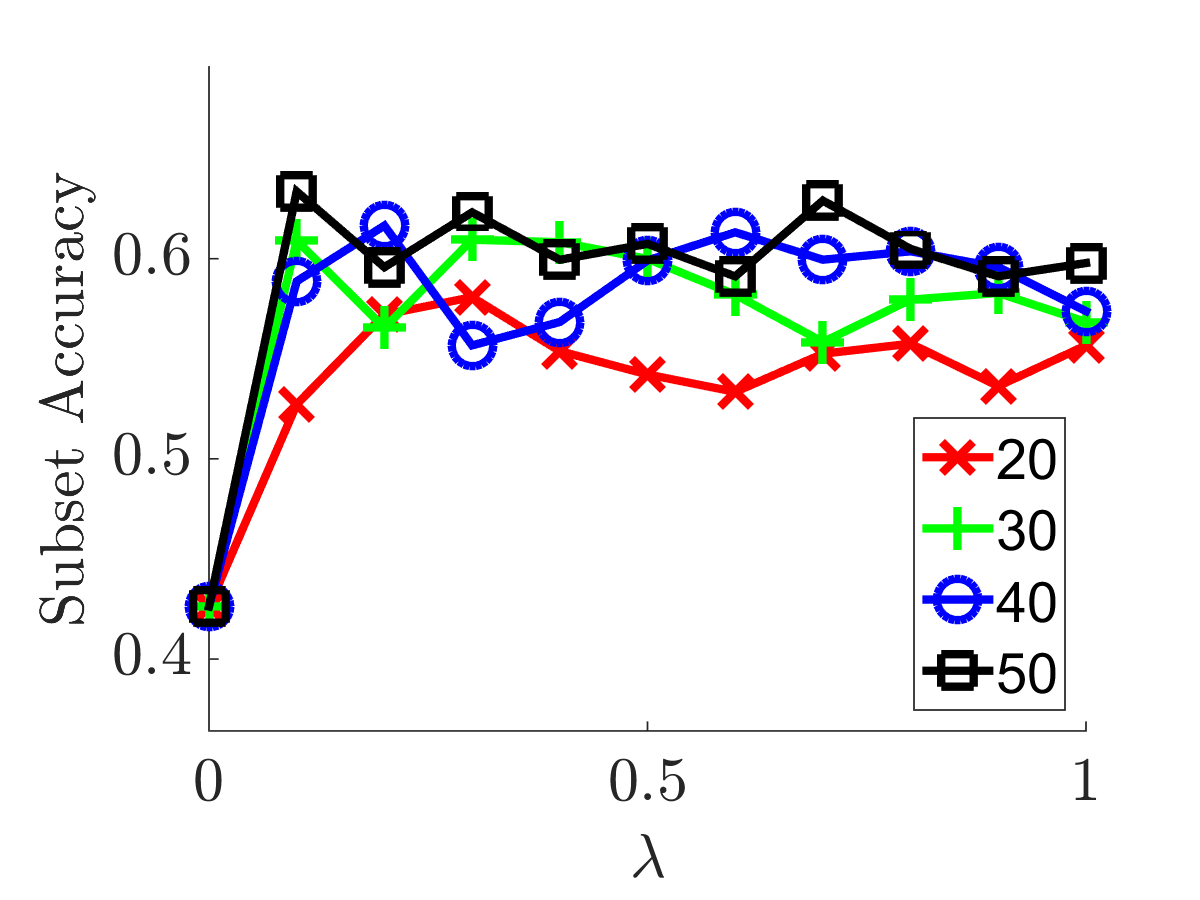}
%\subcaption{Scene}
\end{subfigure}
\begin{subfigure}[t]{0.48\columnwidth}
\centering
\includegraphics[width=1.05\linewidth]{./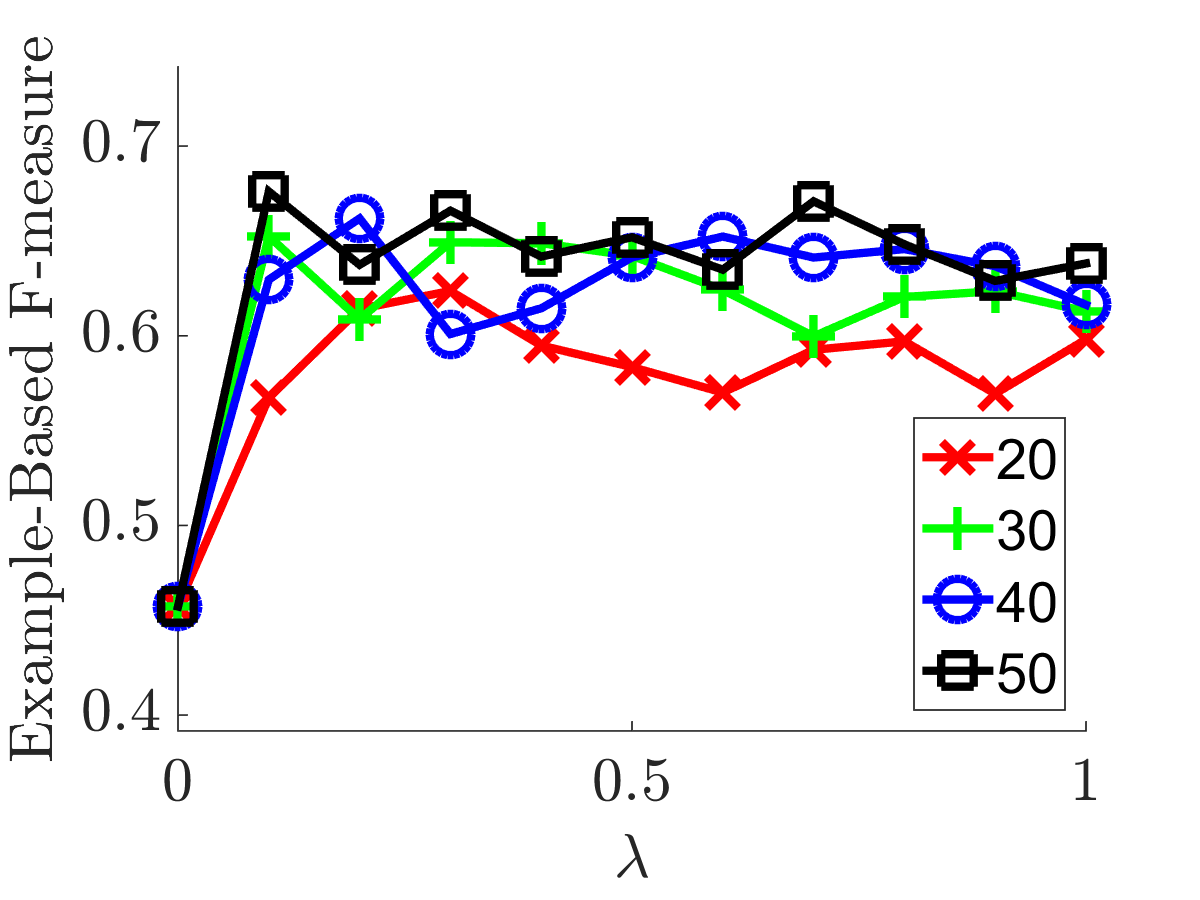}
%\subcaption{Scene}
\end{subfigure}
\vspace{-0.3cm}
\caption{Effect of $\lambda$ on the performance of the method.}
\label{Fig:lambda}
\end{figure}
\vspace{-0.3cm}

\begin{figure*}[t]
%\advance\leftskip-0.5cm
\centering
\begin{subfigure}[t]{0.27\textwidth}
\centering
\includegraphics[width=1.05\linewidth]{./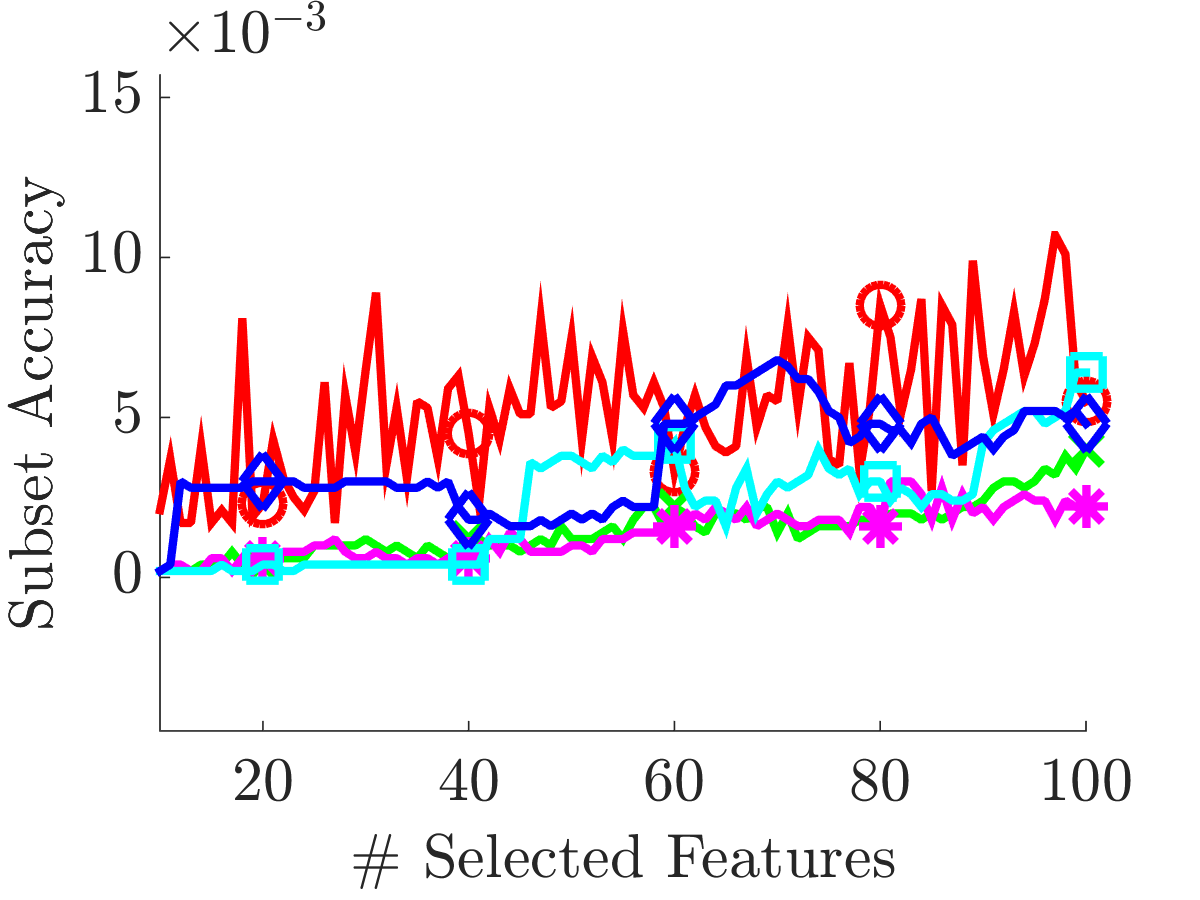}
\end{subfigure}
\begin{subfigure}[t]{0.27\textwidth}
\centering
\includegraphics[width=1.05\linewidth]{./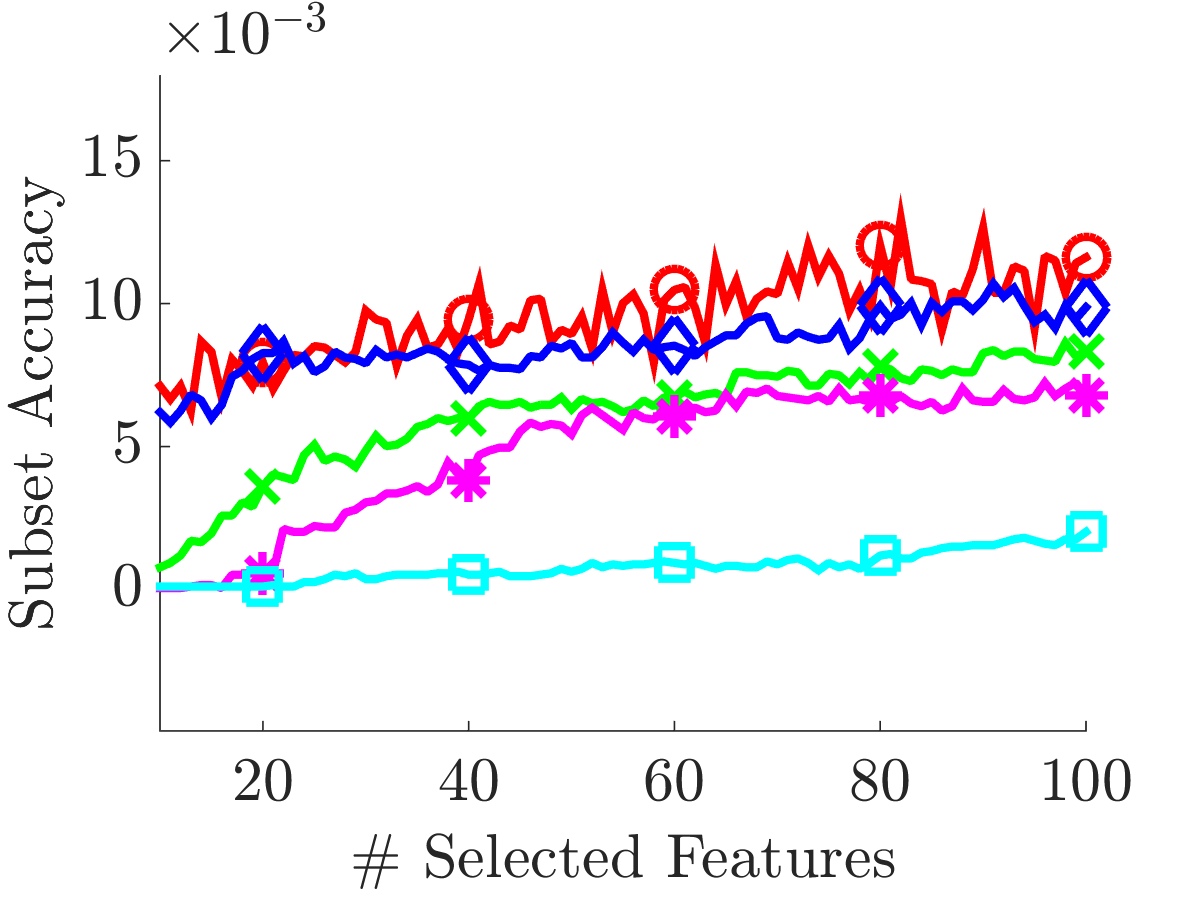}
\end{subfigure}
\begin{subfigure}[t]{0.27\textwidth}
\centering
\includegraphics[width=1.05\linewidth]{./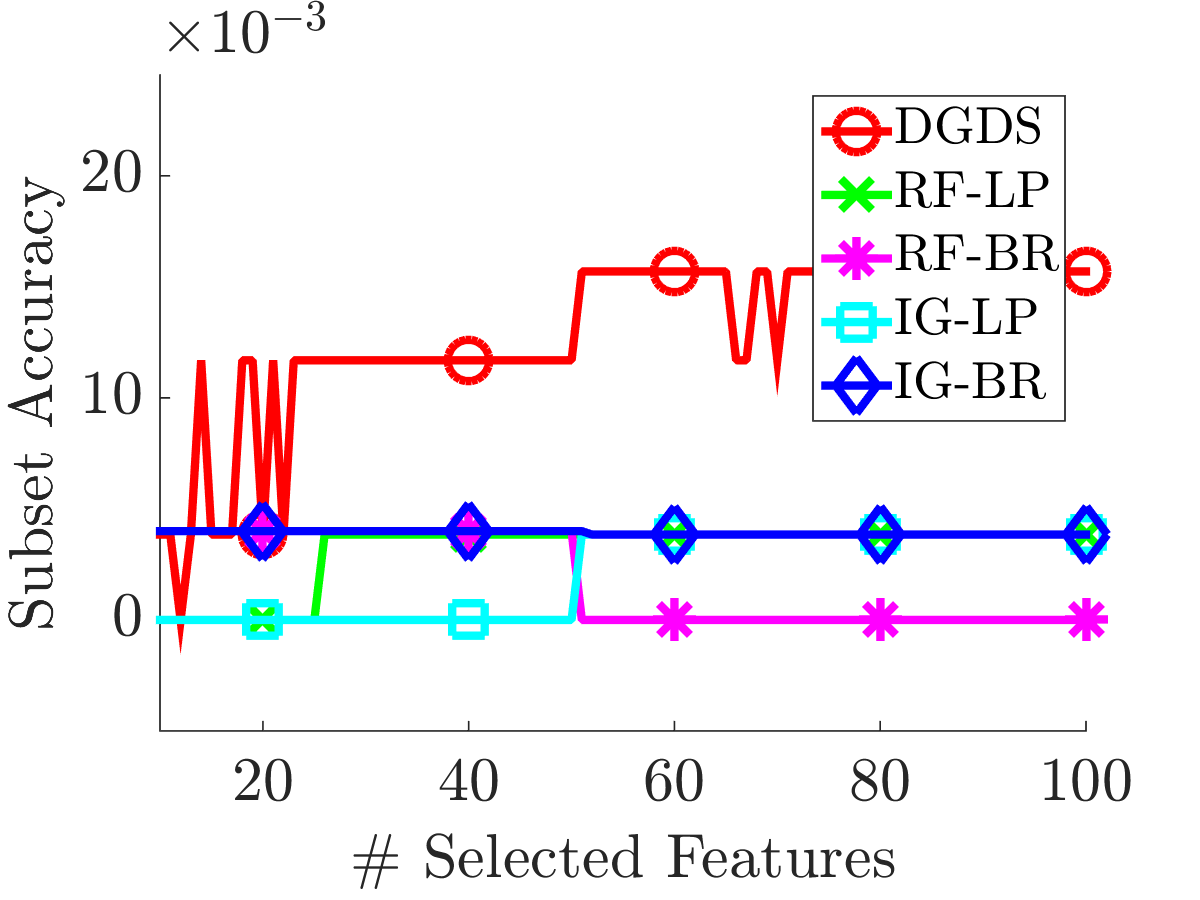}
\end{subfigure}
%\vskip\baselineskip
\begin{subfigure}[t]{0.27\textwidth}
\centering
\includegraphics[width=1.05\linewidth]{./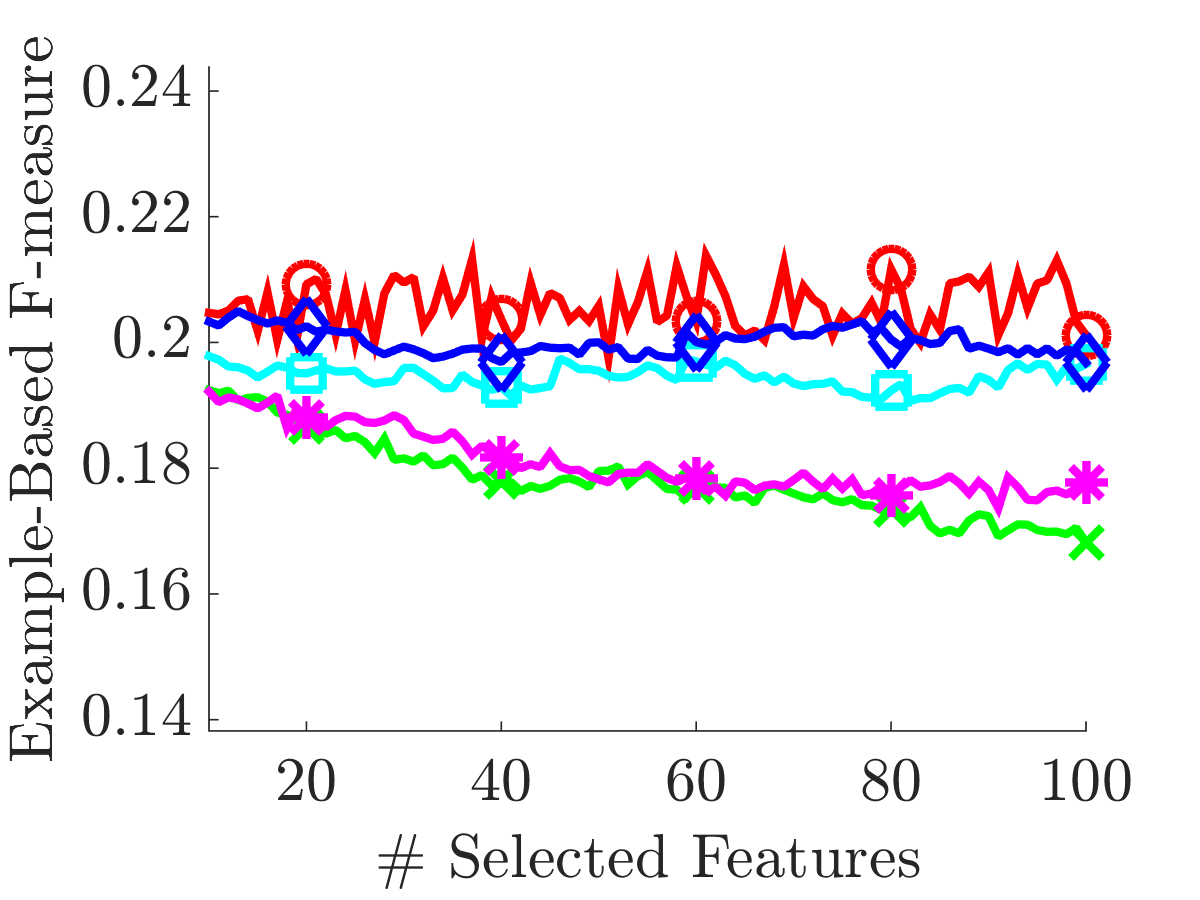}
\end{subfigure}
\begin{subfigure}[t]{0.27\textwidth}
\centering
\includegraphics[width=1.05\linewidth]{./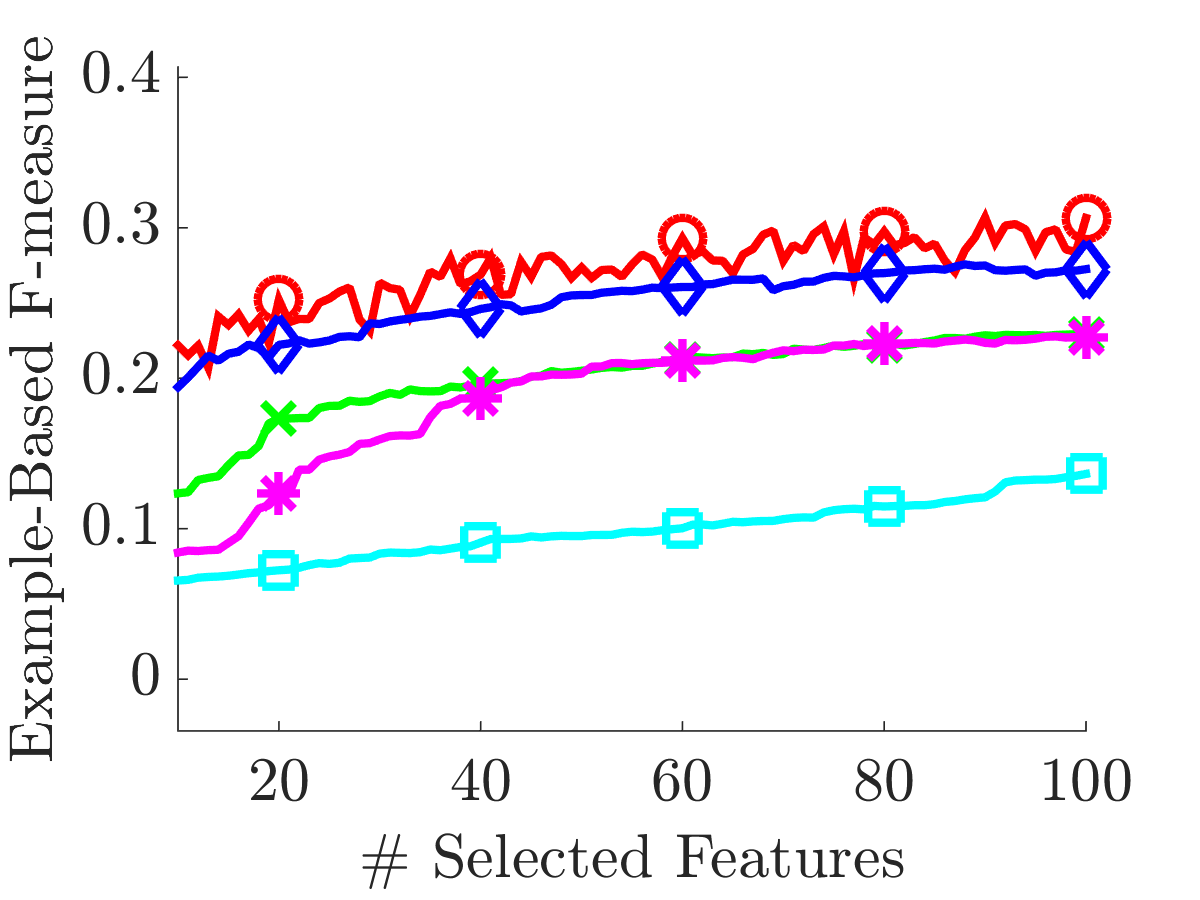}
\end{subfigure}
\begin{subfigure}[t]{0.27\textwidth}
\centering
\includegraphics[width=1.05\linewidth]{./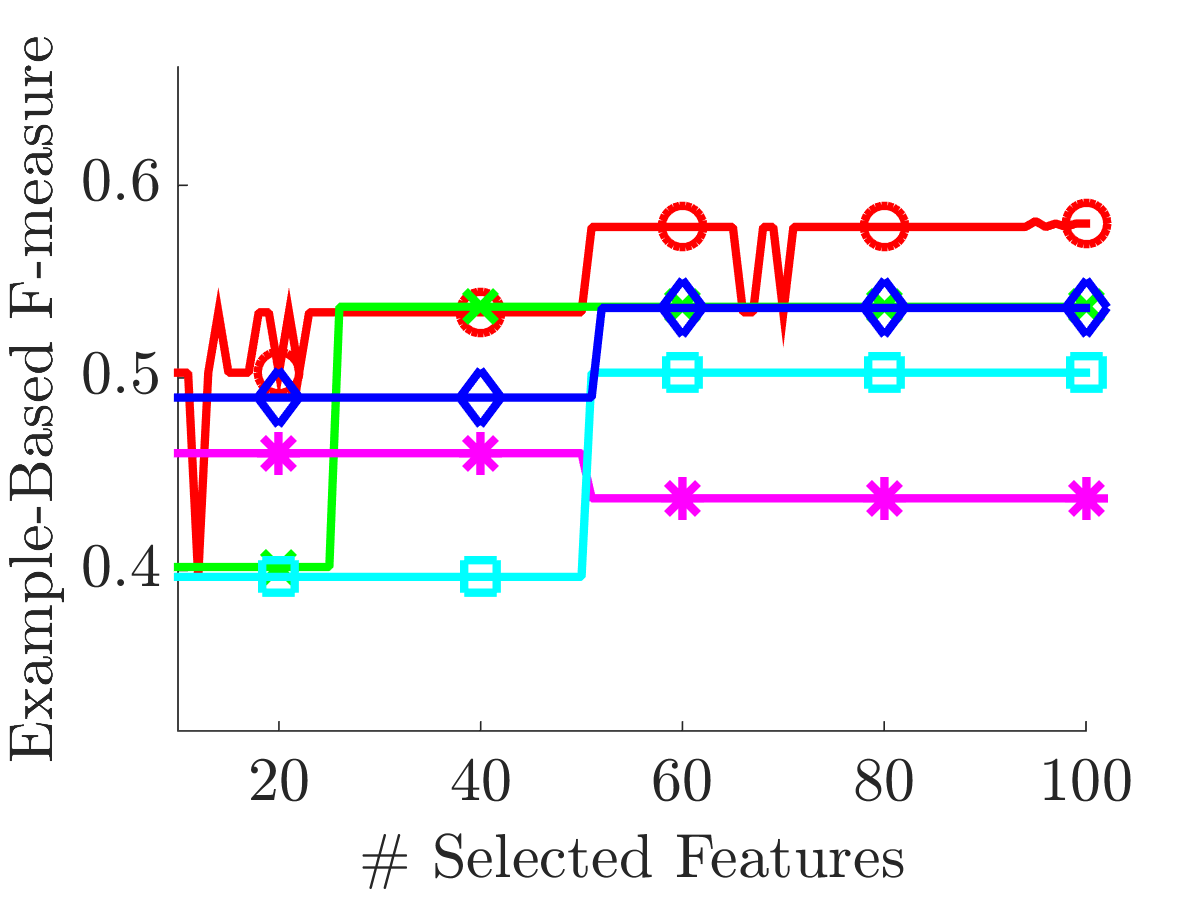}
\end{subfigure}
%\vskip\baselineskip
\begin{subfigure}[t]{0.27\textwidth}
\centering
\includegraphics[width=1.05\linewidth]{./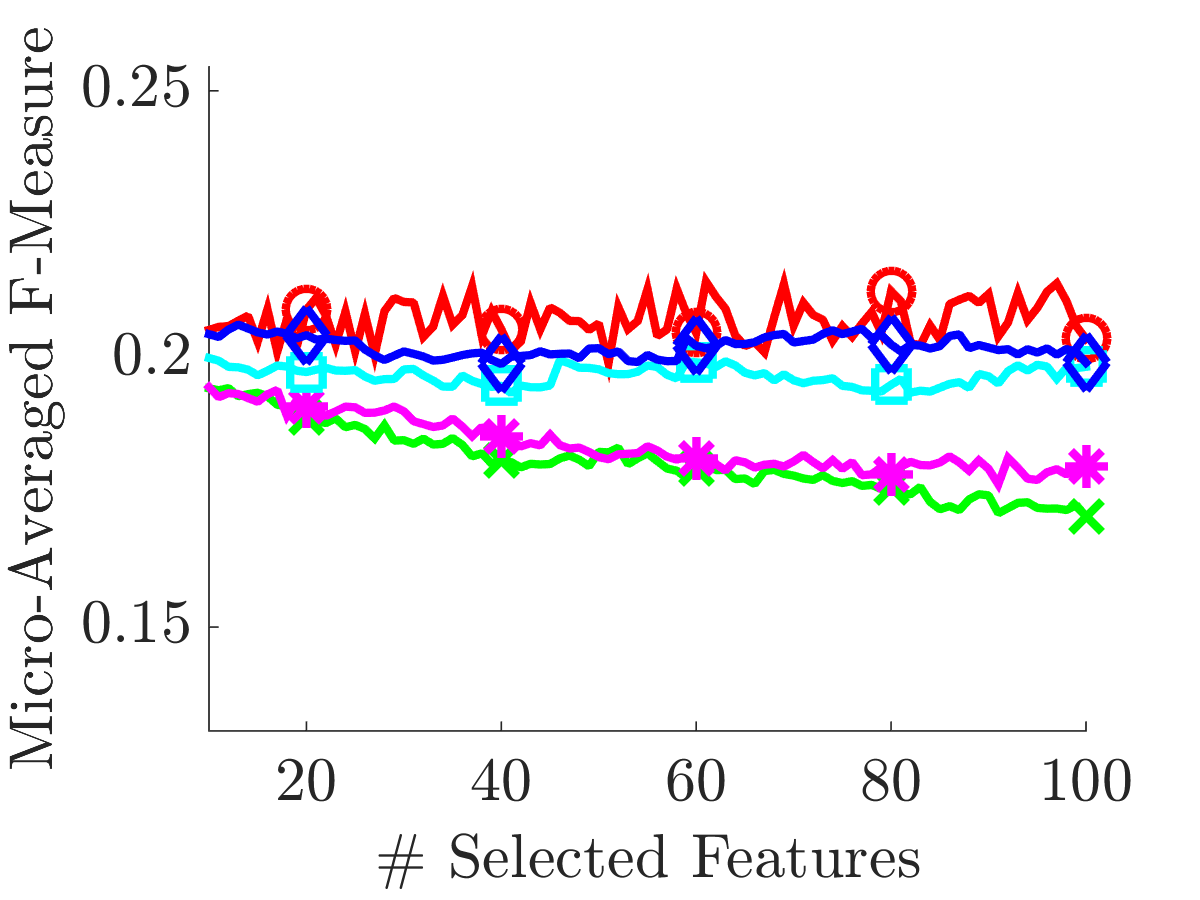}
\subcaption{Corel5k}
\end{subfigure}
\begin{subfigure}[t]{0.27\textwidth}
\centering
\includegraphics[width=1.05\linewidth]{./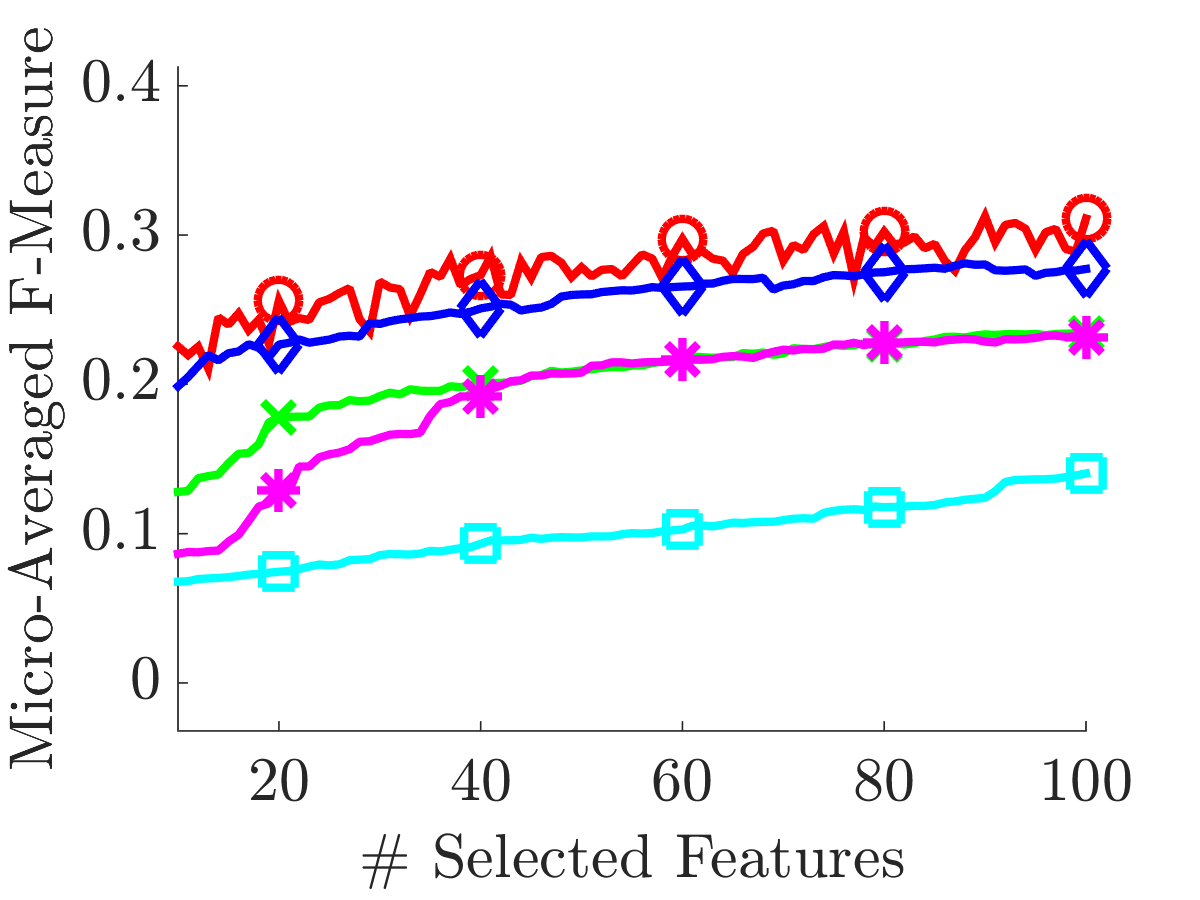}
\subcaption{Eurlex-ev}
\end{subfigure}
\begin{subfigure}[t]{0.27\textwidth}
\centering
\includegraphics[width=1.05\linewidth]{./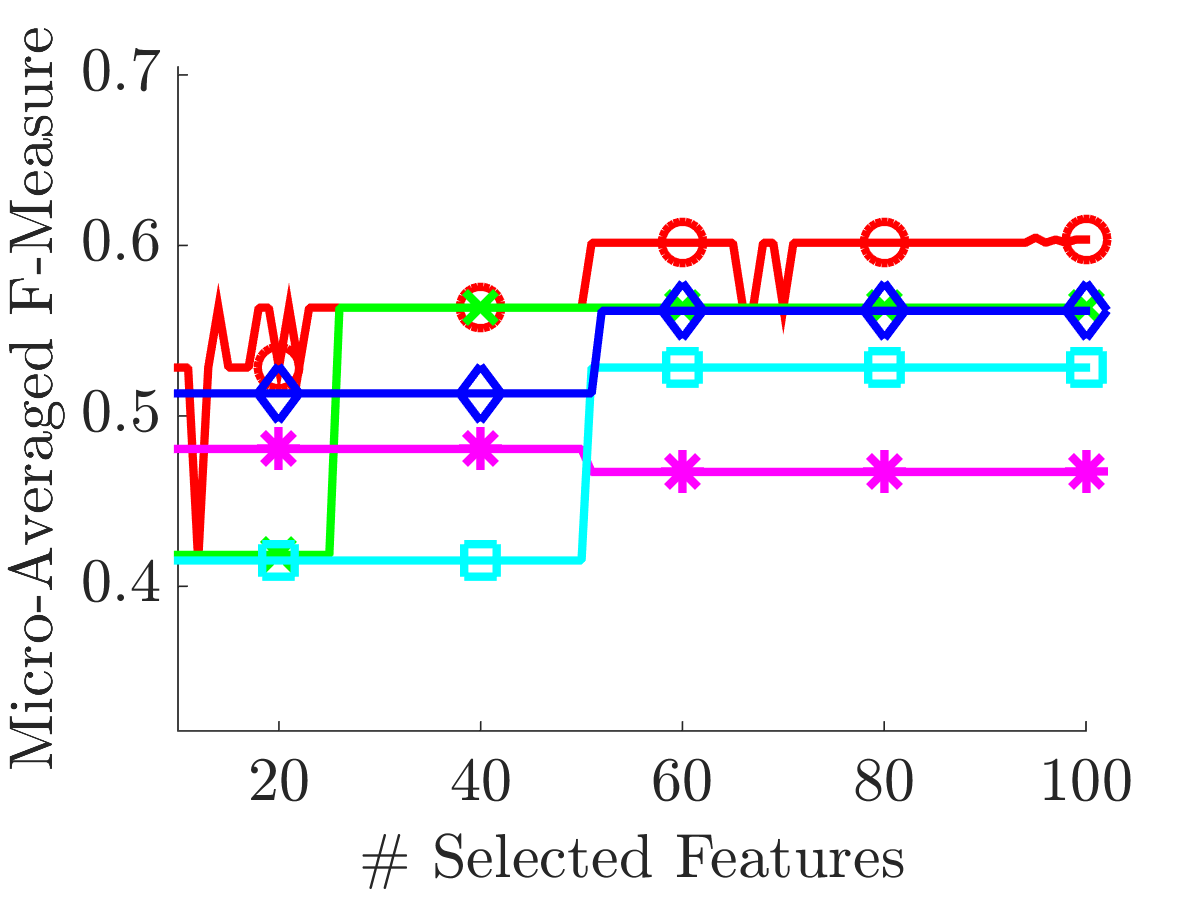}
\subcaption{Synthesized}
\end{subfigure}
\vskip\baselineskip
\vspace{-0.7cm}
\caption{Comparison of proposed distributed method with centralized methods in the literature.}
\label{Fig:comparison}
\vspace{-0.4cm}
\end{figure*}

For comparison, we selected 10 to 100 features with each method and did a multi-label classification using BRKNN-b proposed in~\citet{SpyromitrosTV08}. We did a 10-fold cross validation with five neighbors for BRKNN-b. We evaluated the classification outputs over five multi-label evaluation measures. They are subset accuracy, example-based accuracy, example-based F-measure, micro-averaged F-measure, and macro-averaged F-measure~\citep{SpolaorCML13,kashefmultilabel}. Evaluation measures are defined in Appendix C.

We used the Mulan library for the classification and computation of the evaluation measures~\citep{mulan}. We used a synthesized dataset and two real-world datasets-Corel5k~\citep{DuyguluBFF02} and Eurlex-ev~\citep{francesconi2010semantic}. Their specifications are shown in table~\ref{table:dataset}. The synthesized dataset made up of eight labels. Each label has two original features that repeated 50 times. One of the features has the same value as its label in half of the samples, and the other one has the same value as its label in a quarter of the samples. The results of this dataset show that our method outperforms other methods on a dataset with redundant features. The results of this experiments are shown in Figure~\ref{Fig:comparison}. Results of example-based accuracy and macro-average F-measure comparison for these datasets are included in Appendix D. We named our method distributed greedy diversity plus submodular (DGDS) in the plots. The other methods are named based on the conversion method they use (i.e., BR or LP) and the feature selection method they use (i.e., RF or IG). In the experiments, we used $\lambda=0.5$ and $\maxten$ for our method. Moreover, methods are compared on three other datasets in Appendix D. Results of the distributed method fluctuate more compared to other methods. The reason is that, for every number of features, we did the feature selection, including the random partitioning, from scratch. This caused more variation in its results but also showed that the method is relatively stable and does not produce poor quality results for different random partitionings.

As discussed, we compared our method to centralized feature selection methods because there is no distributed multi-label feature selection method prior to our work. We should note that this comparison is unfair to the distributed method because it uses much less of the data compared to centralized methods. For example, it does not use the relation (or the distance) between the features in different machines. The advantage of the distributed method is that it is much faster and scalable. This is supported by experiments on its speed-up (see Table~\ref{table:centVsDist}).

\subsection*{Comparison of Distributed and Centralized Algorithms}

Here, we compare the performance of our proposed algorithm (Algorithm~\ref{alg:DistAlgGMM}) with the centralized algorithm introduced in~\citet{BorodinJLY17} (Algorithm~\ref{alg:AlgAGMM}) on the optimization task. We compare the runtime and the value of the objective function the algorithms achieve. We select 10, 50, 100, and 200 features on two large datasets. If there are $d'$ features in a machine, and we want to select $k$ of them then the runtime of the machine is $\mathcal{O}(d'k)$. Therefore, if we have $\lceil\sqrt{d/k}\rceil$ slave machines then each of them has $\mathcal{O}(\sqrt{dk})$ features and its runtime is equal to $\mathcal{O}(k\sqrt{dk})$, where $d$ is the total number of features. Also, the master machine will have $\mathcal{O}(\sqrt{dk})$ features, and its runtime is $\mathcal{O}(k\sqrt{dk})$ which means the runtime complexity of the master machine and the slave machines are equal. If we increase or decrease the number of slave machines, then the running time of the master machine or the slave machines will increase which results in a lower speed-up. Hence, we set the number of slave machines equal to $\lceil\sqrt{d/k}\rceil$. The results show that in practice our proposed distributed algorithm achieves an approximate solution as good as the centralized algorithm in a much shorter time. The results are summarized in Table~\ref{table:centVsDist}. Moreover, we compared the distributed and the centralized algorithms on the classification task. Results of this experiment are included in Appendix E.

\subsection*{Effect of $\lambda$ hyper-parameter}

To show the importance of both terms of the objective function, redundancy (diversity function) and relevance (submodular function), we compared the performance of the method for different $\lambda$ value. We select 20, 30, 40, and 50 features on the scene dataset~\citep{BoutellLSB04}. As shown in Figure~\ref{Fig:lambda}, the best performance happens for some $\lambda$ between $0$ and $1$. This shows that both terms are necessary and it is possible to get better results by choosing $\lambda$ carefully.

\section{Conclusion}
In this paper, we presented a greedy algorithm for maximizing the sum of a sum-sum diversity function and a non-negative, monotone, submodular function subject to a cardinality constraint in distributed and streaming settings. We showed that this algorithm guarantees a provable theoretical approximation. Moreover, we formulated the multi-label feature selection problem as such an optimization problem and developed a multi-label feature selection method for distributed and streaming settings that can handle the redundancy of the features. Improving the theoretical approximation guarantee is appealing for future work. From the empirical standpoint, it would be nice to try other metric distances and other submodular functions for the multi-label feature selection problem.

\small
\bibliographystyle{unsrtnat}
\bibliography{ref}

%\iffalse
\appendix
\section{Appendix A}
\label{appendix:A}
\begin{proofof}{Lemma~\ref{thm:main}.}
Clearly $g$ is non-negative and monotone.
Since the sum of submodular functions is a submodular function, 
We only need to show that $\maxp_{x\in S} \{MI(x, \ell)\}$ is submodular.  We assume that $\maxz_{x\in S} \{MI(x, \ell)\} = 0$. Let $S\subseteq T\subset U$ and $a\in U\setminus T$. We show that 
\begin{align*}
\maxp_{x\in S\cup \{a\}} & \{MI(x, \ell)\} - \maxp_{x\in S} \{MI(x, \ell)\} \\ & \geq \maxp_{x\in T\cup \{a\}} \{MI(x, \ell)\} - \maxp_{x\in T} \{MI(x, \ell)\}.
\end{align*}
We have two cases. If $MI(a,\ell)$ is not among the $p$ largest numbers of $\{I(x, \ell)|x\in S\cup \{a\}\}$ then both sides of the above inequality are zero. If $MI(a,\ell)$ is among the $p$ largest numbers of $\{I(x, \ell)|x\in S\cup \{a\}\}$ then the left hand side of the inequality is equal to $MI(a, \ell) - MI(b,\ell)$ where $b$ is the $p$'th largest number in $\{I(x, \ell)|x\in S\}$. The right hand side is equal to $\max \{0, MI(a, \ell) - MI(c,\ell)\}$ where $c$ is the $p$'th largest number in $\{I(x, \ell)|x\in T\}$. The $p$'th largest number in $\{I(x, \ell)|x\in T\}$ is greater than or equal to the $p$'th largest number in $\{I(x, \ell)|x\in S\}$ because $S\subseteq T$. Therefore, in this case $MI(a, \ell) - MI(b,\ell) \geq \max \{0, MI(a, \ell) - MI(c,\ell)\}$ and the inequality holds.
\end{proofof}

\section{Appendix B}
\label{appendix:B}
For $S\subseteq U$ and $x\in U \setminus S$, let $\Delta(x, S) = g(S\cup \{x\}) - g(S)$.
We now show that Algorithm~\ref{alg:AlgGMM} is a $\beta$-nice algorithm for $f$.
This is ultimately needed for the proof of both key lemmas.
\iffalse
A similar result is used in~\citet{MirrokniZ15} in the case where the objective is submodular. We
need different ideas to establish niceness in the presence of the diversity part of the objective.
\fi

\begin{proofof}{Theorem~\ref{bnice}.}
Let $\texttt{ALG}$ be the Algorithm~\ref{alg:AlgGMM}, $T\subseteq U$, $t\in T\setminus \texttt{ALG}(T)$, and $x_1,\ldots,x_k$ be the elements that $\texttt{ALG}$ selected in the order of selection. Also, let $S_i=\{x_1,\ldots,x_i\}$ and $S_0=\emptyset$. 

For the first property of $\beta$-nice algorithms it is enough to have a consistent tiebreaking rule for $\texttt{ALG}$. It is sufficient to fix an 
ordering on all elements of $U$ up front. If some iteration finds multiple elements with the same  maximum marginal gain, then it should select earliest one in the a priori  ordering. 

Now we prove the second property of the $\beta$-nice algorithms for $\texttt{ALG}$. Because of the greedy selection of $\texttt{ALG}$, we have the following inequalities.
\begin{align*}
\Delta(x_1,S_0) & \geq \Delta(t,S_0) \\
\Delta(x_2,S_1) + d(x_2,x_1) & \geq d(t,x_1) + \Delta(t,S_1) \\
\Delta(x_3,S_2) + \sum_{i=1}^2 d(x_3,x_i) & \geq \sum_{i=1}^2 d(t,x_i) + \Delta(t,S_2) \\
& \cdots \\
\Delta(x_k,S_{k-1}) + \sum_{i=1}^{k-1} d(x_k,x_i) & \geq \sum_{i=1}^{k-1} d(t,x_i) + \Delta(t,S_{k-1}) \\
\end{align*}

Adding these inequalities together gives the following inequality.
\begin{align}
\label{eqn:agg}
\nonumber g(S_k) + D(S_k) & \geq \sum_{i=1}^{k-1} (k-i) d(t,x_i) + \sum_{i=0}^{k-1} \Delta(t,S_{i}) \\ & \geq \sum_{i=1}^{k-1} (k-i) d(t,x_i) + k\Delta(t,S_{k}),
\end{align}
where the second inequality holds because of the submodularity of $g$. Note that
\begin{equation}
\label{eqn:increm}
f(\texttt{ALG}(T) \cup \{x \}) - f(\texttt{ALG}(T)) = \Delta(t,\texttt{ALG}(T)) + \sum_{x \in \texttt{ALG}(T)} d(x,t).
\end{equation}

\noindent
One may thus note that  if the right-hand side coefficients in (\ref{eqn:agg})
were all $k/2$ (instead of $k-i$) we would have $2$-niceness of the algorithm. Our strategy is to achieve this  by shifting some of the ``weight'' from coefficients 
where $k-i > k/2$ to coefficients $<k/2$. This uses the metric inequality since $d(x_{k-i},x_i) + d(x_i,t) \geq d(x_{k-i},t)$. Hence if we added $d(x_{k-i},x_i)$ to both sides of (\ref{eqn:agg}), then we may increase the coefficient of $d(t,x_{k-i})$ by $1$
at the expense of  reducing the coefficient of $d(t,x_i)$ by $1$. 

We use this idea to  fix all of the ``small'' components in bulk by adding
a batch of {\em distinct} distances to both sides of (\ref{eqn:agg}). Since these distances are distinct, we increase the left-hand side by at most $D(S_k)$. In particular, the new left-hand side will be at most $2 (g(S_k)+D(S_k))$.

The batch of distances we add to both sides of the inequality is
$\sum_{i = \lceil \frac{k}{2} \rceil + 1}^k \sum_{j=1}^{i - \lfloor \frac{k}{2} \rfloor - 1} d(x_i,x_j)$. 
Clearly these distances are  distinct so we now need to make sure that the strategy produces the desired coefficients of terms $d(t,x_i)$. More formally, we claim that the following inequality holds.
\begin{claim}
\label{claim:weightShifitng}
\begin{align*}
\sum_{i=1}^{k-1}(k-i)d(t,x_i) & +\sum_{i=\lceil\frac{k}{2}\rceil+1}^k\sum_{j=1}^{i-\lfloor\frac{k}{2}\rfloor-1}d(x_i,x_j) \\ & \geq \sum_{i=1}^{k}(\lceil\frac{k}{2}\rceil-1)d(t,x_i)
\end{align*}
\end{claim}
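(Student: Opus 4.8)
The plan is to verify Claim~\ref{claim:weightShifitng} by tracking, for each index $i \in \{1,\dots,k\}$, the net coefficient of $d(t,x_i)$ that results after the batch of distances $\sum_{i=\lceil k/2\rceil+1}^k \sum_{j=1}^{i-\lfloor k/2\rfloor-1} d(x_i,x_j)$ is folded into the left-hand side via repeated applications of the metric inequality $d(x_i,x_j) + d(x_j,t) \ge d(x_i,t)$. First I would fix the regime distinction between $k$ even and $k$ odd (since $\lceil k/2 \rceil$ and $\lfloor k/2 \rfloor$ appear), and set $m = \lceil k/2\rceil - 1$, the target coefficient. The idea is that each term $d(x_i,x_j)$ with $i > \lceil k/2\rceil$ and $j \le i - \lfloor k/2\rfloor - 1$ is used to ``transfer'' one unit of coefficient: replacing $d(x_i,x_j)$ on the left by $d(x_i,t) - d(x_j,t)$ can only decrease the left side (metric inequality), so it suffices to show that after these transfers the coefficient of each $d(t,x_i)$ is at least $m$.

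The key bookkeeping step is the following. For a large index $i$ (with $\lceil k/2\rceil < i \le k$), its starting coefficient in $\sum_{i=1}^{k-1}(k-i)d(t,x_i)$ is $k-i$ (and $0$ when $i=k$), and it receives $+1$ from each of the $i - \lfloor k/2\rfloor - 1$ terms $d(x_i,x_j)$ in the batch, for a total of $(k-i) + (i - \lfloor k/2\rfloor - 1) = k - \lfloor k/2\rfloor - 1 = \lceil k/2\rceil - 1 = m$; so large indices land exactly on the target. For a small index $j$ (with $1 \le j \le \lceil k/2\rceil$, roughly), its starting coefficient is $k-j \ge m$, and it is decremented once for every large index $i$ such that $j \le i - \lfloor k/2 \rfloor - 1$, i.e. $i \ge j + \lfloor k/2\rfloor + 1$; the number of such $i$ in $\{\lceil k/2\rceil+1,\dots,k\}$ is $k - (j + \lfloor k/2\rfloor) = \lceil k/2\rceil - j$ when this is nonnegative, so the final coefficient is $(k-j) - (\lceil k/2\rceil - j) = k - \lceil k/2\rceil = \lfloor k/2\rfloor \ge \lceil k/2\rceil - 1 = m$. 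For the ``middle'' indices the decrement count is zero and the coefficient $k - i$ stays above $m$. Thus every final coefficient is $\ge m$, and since all transfers only decreased the left side, the claimed inequality follows; I would present this as a short case check on $i$ (small / middle / large) rather than a term-by-term induction.

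I would then record how this claim is used: adding the batch to both sides of (\ref{eqn:agg}), using the claim on the right side and distinctness of the batch distances on the left side, gives
\[
2\bigl(g(S_k) + D(S_k)\bigr) \;\ge\; \Bigl(\lceil \tfrac{k}{2}\rceil - 1\Bigr)\sum_{i=1}^{k} d(t,x_i) \;+\; k\,\Delta(t, S_k).
\]
Since $f(\texttt{ALG}(T)) = g(S_k) + D(S_k)$ and, by (\ref{eqn:increm}), $f(\texttt{ALG}(T)\cup\{t\}) - f(\texttt{ALG}(T)) = \Delta(t,S_k) + \sum_{x\in \texttt{ALG}(T)} d(x,t)$, dividing through and using $\lceil k/2\rceil - 1 \ge (k-1)/2 \ge k/5$ for $k \ge 10$ (and similarly bounding $k \ge k/5$) yields $f(\texttt{ALG}(T)\cup\{t\}) - f(\texttt{ALG}(T)) \le \frac{5}{k} f(\texttt{ALG}(T))$, i.e. $5$-niceness. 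For the second assertion of Theorem~\ref{bnice}, I would drop the $k\Delta(t,S_k) \ge 0$ term to get $2f(\texttt{ALG}(T)) \ge (\lceil k/2\rceil - 1)\sum_{x\in\texttt{ALG}(T)} d(t,x)$, and since $\lceil k/2 \rceil - 1 \ge (k-1)/2 \cdot \frac{2}{2} $ — more precisely $\lceil k/2\rceil - 1 \ge (k-1)\cdot\frac{2}{4.5}$ fails to be tight, so I would instead note $2/(\lceil k/2\rceil - 1) \le 4.5/(k-1)$ holds for $k \ge 10$ — we obtain $\frac{4.5}{k-1} f(\texttt{ALG}(T)) \ge \sum_{x\in\texttt{ALG}(T)} d(t,x)$.

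The main obstacle is getting the combinatorial bookkeeping in Claim~\ref{claim:weightShifitng} exactly right, in particular handling the floor/ceiling arithmetic uniformly for even and odd $k$ and confirming that the ``middle'' band of indices $i$ with $k - i \ge m$ but no decrements genuinely covers the gap between the small-index and large-index ranges so that no index is left with coefficient below $m$. A secondary, purely arithmetic, obstacle is checking that the constants $5$ and $4.5$ (and the threshold $k \ge 10$) come out of the inequalities $\lceil k/2\rceil - 1 \ge k/5$ and $2/(\lceil k/2\rceil-1) \le 4.5/(k-1)$; these are routine but must be stated carefully since the paper flags that the constants degrade for $k < 10$.
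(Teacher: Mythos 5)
Your proof of Claim~\ref{claim:weightShifitng} is correct and takes essentially the same route as the paper: both arguments fold each batch term $d(x_i,x_j)$ into the left-hand side via the triangle inequality $d(t,x_j)+d(x_i,x_j)\ge d(t,x_i)$ and then count the resulting net coefficient of each $d(t,x_\ell)$, which is exactly the content of the paper's identity~(\ref{eq:ws}) and the ensuing chain of inequalities. Your per-index bookkeeping (large indices land exactly on $\lceil k/2\rceil-1$, small indices end at $\lfloor k/2\rfloor\ge\lceil k/2\rceil-1$, and no index is both incremented and decremented) checks out, including the floor/ceiling cases, and is a somewhat cleaner packaging of the same computation.
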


We prove this claim later. Using this we have the following.

\begin{align*}
2(g(S_k) + & D(S_k)) \\ & \geq g(S_k) + D(S_k) + \sum_{i = \lceil \frac{k}{2} \rceil + 1}^k \sum_{j=1}^{i - \lfloor \frac{k}{2} \rfloor - 1} d(x_i,x_j) \\ & \geq \sum_{i=1}^{k-1} (k-i) d(t,x_i) + k\Delta(t,S_{k})  \\ & + \sum_{i = \lceil \frac{k}{2} \rceil + 1}^k \sum_{j=1}^{i - \lfloor \frac{k}{2} \rfloor - 1} d(x_i,x_j) \\ & \geq \sum_{i=1}^{k} (\lceil \frac{k}{2} \rceil - 1) d(t,x_i) + (\lceil \frac{k}{2} \rceil - 1)\Delta(t,S_{k})
\end{align*}

where the second inequalities holds because of the metric property, i.e. triangle inequality, and monotonicity of $g$. By using the above inequality, non-negativity of $g$, and (\ref{eqn:increm}) we  have

\begin{align*}
\frac{2}{\lceil \frac{k}{2} \rceil - 1}f(\texttt{ALG}(T)) & = \frac{2}{\lceil \frac{k}{2} \rceil - 1}(g(S_k) + D(S_k)) \\ & \geq \sum_{i=1}^{k} d(t,x_i) + \Delta(t,S_{k}) \\ & = f(\texttt{ALG}(T) \cup \{t \}) - f(\texttt{ALG}(T)).
\end{align*}

We can easily see that for $k \geq 10$, $\frac{5}{k} \geq \frac{2}{\lceil \frac{k}{2} \rceil - 1}$ and $\frac{4.5}{k-1} \geq \frac{2}{\lceil \frac{k}{2} \rceil - 1}$. Therefore, $\texttt{ALG}$ is a $5$-nice algorithm for $f$ and because of monotonicity of $g$, $\frac{4.5}{k-1} f(\texttt{ALG}(T)) \geq \sum_{i=1}^k d(t, x_i)$.
\end{proofof}

Now we prove Claim~\ref{claim:weightShifitng} to conclude Theorem~\ref{bnice}.

\begin{proofof}{Claim~\ref{claim:weightShifitng}.}
Note that $k=\lceil\frac{k}{2}\rceil+\lfloor\frac{k}{2}\rfloor$ and $\lfloor\frac{k}{2}\rfloor+1\geq\lceil\frac{k}{2}\rceil$. First, we show that 

\begin{equation}
\label{eq:ws}
\sum_{j=1}^{k-\lfloor\frac{k}{2}\rfloor-1}(\lceil\frac{k}{2}\rceil-j)d(t,x_j)=\sum_{i=\lceil \frac{k}{2}\rceil+1}^k\sum_{j=1}^{i-\lfloor\frac{k}{2}\rfloor-1}d(t,x_j).
\end{equation}

In the right hand side of (\ref{eq:ws}), $d(t,x_j)$ appears in the inner summation when $i-\lfloor\frac{k}{2}\rfloor-1\geq j$ or equivalently, when $i\geq j+\lfloor\frac{k}{2}\rfloor+1$. We know that $k\geq i\geq\lceil\frac{k}{2}\rceil+1$. We also know that $j\geq 1$. Hence, $j+\lfloor\frac{k}{2}\rfloor+1\geq\lceil\frac{k}{2}\rceil+1$. Therefore, $d(t,x_j)$ definitely appears in the inner summation when $k\geq i\geq j+\lfloor\frac{k}{2}\rfloor+1$. This means that $d(t,x_j)$ appears $k-j-\lfloor\frac{k}{2}\rfloor=\lceil\frac{k}{2}\rceil-j$ many times in the right hand side of (\ref{eq:ws}). Moreover, note that the index $j$ in the right hand side of (1) ranges between 1 and  $k-\lfloor\frac{k}{2}\rfloor-1$. Hence (\ref{eq:ws}) holds. Let 
\[
A=\sum_{i=k-\lfloor\frac{k}{2}\rfloor}^k(k-i)d(t,x_i)+\sum_{i=1}^{k-\lfloor\frac{k}{2}\rfloor-1}(\lceil\frac{k}{2}\rceil-1)d(t,x_i).
\]
By decomposing $\sum_{i=1}^{k-1}(k-i)d(t,x_i)$ to three summations, noting that $(k-k)d(t,x_k)=0$, and using (\ref{eq:ws}), we have 

\begin{align*}
\sum_{i=1}^{k-1}(k-i)d(t,x_i)&=\sum_{i=k-\lfloor\frac{k}{2}\rfloor}^k(k-i)d(t,x_i) \\ & +\sum_{i=1}^{k-\lfloor\frac{k}{2}\rfloor-1}(\lceil\frac{k}{2}\rceil-1)d(t,x_i) \\ & +\sum_{j=1}^{k-\lfloor\frac{k}{2}\rfloor-1}(k-j-\lceil\frac{k}{2}\rceil+1)d(t,x_j)\\&
=A+\sum_{j=1}^{k-\lfloor\frac{k}{2}\rfloor-1}(\lfloor\frac{k}{2}\rfloor-j+1)d(t,x_j)\\&
\geq A+\sum_{j=1}^{k-\lfloor\frac{k}{2}\rfloor-1}(\lceil\frac{k}{2}\rceil-j)d(t,x_j)\\&
=A+\sum_{i=\lceil\frac{k}{2}\rceil+1}^k\sum_{j=1}^{i-\lfloor\frac{k}{2}\rfloor-1}d(t,x_j).
\end{align*}

Therefore, by the triangle inequality and the above statements, we have 

\begin{align*}
\sum_{i=1}^{k-1}&(k-i)d(t,x_i)+\sum_{i=\lceil\frac{k}{2}\rceil+1}^k\sum_{j=1}^{i-\lfloor\frac{k}{2}\rfloor-1}d(x_i,x_j)\\&\geq A+\sum_{i=\lceil\frac{k}{2}\rceil+1}^k\sum_{j=1}^{i-\lfloor\frac{k}{2}\rfloor-1}d(t,x_j) \\ & +\sum_{i=\lceil\frac{k}{2}\rceil+1}^k\sum_{j=1}^{i-\lfloor\frac{k}{2}\rfloor-1}d(x_i,x_j)\\&=A+\sum_{i=\lceil\frac{k}{2}\rceil+1}^k\sum_{j=1}^{i-\lfloor\frac{k}{2}\rfloor-1}(d(t,x_j)+d(x_i,x_j))\\&\geq A+\sum_{i=\lceil\frac{k}{2}\rceil+1}^k\sum_{j=1}^{i-\lfloor\frac{k}{2}\rfloor-1}d(t,x_i)\\&
=A+\sum_{i=\lceil\frac{k}{2}\rceil+1}^k(i-\lfloor\frac{k}{2}\rfloor-1)d(t,x_i)\\&
\geq A+\sum_{i=\lceil\frac{k}{2}\rceil+1}^k(i-\lfloor\frac{k}{2}\rfloor-1)d(t,x_i) \\ & +(\lceil\frac{k}{2}\rceil-\lfloor\frac{k}{2}\rfloor-1)d(t,x_{\lceil\frac{k}{2}\rceil})
\\&=A+\sum_{i=\lceil\frac{k}{2}\rceil}^k(i-\lfloor\frac{k}{2}\rfloor-1)d(t,x_i)\\&
=\sum_{i=k-\lfloor\frac{k}{2}\rfloor}^k(k-i)d(t,x_i)+\sum_{i=1}^{k-\lfloor\frac{k}{2}\rfloor-1}(\lceil\frac{k}{2}\rceil-1)d(t,x_i) \\ & +\sum_{i=k-\lfloor\frac{k}{2}\rfloor}^k(i-\lfloor\frac{k}{2}\rfloor-1)d(t,x_i)\\&=\sum_{i=k-\lfloor\frac{k}{2}\rfloor}^k(k-i+i-\lfloor\frac{k}{2}\rfloor-1)d(t,x_i) \\ & +\sum_{i=1}^{k-\lfloor\frac{k}{2}\rfloor-1}(\lceil\frac{k}{2}\rceil-1)d(t,x_i)\\&=\sum_{i=k-\lfloor\frac{k}{2}\rfloor}^k(\lceil\frac{k}{2}\rceil-1)d(t,x_i)+\sum_{i=1}^{k-\lfloor\frac{k}{2}\rfloor-1}(\lceil\frac{k}{2} \rceil-1)d(t,x_i) \\ & =\sum_{i=1}^{k}(\lceil \frac{k}{2}\rceil-1)d(t,x_i).
\end{align*}
This yields the result.
\end{proofof}

We now proceed to bound the diversity part of the optimal solution (Lemma~\ref{boundD}). We re-use the key ideas from \citet{AghamolaeiFZ15} to achieve this. 
Let $O$ be an optimal solution for maximizing $f(S)$ subject to $S \subseteq U$ and $|S| = k$. Let $O^i = T^i \cap O$, $Q^i = O^i \setminus S^i$. So $Q^i$ are the elements of $O$ on machine $I$ that were ``missed'' by $S^i$. Intuitively, we
 bound the damage to optimality by missing these elements by 
 finding a low-weight matching between $Q^i$ and $S^i$. The following normalization parameters are used in the next two lemmas:
$r_i = \frac{f(S^i)}{{k \choose 2}}$ and $r = \max_{i=1,\ldots,m} r_i$.  Let $G^i(O^i\cup S^i,E)$ be a complete weighted graph. For $u, v \in O^i\cup S^i$, we use $d(u,v)$ as the edge weight  in our matching problem.

\begin{lemma}
There exists a  bipartite matching between $Q^i$ and $S^i$ in $G^i$ with a weight of at most $\frac{4.5}{2} |Q^i| r$ that covers all the $Q^i$.
\label{matching}
\end{lemma}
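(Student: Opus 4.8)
The plan is to feed the second conclusion of Theorem~\ref{bnice} into a random-matching (averaging) argument, exactly in the spirit of \citet{AghamolaeiFZ15}, with $f = D + g$ in place of a pure diversity function. Fix a machine $i$. Since $O^i = T^i \cap O$ and $Q^i = O^i \setminus S^i$, every $t \in Q^i$ lies in $T^i \setminus S^i = T^i \setminus \texttt{ALG}(T^i)$, so Theorem~\ref{bnice} is directly applicable and yields $\sum_{x \in S^i} d(t,x) \le \frac{4.5}{k-1} f(S^i)$ for each such $t$. By the definition of $r_i$ and $r$ we have $f(S^i) = {k \choose 2} r_i \le {k \choose 2} r$, hence $\sum_{x \in S^i} d(t,x) \le \frac{4.5}{k-1}\cdot\frac{k(k-1)}{2} r = \frac{4.5 k}{2} r$ for every $t \in Q^i$. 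This is the single quantitative input we need.

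Next I would produce the matching. Since $Q^i \subseteq O^i \subseteq O$ and $|O| = k = |S^i|$, we have $|Q^i| \le |S^i|$; as $G^i$ restricted to $Q^i \cup S^i$ is complete bipartite, Hall's condition holds trivially and a matching saturating $Q^i$ exists. To bound its weight, draw a uniformly random injection $\pi \colon Q^i \hookrightarrow S^i$. By symmetry of $S^i$, for each fixed $t \in Q^i$ the element $\pi(t)$ is uniform over $S^i$, so $\mathbb{E}[d(t,\pi(t))] = \frac1k \sum_{x \in S^i} d(t,x) \le \frac{1}{k}\cdot\frac{4.5 k}{2} r = \frac{4.5}{2} r$. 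Summing over $t \in Q^i$ and using linearity of expectation, $\mathbb{E}\!\left[\sum_{t \in Q^i} d(t,\pi(t))\right] \le \frac{4.5}{2}\,|Q^i|\, r$, so some injection — equivalently, some matching saturating $Q^i$ — has weight at most $\frac{4.5}{2}|Q^i|r$, which is the assertion of the lemma.

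There is no serious obstacle here: the genuinely delicate part (establishing the per-element distance bound $\sum_{x\in S^i} d(t,x) \le \frac{4.5 k}{2} r$ for the combined objective $f = D + g$, via the weight-shifting argument and consistent tiebreaking) is already packaged inside Theorem~\ref{bnice}. The only point that needs care is the routine verification that $t \in Q^i$ really satisfies $t \in T^i \setminus \texttt{ALG}(T^i)$ — immediate from $O^i = T^i \cap O$ and $Q^i = O^i \setminus S^i$ — and the bookkeeping with ${k \choose 2}$; the symmetry computation for the uniformly random injection and the implicit use of $k \ge 10$ (inherited from Theorem~\ref{bnice}) are standard.
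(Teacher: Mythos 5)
Your proof is correct and follows essentially the same route as the paper: both rest on the bound $\sum_{x\in S^i} d(t,x)\le \frac{4.5}{k-1}f(S^i)\le \frac{4.5k}{2}r$ from Theorem~\ref{bnice} and then apply an averaging argument, which the paper phrases as summing the weight over all $\frac{k!}{(k-|Q^i|)!}$ maximal matchings (each edge appearing in a $1/k$ fraction of them) while you phrase it as the expected weight of a uniformly random injection $\pi\colon Q^i\hookrightarrow S^i$ — these are the same computation. No gaps.
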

\begin{proof}
The number of all maximal bipartite matchings between $Q^i$ and $S^i$ is $\frac{k!}{(k-|Q^i|)!}$. Any of these matchings covers $Q^i$ because $|Q^i|\leq |S^i|$. Each edge $\{q,x\}$ with $q\in Q^i$ and $x\in S^i$ is in $\frac{(k-1)!}{(k-|Q^i|)!}$ of these matchings. 
Hence the total weight of all matchings can be expressed as
\begin{align*}
\frac{(k-1)!}{(k-|Q^i|)!}\sum_{q\in Q^i}\sum_{x\in S^i}d(q,x) & \leq \frac{(k-1)!}{(k-|Q^i|)!}\sum_{q\in Q^i}\frac{4.5}{k-1}f(S^i) \\ & \leq \frac{(k-1)!}{(k-|Q^i|)!}\sum_{q\in Q^i}\frac{4.5}{k-1}{k \choose 2}r \\ & = \frac{(k-1)!}{(k-|Q^i|)!}|Q^i|\frac{4.5k}{2}r \\ & = \frac{k!}{(k-|Q^i|)!}\frac{4.5}{2}|Q^i|r
\end{align*}
The first inequality is from Lemma~\ref{bnice} and the second by the definition of $r$.
It follows that there exists a matching with a weight of at most $\frac{4.5}{2}|Q^i|r$.
\end{proof}

We are now in position to upper bound the diversity portion of an optimal solution in terms of $f(\texttt{OPT}(\cup_i^m S^i))$.

\begin{proofof}{Lemma~\ref{boundD}.}
Let $M^i$ be the maximal bipartite matching between $Q^i$ and $S^i$ with a weight of less than or equal to $\frac{4.5}{2}|Q^i|r$. It exists because of Lemma~\ref{matching}. Let $M = \cup_{i=1}^m M^i$. Note that $S_i$'s are disjoint and $Q^i$'s are disjoint. This implies that $M^i$'s are disjoint. Therefore, $M$ is a matching between $\cup_{i=1}^m Q^i$ and $\cup_{i=1}^m S^i$ that covers all of $\cup_{i=1}^m Q^i$ with a weight of less than or equal to $\frac{4.5}{2}\sum_{i=1}^m |Q^i|r \leq \frac{4.5}{2}|O|r = \frac{4.5}{2}kr$.

Let $e:O\rightarrow \cup_{i=1}^m S^i$ be a mapping which maps any $o\in O\cap (\cup_{i=1}^m S^i)$ to itself and any $o\in (\cup_{i=1}^m Q^i)$ to its matched vertex in $M$. The weight of this mapping is less than or equal to the weight of $M$ since $d(o,o)=0$. Note that each vertex in the $range(e)$ is mapped from at most two vertices in $O$. We use this fact in the second inequality below
and  use the triangle inequality in the first inequality. We have
\begin{align*}
D & (O) = \sum_{\{u,v\}\in O} d(u,v) &  \\ & \leq \sum_{\{u,v\}\in O} (d(u, e(u)) + d(e(u),e(v)) + d(e(v), v)) \\ & = (|O| - 1)\sum_{u\in O}d(o,e(o))+\sum_{\{u,v\}\in O} d(e(u),e(v)) \\ & \leq (k-1)\frac{4.5}{2}kr + 4D(range(e)) \\ & \leq 4.5{k \choose 2}r + 4f(\texttt{OPT}(\cup_{i=1}^m S^i)) \\ & \leq 8.5f(\texttt{OPT}(\cup_{i=1}^m S^i))
\end{align*}
\end{proofof}

Now, we proceed to bound $g(O)$ and the proofs of the next two lemmas follow those found in \citet{MirrokniZ15}. Let $o_1,\ldots,o_k$ be an ordering of elements of $O$. For $x = o_i\in O$ define $O_x = \{o_1,\ldots,o_{i-1} \}$ and $O_{o_1} = \emptyset$.

\begin{lemma}
$g(O) \leq 6f(\texttt{OPT}(\cup_{i=1}^m S^i)) + \sum_{i=1}^m \sum_{x \in O\cap T^i \setminus S^i} (\Delta(x,O_x)-\Delta(x,O_x\cup S^i))$.
\label{boundG}
\end{lemma}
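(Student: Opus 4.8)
The plan is to follow the submodular core-set technique of~\citet{MirrokniZ15}. Fix the ordering $o_1,\dots,o_k$ of $O$ used to define the prefixes $O_x$. Since $g(\emptyset)=0$ (the $\maxz$ convention), telescoping along this ordering gives $g(O)=\sum_{x\in O}\Delta(x,O_x)$. I would then partition $O$ into the elements that survive into some core-set, $O'=O\cap(\cup_{i=1}^m S^i)$, and the ``missed'' elements, which --- since the $T^i$ partition $U$ and the $S^i$ are disjoint --- form the disjoint union $\cup_{i=1}^m(O\cap T^i\setminus S^i)$. For each missed element $x\in O\cap T^i\setminus S^i$ I add and subtract $\Delta(x,O_x\cup S^i)$ (chosen precisely so the leftover matches the statement), obtaining
\begin{align*}
g(O) &= \sum_{x\in O'}\Delta(x,O_x)\;+\;\sum_{i=1}^m\sum_{x\in O\cap T^i\setminus S^i}\Delta(x,O_x\cup S^i) \\
&\quad +\;\sum_{i=1}^m\sum_{x\in O\cap T^i\setminus S^i}\bigl(\Delta(x,O_x)-\Delta(x,O_x\cup S^i)\bigr).
\end{align*}
The third sum is exactly the residual term in the lemma and is nonnegative by submodularity, so it remains to bound the first two sums by $6\,f(\texttt{OPT}(\cup_{i=1}^m S^i))$.

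For the first sum, submodularity gives $\Delta(x,O_x)\le\Delta(x,O_x\cap O')$; summing over $x\in O'$ in the induced order telescopes to $g(O')$. Because $O'\subseteq\cup_i S^i$ and $|O'|\le k$, I would use monotonicity to extend $O'$ to a $k$-subset $R$ of $\cup_i S^i$, so that $g(O')\le g(R)\le f(R)\le f(\texttt{OPT}(\cup_i S^i))$ (using $D\ge0$ and the definition of $\texttt{OPT}$). Hence the first sum is at most one copy of $f(\texttt{OPT}(\cup_i S^i))$.

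For the second sum, the engine is Theorem~\ref{bnice}: since Algorithm~\ref{alg:AlgGMM} is $5$-nice, for $x\in T^i\setminus S^i$ we have $f(S^i\cup\{x\})-f(S^i)\le\frac{5}{k}f(S^i)$. Submodularity gives $\Delta(x,O_x\cup S^i)\le\Delta(x,S^i)$, and since the diversity increment $\sum_{y\in S^i}d(x,y)$ is nonnegative, $\Delta(x,S^i)\le\Delta(x,S^i)+\sum_{y\in S^i}d(x,y)=f(S^i\cup\{x\})-f(S^i)\le\frac{5}{k}f(S^i)$. Summing these bounds, using $\sum_i|O\cap T^i|=|O|=k$ and $f(S^i)\le f(\texttt{OPT}(\cup_i S^i))$ (each $S^i$ is a feasible $k$-subset of $\cup_i S^i$), bounds the second sum by $\frac{5}{k}\bigl(\max_i f(S^i)\bigr)\sum_i|O\cap T^i|\le 5\,f(\texttt{OPT}(\cup_i S^i))$. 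Adding the two estimates gives $g(O)\le 6\,f(\texttt{OPT}(\cup_i S^i))+(\text{residual})$.

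The main obstacle, such as it is, is keeping the number of machines $m$ out of the bound on the second sum: a careless estimate risks a factor of $m$, but because the $5$-nice inequality contributes a $\frac1k$ per element and $\sum_i|O\cap T^i|=k$ (the optimum has only $k$ elements in total, however they are spread across machines), everything collapses to a constant. The other delicate points are choosing to add and subtract $\Delta(x,O_x\cup S^i)$ (rather than, say, $\Delta(x,S^i)$) so the residual term comes out exactly as stated, and discarding the diversity increment in the right direction --- it is nonnegative, so dropping it only loosens the upper bound on $\Delta(x,S^i)$, which is what we want. Note this argument implicitly needs $k\ge10$, inherited from Theorem~\ref{bnice}; only the marginal-gain inequality of that theorem (not the tiebreaking/first $\beta$-nice property) is used here.
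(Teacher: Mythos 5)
Your proof is correct and follows essentially the same route as the paper's: bound the contribution of $O\cap(\cup_i S^i)$ by $f(\texttt{OPT}(\cup_i S^i))$, add and subtract $\Delta(x,O_x\cup S^i)$ for each missed element, and control $\Delta(x,S^i)$ via the $5$-niceness inequality, with the per-element factor $\frac{5}{k}$ cancelling against $|O|=k$. The only (immaterial) difference is that you telescope $g(O)$ over the full ordering and then push the prefix marginals down to $g(O')$ by submodularity, whereas the paper starts directly from $g(O)=g(O')+\sum_{x\in O\setminus O'}\Delta(x,O_x\cup O')$ and relaxes the conditioning set.
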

\begin{proof}
Note that $g(O) = g(O\cap (\cup_{i=1}^m S^i)) + \sum_{x\in O \setminus (\cup_{i=1}^m S^i)} \Delta(x,O_x\cup (O \cap (\cup_{i=1}^m S^i)))$. Therefore, using submodularity and monotonicity of $g$ and 5-niceness of Algorithm~\ref{alg:AlgGMM}, we have
{\begin{scriptsize}
\begin{align*}
& g(O) \leq f(\texttt{OPT}(\cup_{i=1}^m S^i)) + \sum_{x\in O \setminus (\cup_{i=1}^m S^i)}\Delta(x,O_x) \\ & = f(\texttt{OPT}(\cup_{i=1}^m S^i)) \\ & + \sum_{i=1}^m\sum_{x\in O \cap T^i \setminus S^i}(\Delta(x,O_x\cup S^i)+\Delta(x,O_x)-\Delta(x,O_x\cup S^i)) \\ & \leq f(\texttt{OPT}(\cup_{i=1}^m S^i)) \\ & + \sum_{i=1}^m\sum_{x\in O \cap T^i \setminus S^i}(\Delta(x,S^i)+\Delta(x,O_x)-\Delta(x,O_x\cup S^i)) \\ & \leq f(\texttt{OPT}(\cup_{i=1}^m S^i)) \\ & + \sum_{i=1}^m\sum_{x\in O \cap T^i \setminus S^i}(\frac{5}{k}f(S^i)+\Delta(x,O_x)-\Delta(x,O_x\cup S^i)) \\ & \leq f(\texttt{OPT}(\cup_{i=1}^m S^i)) \\ & + \sum_{i=1}^m\sum_{x\in O \cap T^i \setminus S^i}(\frac{5}{k}f(\texttt{OPT}(\cup_{i=1}^m S^i))+\Delta(x,O_x)-\Delta(x,O_x\cup S^i)) \\ & \leq f(\texttt{OPT}(\cup_{i=1}^m S^i)) + 5f(\texttt{OPT}(\cup_{i=1}^m S^i)) \\ & + \sum_{i=1}^m\sum_{x\in O \cap T^i \setminus S^i}(\Delta(x,O_x)-\Delta(x,O_x\cup S^i)) \\ & \leq 6f(\texttt{OPT}(\cup_{i=1}^m S^i)) + \sum_{i=1}^m\sum_{x\in O \cap T^i \setminus S^i}(\Delta(x,O_x)-\Delta(x,O_x\cup S^i))
\end{align*}
\end{scriptsize}}
\end{proof}

In the next Lemma, we use the randomness of the partitioning of the data over machines and the first property of $\beta$-niceness.

\begin{lemma}
$\mathbb{E}[\sum_{i=1}^m \sum_{x \in O\cap T^i \setminus S^i} (\Delta(x,O_x)-\Delta(x,O_x\cup S^i))] \leq \mathbb{E}[f(\texttt{OPT}(\cup_{i=1}^m S^i))]$.
\label{bounddDif}
\end{lemma}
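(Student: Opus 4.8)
The plan is to analyze each summand $\Delta(x,O_x) - \Delta(x, O_x \cup S^i)$ and show that, in expectation over the random partition, the whole double sum is dominated by a single copy of $\mathbb{E}[f(\texttt{OPT}(\cup_i S^i))]$. The starting observation is that $\Delta(x,O_x) - \Delta(x,O_x\cup S^i) \geq 0$ by submodularity of $g$, and it equals $\Delta(x,O_x) - \Delta(x, O_x\cup S^i) \le \Delta(x, O_x) - \Delta(x, O_x \cup (S^i\setminus\{x\}))$... but more usefully I would bound it from above by $\Delta(x, \emptyset) = g(\{x\})$ via submodularity, or better, relate it to the marginal gain $x$ would have contributed to $S^i$. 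The key structural fact from $\beta$-niceness (first property) is that $\texttt{ALG}(T^i) = \texttt{ALG}(T^i\setminus\{x\})$ whenever $x \in T^i\setminus S^i$; that is, adding $x$ to machine $i$ does not change its output $S^i$. This is exactly what lets us do a ``thought experiment'': had $x$ landed on machine $i$ (which under a random partition happens for each $x\in O$ independently with probability $1/m$ over which machine receives it), then either $x\in S^i$ — contributing nothing to the sum — or $x\notin S^i$, and in that latter case the quantity $\Delta(x,O_x)-\Delta(x,O_x\cup S^i)$ measures the submodular ``loss'' from $x$ being shadowed by $S^i$.

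The core step I would carry out is the coupling/conditioning argument of \citet{MirrokniZ15}: condition on the multiset of elements $\{S^i\}_{i}$... actually, more carefully, condition on the sets $T^i \setminus O$ (the non-optimal elements on each machine), which by the first $\beta$-nice property already determine each $S^i$ as long as no optimal element from $O$ affects the output — and by the first property of $\beta$-niceness, an optimal element $x\in O$ that is \emph{not selected} does not affect $S^i$, so $S^i$ is a deterministic function of $T^i\setminus O$ together with the subset of $O$ that gets selected. The clean way: fix any realization of how the non-$O$ elements are partitioned; then each $x\in O$ is still independently and uniformly assigned to one of the $m$ machines. For a fixed assignment of the other elements of $O$, define $S^i$ accordingly; then summing the potential losses $\Delta(x,O_x)-\Delta(x,O_x\cup S^i)$ over $x \in O\cap T^i\setminus S^i$ telescopes (because of the prefix structure $O_x$) into at most $g(O\cap(\cup_i S^i \text{'s complement direction}))$, i.e. into at most $f(\texttt{OPT}(\cup_i S^i))$ on that realization — here I would use that $\sum_{x\in O\setminus(\cup S^i)}\Delta(x, O_x) \le g(O) \le$ something, combined with monotonicity, but the sharper route is to recognize the sum as $g$ evaluated on a set that $\texttt{OPT}(\cup_i S^i)$ could have chosen, hence $\le f(\texttt{OPT}(\cup_i S^i))$. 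Taking expectations over the random partition then gives the claimed bound $\mathbb{E}[f(\texttt{OPT}(\cup_i S^i))]$.

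The main obstacle I anticipate is making the conditioning rigorous: one needs that $S^i$ is \emph{unchanged} whether or not a given unselected optimal element $x$ is present on machine $i$, and this must hold \emph{simultaneously} for all the unselected optimal elements, not just one at a time. This is where the first $\beta$-nice property gets used repeatedly — removing unselected elements one by one leaves the output invariant — so that $S^i = \texttt{ALG}(T^i \setminus (O\setminus S^i))$, i.e. $S^i$ depends only on $T^i \setminus O$ and on which optimal elements it \emph{would} select. Combined with the independence of the assignment of each $x\in O$ across machines, and linearity of expectation applied after conditioning, the probability-$1/m$ factor and the summation over $m$ machines cancel, leaving a single $\mathbb{E}[f(\texttt{OPT}(\cup_i S^i))]$. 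I would structure the write-up as: (i) reduce to a fixed realization of non-optimal elements and reduce $S^i$ to a function of $T^i\setminus O$; (ii) telescope the inner sum using the prefix sets $O_x$ and submodularity/monotonicity of $g$ to get $\le f(\texttt{OPT}(\cup_i S^i))$ per realization; (iii) take the outer expectation.
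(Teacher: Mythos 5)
Your plan is essentially the paper's proof (the Mirrokni--Zadimoghaddam randomized core-set argument): the first niceness property plus the uniform randomness of the partition buys a factor $1/m$ per element of $O$, and the full sum over $x\in O$ telescopes as $\sum_{x\in O}\bigl(\Delta(x,O_x)-\Delta(x,O_x\cup S^i)\bigr)=g(O)-g(O\cup S^i)+g(S^i)\le g(S^i)\le f(\texttt{OPT}(\cup_{i}S^i))$, so the sum over $m$ machines cancels the $1/m$. The simultaneity obstacle you flag does not actually arise: the paper applies the first niceness property to one element at a time, pairing the events $T^i=T$ and $T^i=T\cup\{x\}$ for each fixed $x\in O$ under linearity of expectation, so no joint conditioning on all unselected optimal elements is needed.
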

\begin{proof}
We show that $\mathbb{E}[\sum_{i=1}^m \sum_{x \in O\cap T^i \setminus S^i} (\Delta(x,O_x)-\Delta(x,O_x\cup S^i))] \leq \frac{\mathbb{E}[\sum_{i=1}^m g(S^i)]}{m}$ and the statement of the lemma follows from the fact that $\frac{\sum_{i=1}^m g(S^i)}{m} \leq f(\texttt{OPT}(\cup_{i=1}^m S^i))$. We first establish an inequality
\[
A:=\mathbb{E}[\sum_{i=1}^m \sum_{x \in O\cap T^i \setminus S^i} (\Delta(x,O_x)-\Delta(x,O_x\cup S^i))] \leq \frac{1}{m} B
\]

\noindent 
where 
\[
B:= \mathbb{E}[\sum_{i=1}^m \sum_{x\in O} (\Delta(x,O_x)-\Delta(x,O_x\cup S^i))].
\]

Let $\texttt{ALG}$ be Algorithm~\ref{alg:AlgGMM}. For $T\subseteq U$ and $x\in U$, let $q(x,T) = \Delta(x,O_x) - \Delta(x,O_x\cup \texttt{ALG}(T))$. Let $P[.]$ be the probability mass function for the uniform distribution  over $m$-partitions $\mathbbm{P}=(T^1, \ldots ,T^m)$ of $U$, and let $\mathbbm{1}[x\notin \texttt{ALG}(T\cup\{x\})]$ be a $0,1$ indicator function. Note that
\begin{align*}
P[T^i=T] & = (\frac{1}{m})^{|T|} (1-\frac{1}{m})^{|U|-|T|} \\
P[T^i=T \cup \{x\}] & = (\frac{1}{m})^{|T|+1} (1-\frac{1}{m})^{|U|-|T|-1}
\end{align*}
Therefore
\begin{equation}
\label{eqn:AB}
P[T^i=T \cup \{x\}] = \frac{P[T^i=T]+P[T^i=T \cup \{x\}]}{m}.
\end{equation}

We have that
{\begin{tiny}
\begin{align*}
& A = \sum_{i=1}^m \sum_{x\in O} \sum_{T \subseteq U\setminus \{x\}} P[T^i=T\cup\{x\}]\mathbbm{1}[x\notin \texttt{ALG}(T\cup\{x\})]q(x,T\cup\{x\}) \\
& B = \sum_{i=1}^m \sum_{x\in O} \sum_{T \subseteq U\setminus \{x\}} (P[T^i=T\cup\{x\}]q(x,T\cup\{x\})+P[T^i=T]q(x,T)) \\
 & \geq \sum_{i=1}^m \sum_{x\in O} \sum_{T \subseteq U\setminus \{x\}} \mathbbm{1}[x\notin \texttt{ALG}(T\cup\{x\})]q(x,T\cup\{x\})(P[T^i=T\cup\{x\}] \\ & \qquad\qquad\qquad\qquad\qquad\qquad\qquad\qquad\qquad\qquad\qquad\qquad +P[T^i=T]).
\end{align*}
\end{tiny}}
The last inequality holds because $q(.,.)$ is a non-negative function and multiplying it by $\mathbbm{1}[x\notin \texttt{ALG}(T\cup\{x\})]$ can only decrease the sum value. Also, $q(x,T)$ is replaced by $q(x,T\cup\{x\})$. It does not change the sum value because when $\mathbbm{1}[x\notin \texttt{ALG}(T\cup\{x\})]=1$, $q(x,T)=q(x,T\cup\{x\})$. 

\noindent
We now deduce $A\leq B/m$  from (\ref{eqn:AB}). 

Now note that $\sum_{x\in O} \Delta(x,O_x\cup S^i) = g(O\cup S^i) - g(S^i)$, and $\sum_{x\in O} \Delta(x,O_x) = g(O)$. Therefore, because of the monotonicity of $g$, we have for any $i$
\begin{align*}
\sum_{x\in O} \Delta(x,O_x) - & \Delta(x,O_x\cup S^i) \\ & = g(O) - g(O\cup S^i) + g(S^i) \leq g(S^i).
\end{align*}
Hence $B\leq \frac{\mathbb{E}[\sum_{i=1}^m g(S^i)]}{m}$ and the lemma follows.
\end{proof}

We now have that Lemma~\ref{boundg} follows directly from
Lemmas~\ref{boundG}, and \ref{bounddDif} as they imply 
\[
g(O) \leq  6f(\texttt{OPT}(\cup_{i=1}^m S^i)) + \mathbb{E}[f(\texttt{OPT}(\cup_{i=1}^m S^i))].
\]

\noindent
Therefore this completes the proof of Theorem~\ref{thm:main}.

\section{Appendix C}
\label{evalMeasure}

\begin{figure*}[t]
\centering
%\advance\leftskip-0.5cm
\begin{subfigure}[t]{0.28\textwidth}
\centering
\includegraphics[width=1.05\linewidth]{./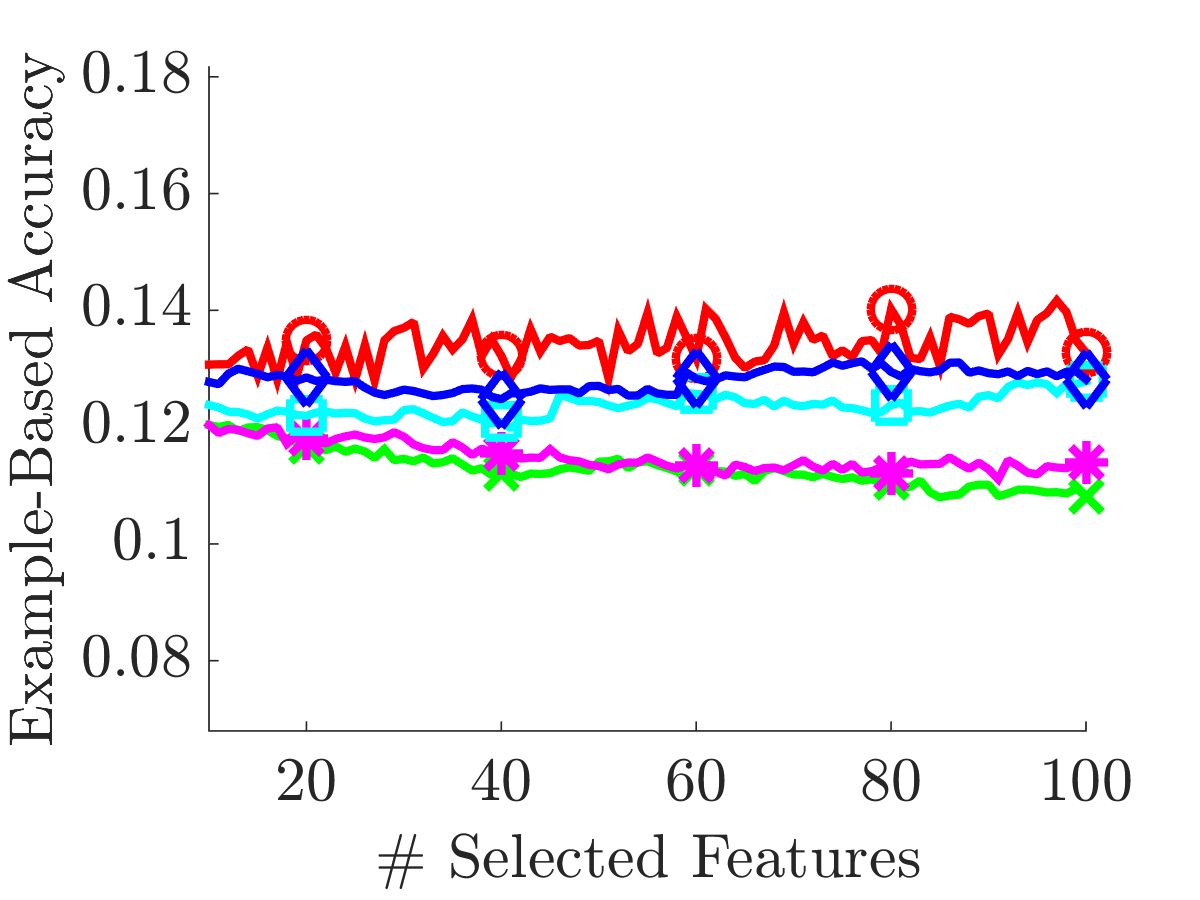}
\end{subfigure}
\begin{subfigure}[t]{0.28\textwidth}
\centering
\includegraphics[width=1.05\linewidth]{./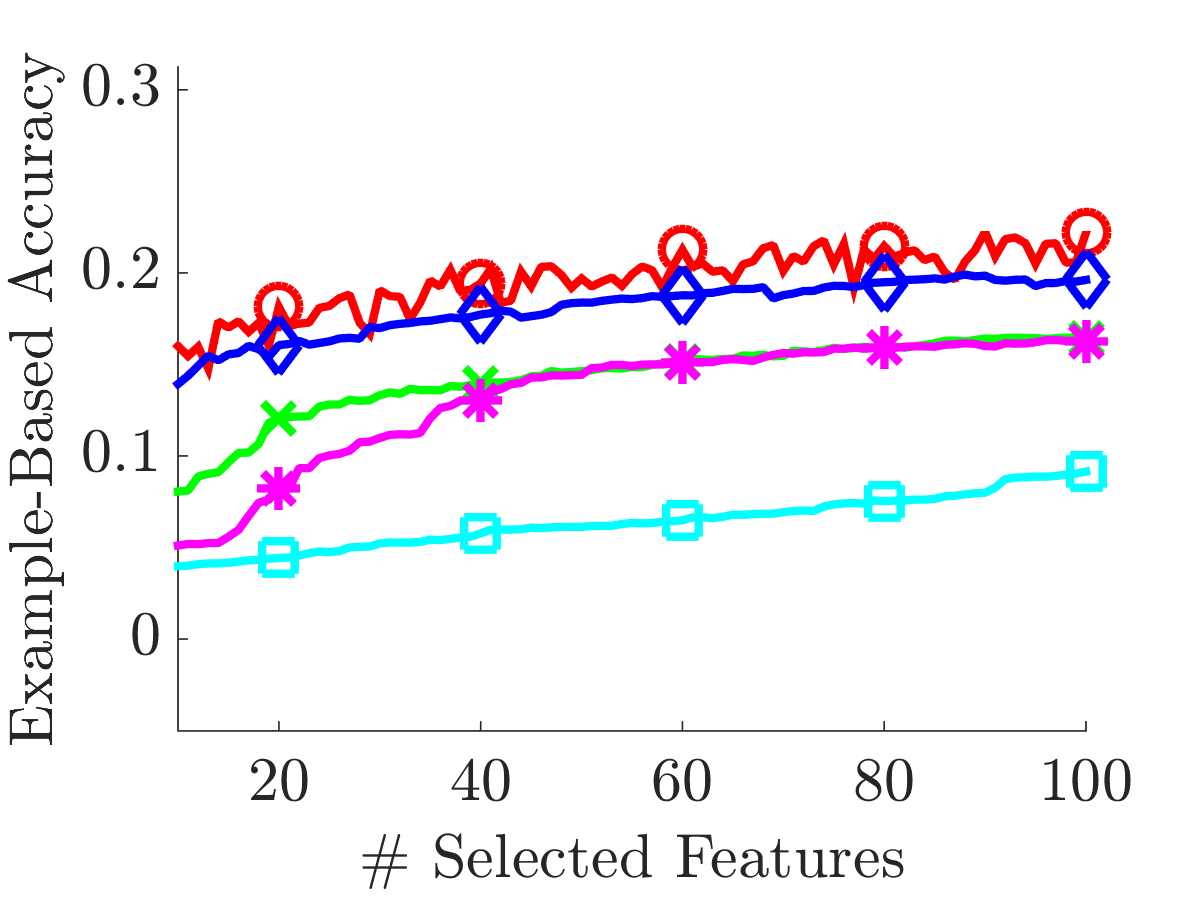}
\end{subfigure}
\begin{subfigure}[t]{0.28\textwidth}
\centering
\includegraphics[width=1.05\linewidth]{./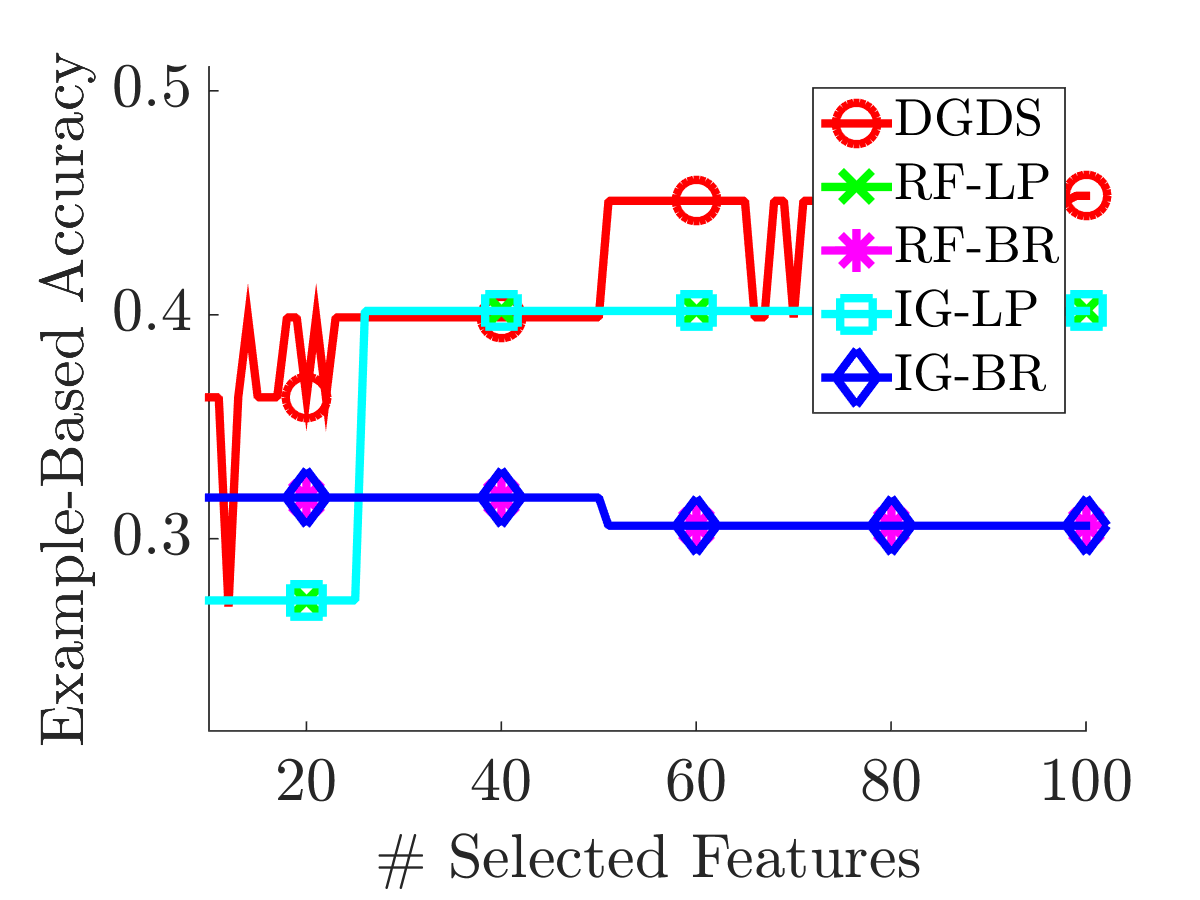}
\end{subfigure}
\vskip\baselineskip
\begin{subfigure}[t]{0.28\textwidth}
\centering
\includegraphics[width=1.05\linewidth]{./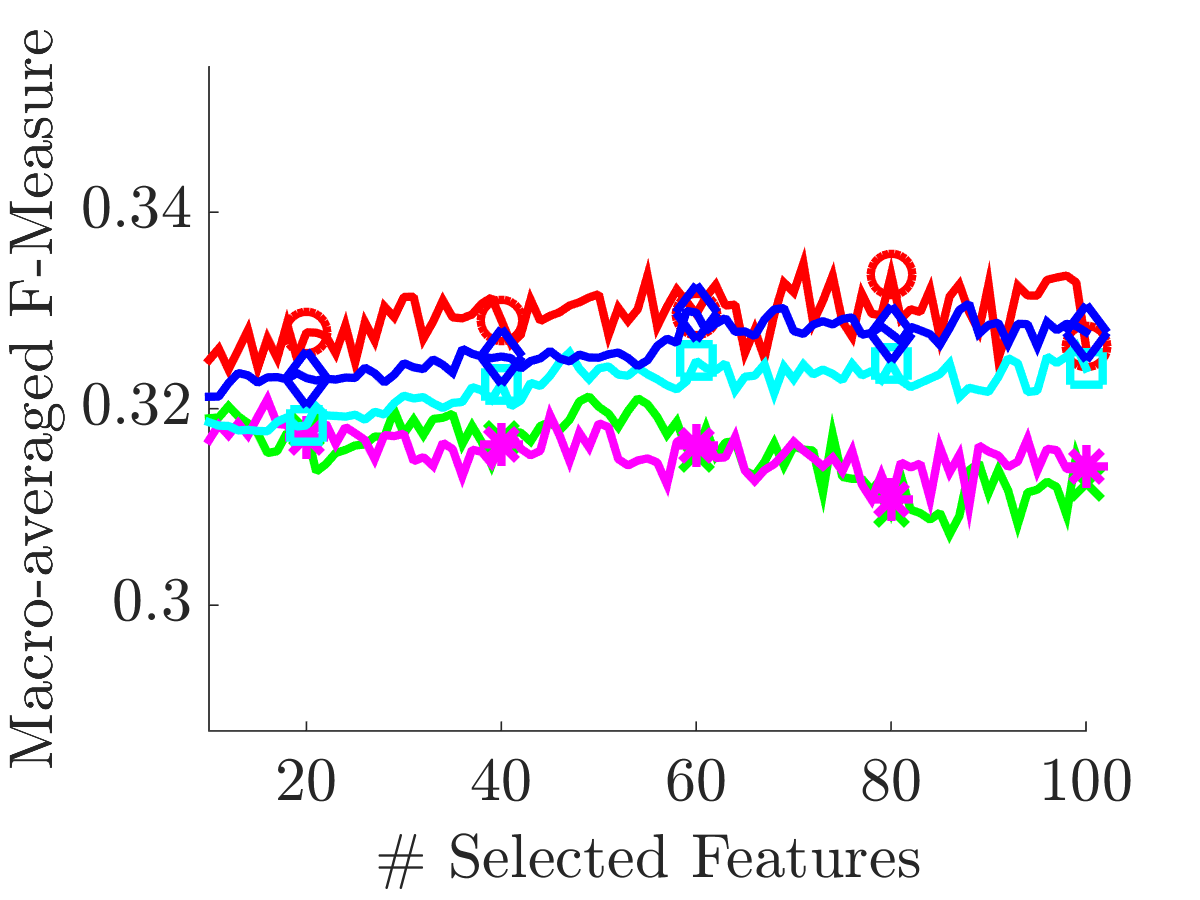}
\subcaption{Corel5k}
\end{subfigure}
\begin{subfigure}[t]{0.28\textwidth}
\centering
\includegraphics[width=1.05\linewidth]{./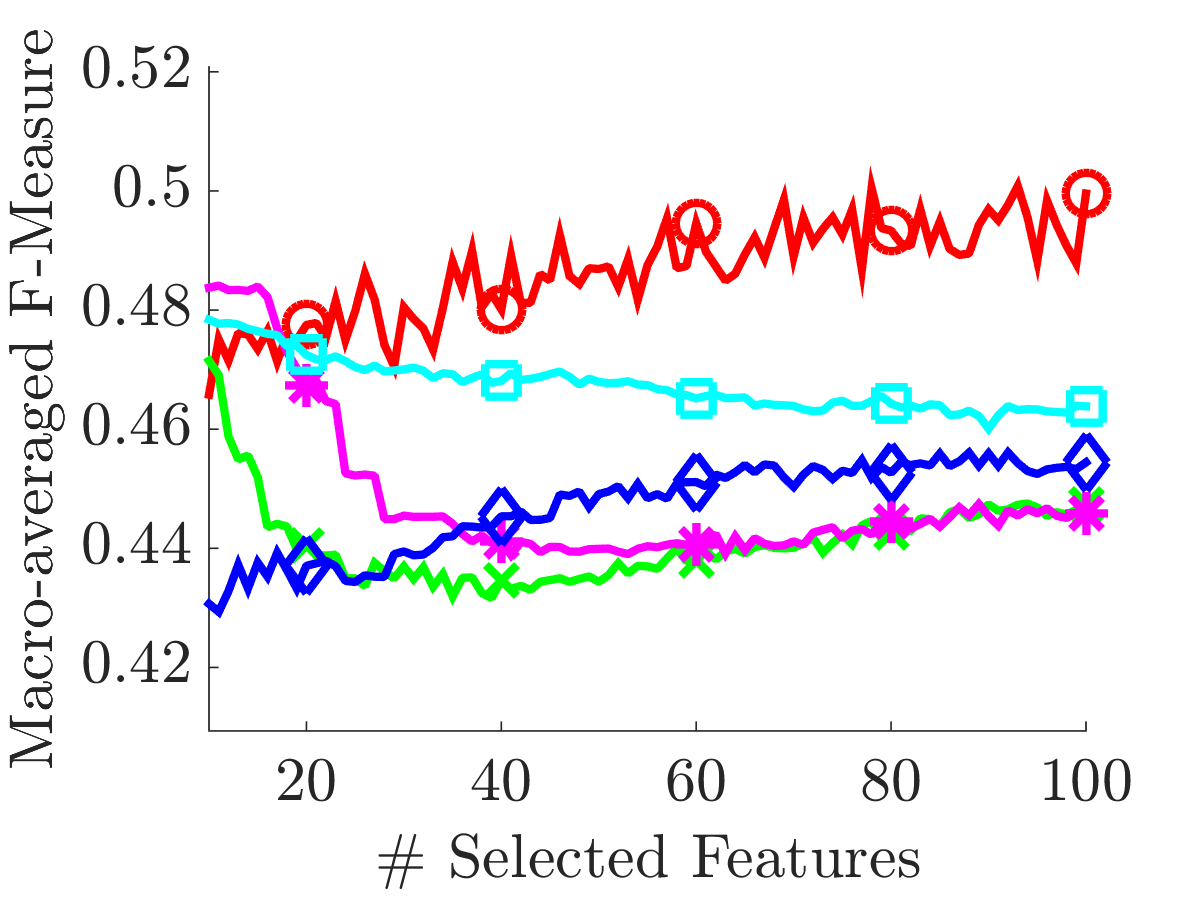}
\subcaption{Eurlex-ev}
\end{subfigure}
\begin{subfigure}[t]{0.28\textwidth}
\centering
\includegraphics[width=1.05\linewidth]{./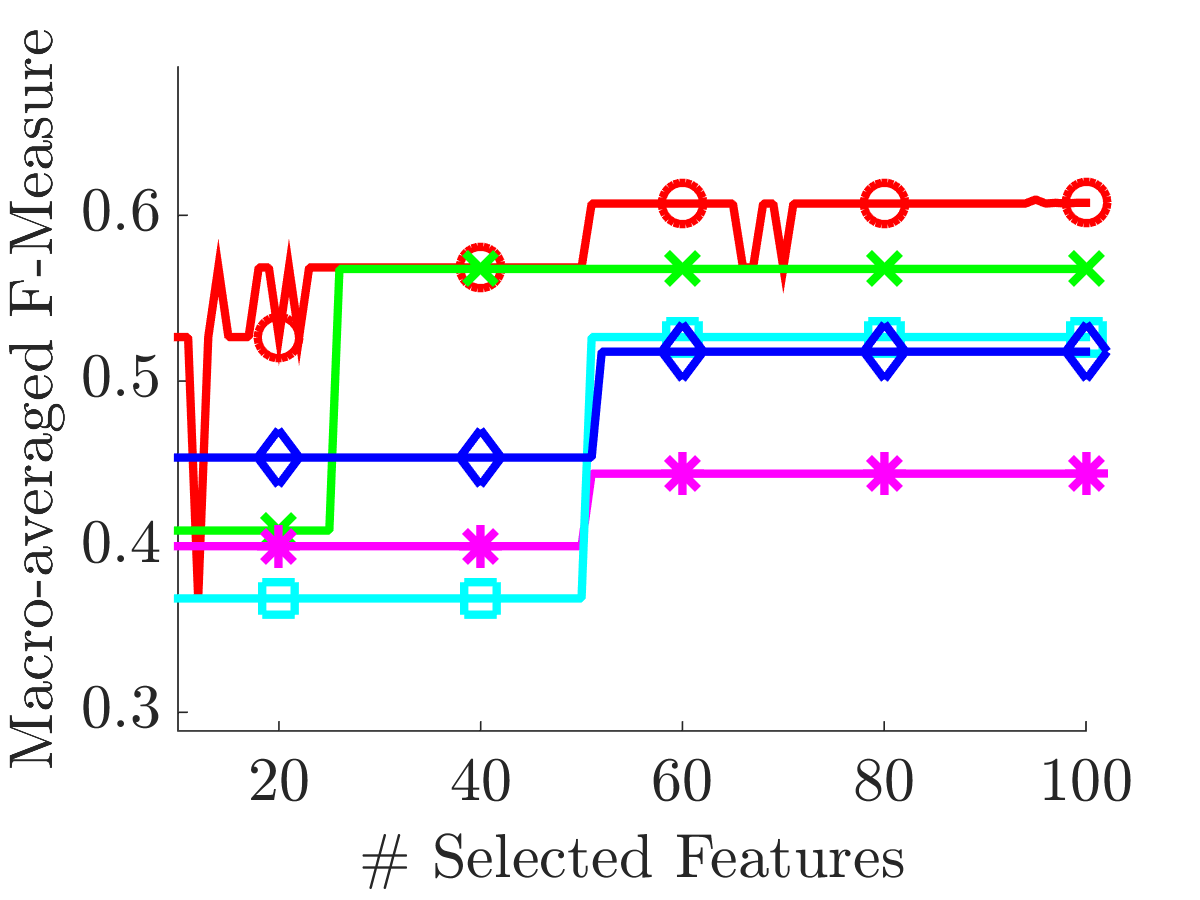}
\subcaption{Synthesized}
\end{subfigure}
\vskip\baselineskip
\caption{Comparison of proposed distributed method with centralized methods in the literature.}
\label{Fig:comparison2}
\end{figure*}

Let $n$ be the number of samples in the dataset, $L_i$ be the set of labels for sample $i$ that are $1$ in the dataset, and $L'_i$ be the set of labels for sample $i$ that we predicted to be $1$. Then the subset accuracy of our learning method is equal to
\[
\frac{1}{n}\sum_{i=1}^n \mathbb{I}(L_i, L'_i)
\]
where $\mathbb{I}(,.,)$ is a 0, 1 indicator function and is equal to 1 when set $L_i$ is equal to the set $L'_i$, and it is 0 otherwise. Example-based accuracy is equal to
\[
\frac{1}{n}\sum_{i=1}^n \frac{|L_i\cap L'_i|}{|L_i\cup L'_i|}.
\]
Example-based F-measure is equal to
\[
\frac{1}{n}\sum_{i=1}^n \frac{2|L_i\cap L'_i|}{|L_i|+|L'_i|}.
\]
These evaluation measures are example-based. Micro-averaged F-measure and Macro-averaged F-measure are two label-based measures for multi-label classification. Let $t$ be the number of labels in the dataset, $E_i$ be the set of examples that their $i$'th label is equal to 1, and $E'_i$ be the set of example that we predicted their $i$'th labels to be 1. Then Micro-averaged F-measure is equal to
\[
\frac{1}{t}\sum_{i=1}^{t} \frac{2|E_i\cap E'_i|}{|E_i|+|E'_i|}.
\]
Macro-averaged F-measure is equal to
\[
\frac{2\sum_{i=1}^{t}|E_i\cap E'_i|}{\sum_{i=1}^{t}|E_i|+\sum_{i=1}^{t}|E'_i|}.
\]
%\pagebreak
\section{Appendix D}
\label{otherDatasets}

Results of example-based accuracy
and macro-average F-measure comparison for Corel5k, Eurlex-ev, and Synthesized datasets are included in are shown in Figure~\ref{Fig:comparison2}. Specifications of three other datasets are shown in Table~\ref{table:dataset2} and the performance of our method on these datasets is compared to centralized methods in Figure~\ref{Fig:comparison3}.

\begin{table}[h]
%\footnotesize
    \centering
    \caption{Specifications of other datasets.}    
    \label{table:dataset2}
    \resizebox{\columnwidth}{!}{%
    \begin{tabular}{ccccc}
        \toprule Dataset Name & \# Features & \# Instances & \# Labels & Reference \\
        \midrule CAL500 & 68 & 502 & 174 & \citep{TurnbullBTL08} \\
        \midrule Delicious & 500 & 16,105 & 983 & \citep{tsoumakas2008effective} \\
        \midrule Scene & 294 & 2407 & 6 & \citep{BoutellLSB04} \\
        \bottomrule
        \end{tabular}}
\end{table}

\begin{figure*}[t]
\centering
%\advance\leftskip-0.5cm
\begin{subfigure}[t]{0.28\textwidth}
\centering
\includegraphics[width=1.05\linewidth]{./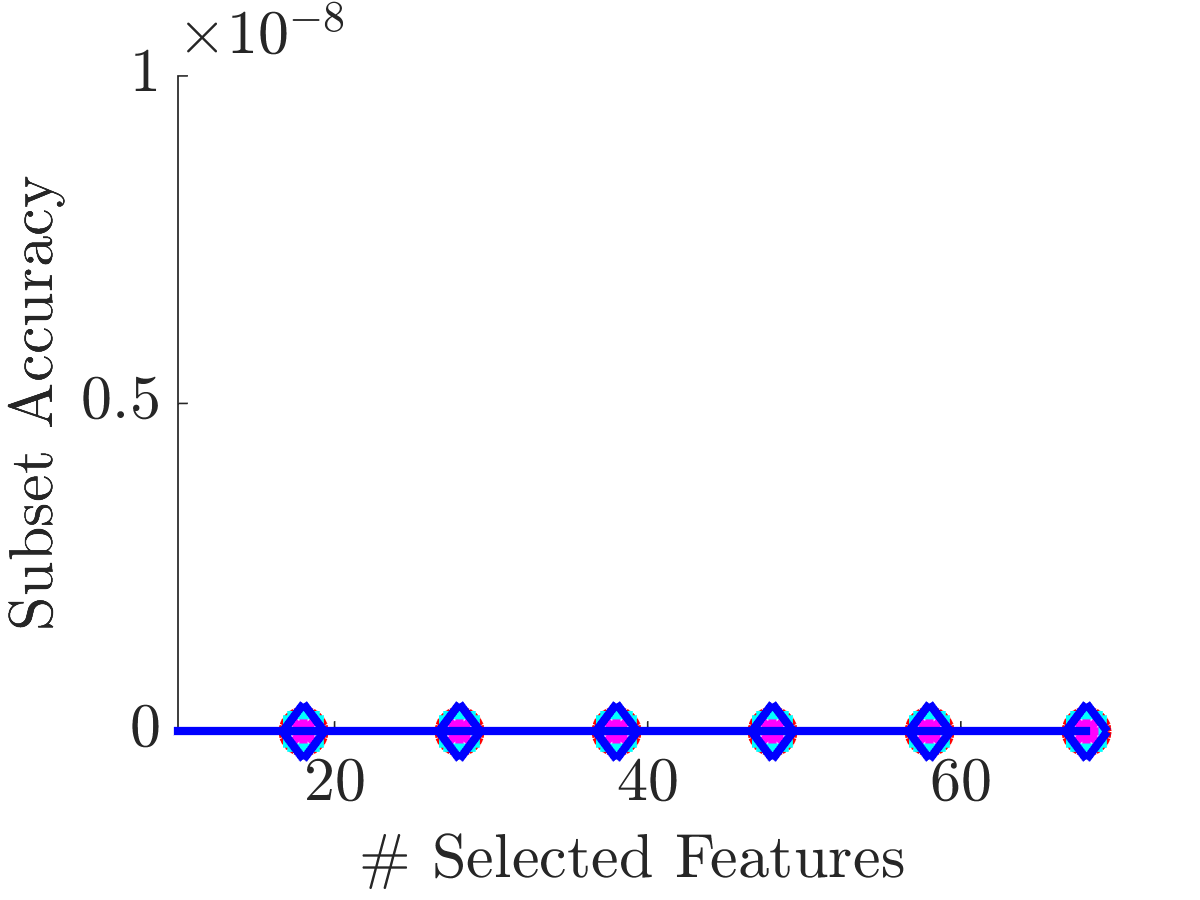}
\end{subfigure}
\begin{subfigure}[t]{0.28\textwidth}
\centering
\includegraphics[width=1.05\linewidth]{./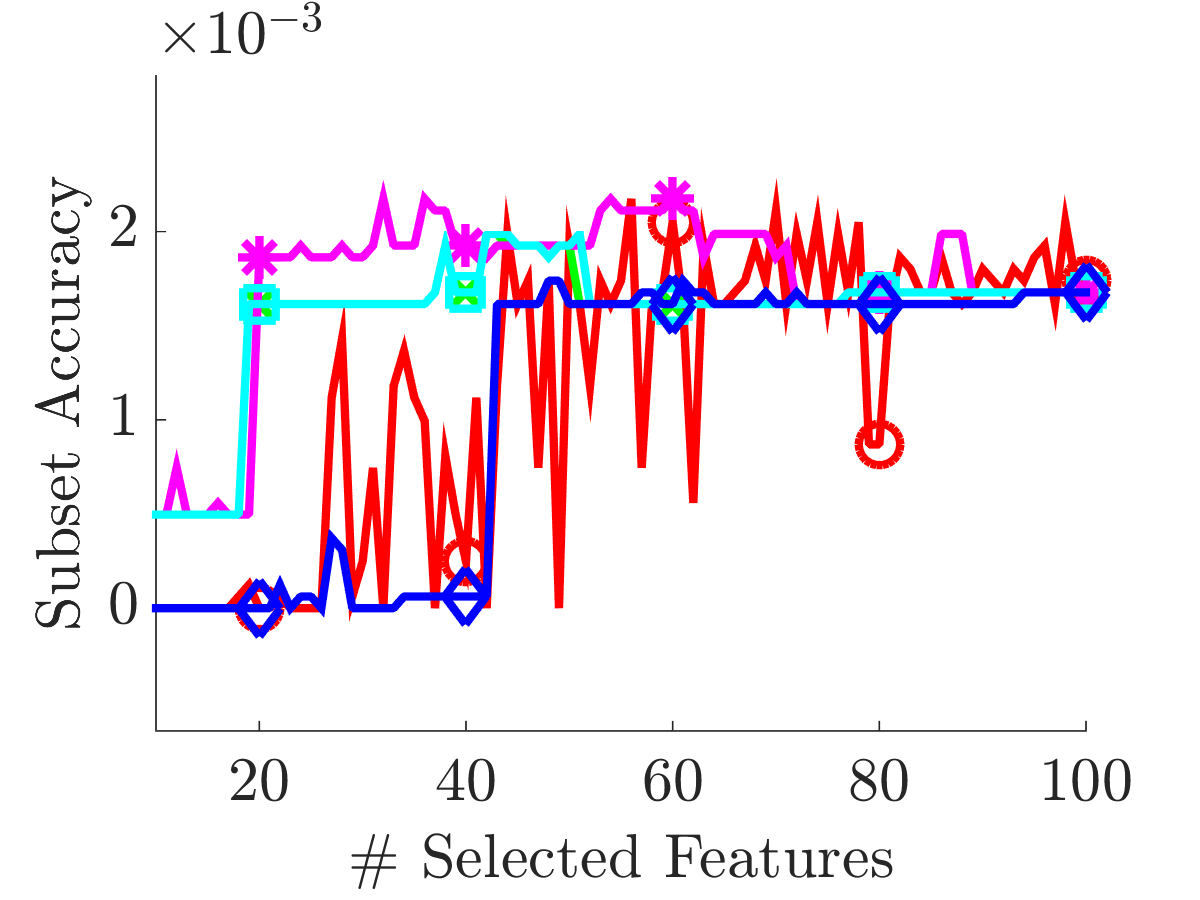}
\end{subfigure}
\begin{subfigure}[t]{0.28\textwidth}
\centering
\includegraphics[width=1.05\linewidth]{./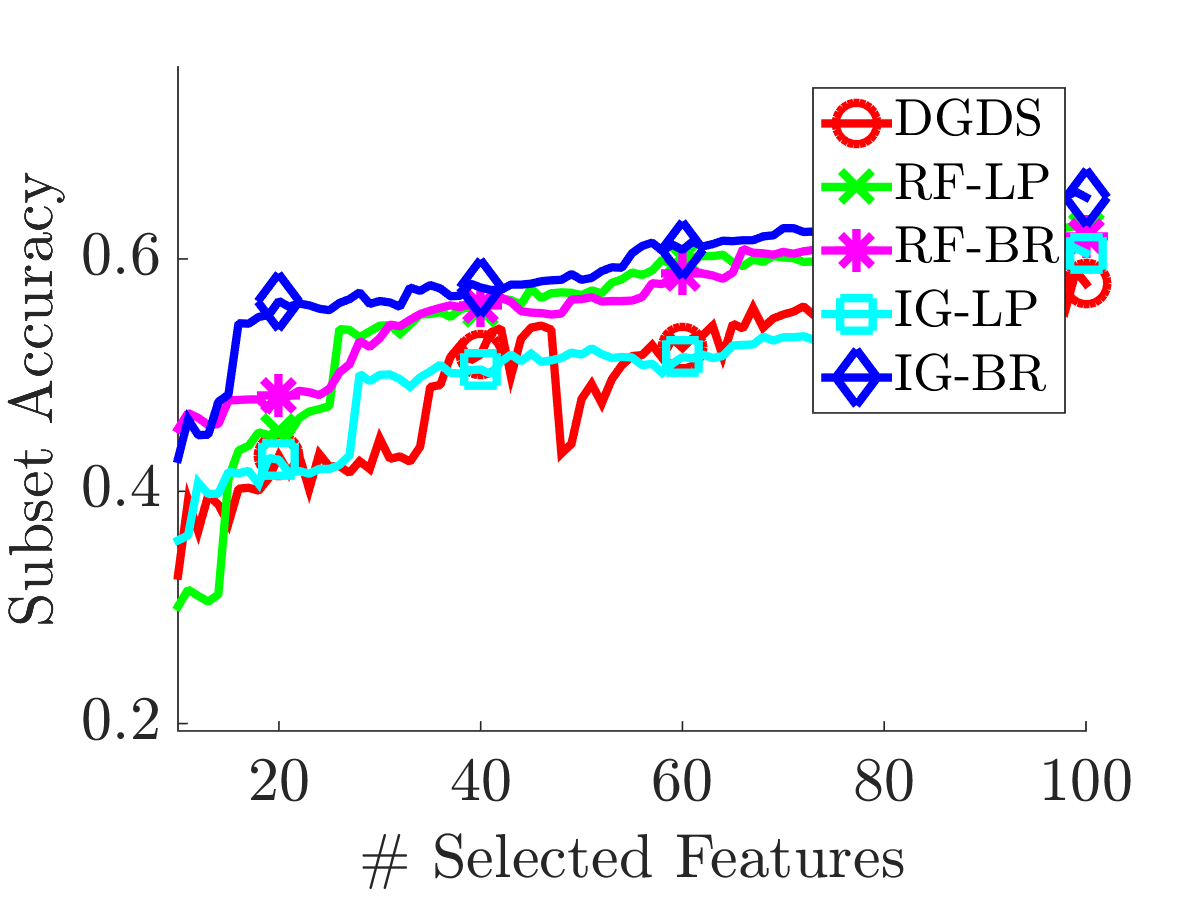}
\end{subfigure}
\vskip\baselineskip
\begin{subfigure}[t]{0.28\textwidth}
\centering
\includegraphics[width=1.05\linewidth]{./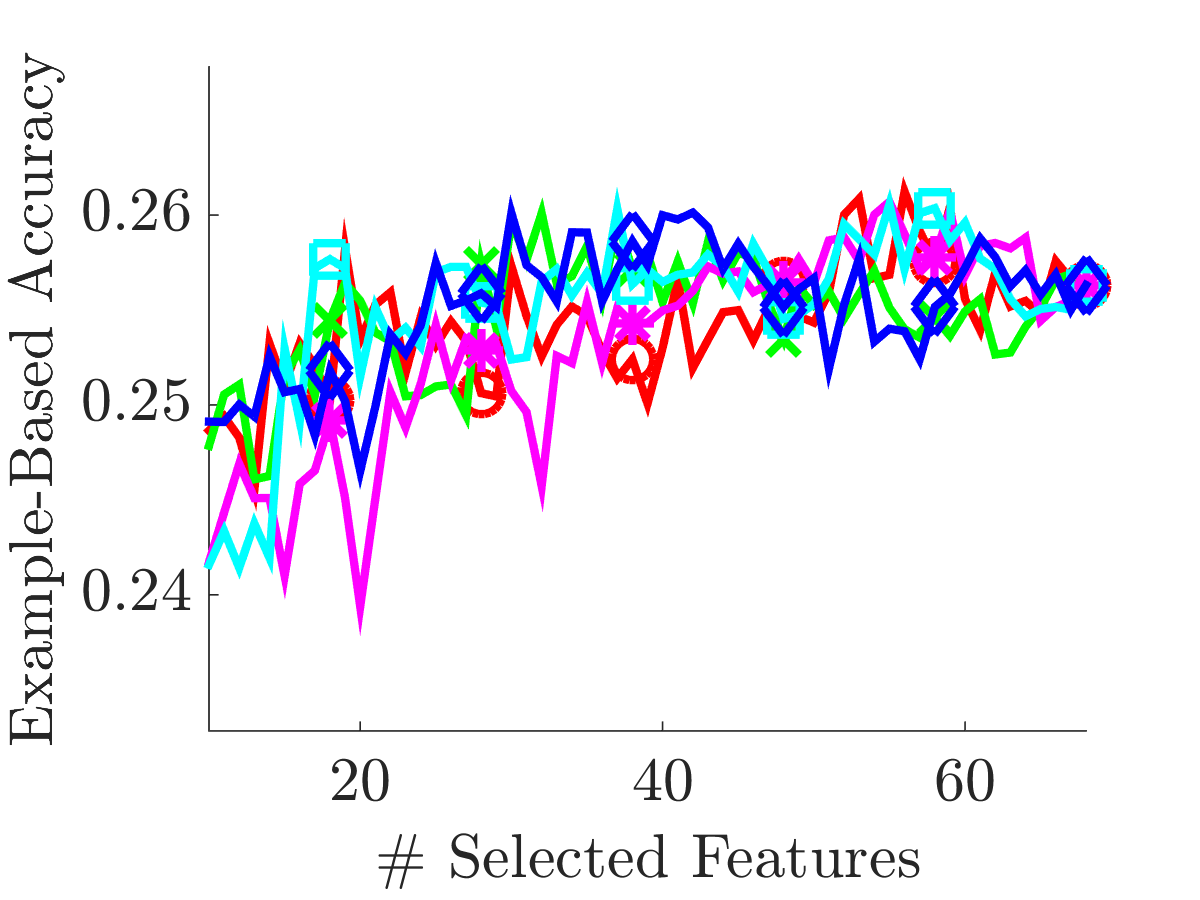}
\end{subfigure}
\begin{subfigure}[t]{0.28\textwidth}
\centering
\includegraphics[width=1.05\linewidth]{./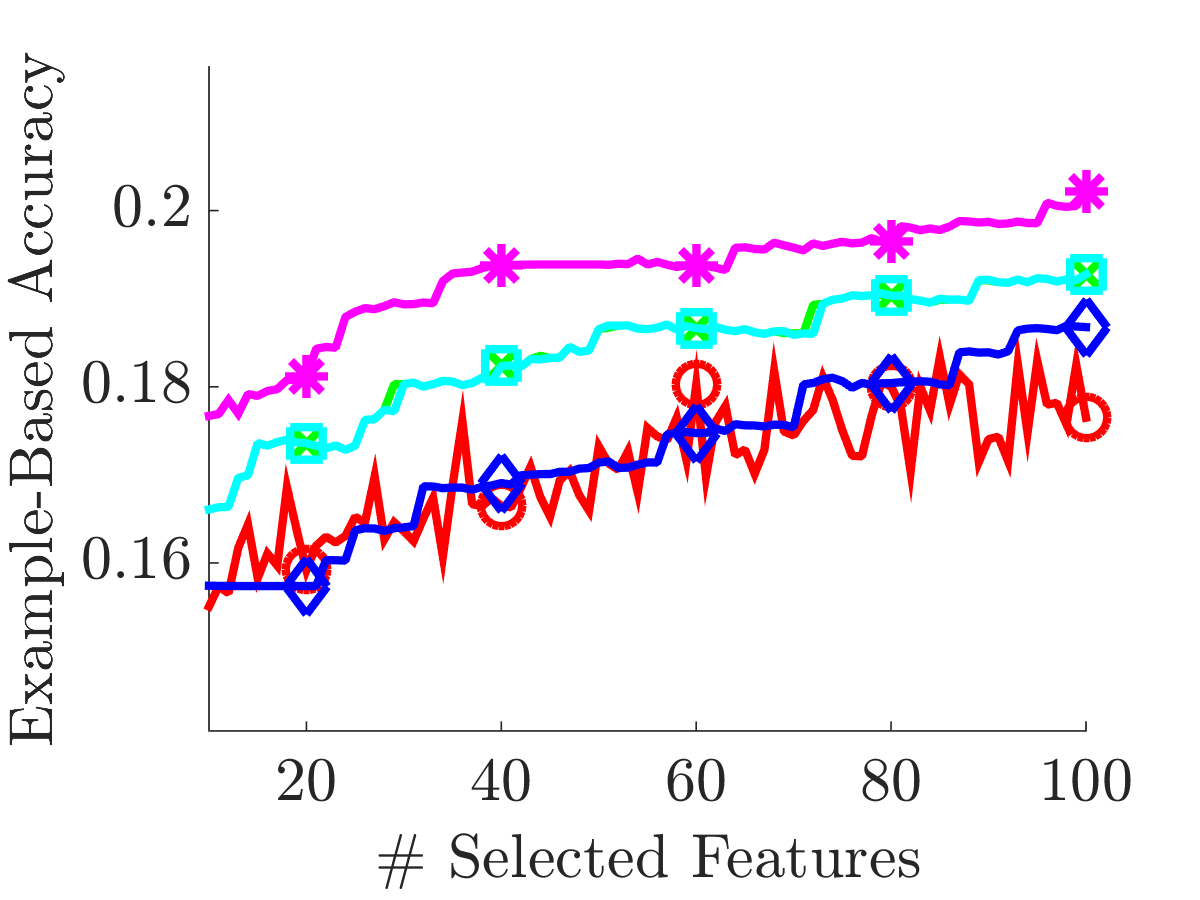}
\end{subfigure}
\begin{subfigure}[t]{0.28\textwidth}
\centering
\includegraphics[width=1.05\linewidth]{./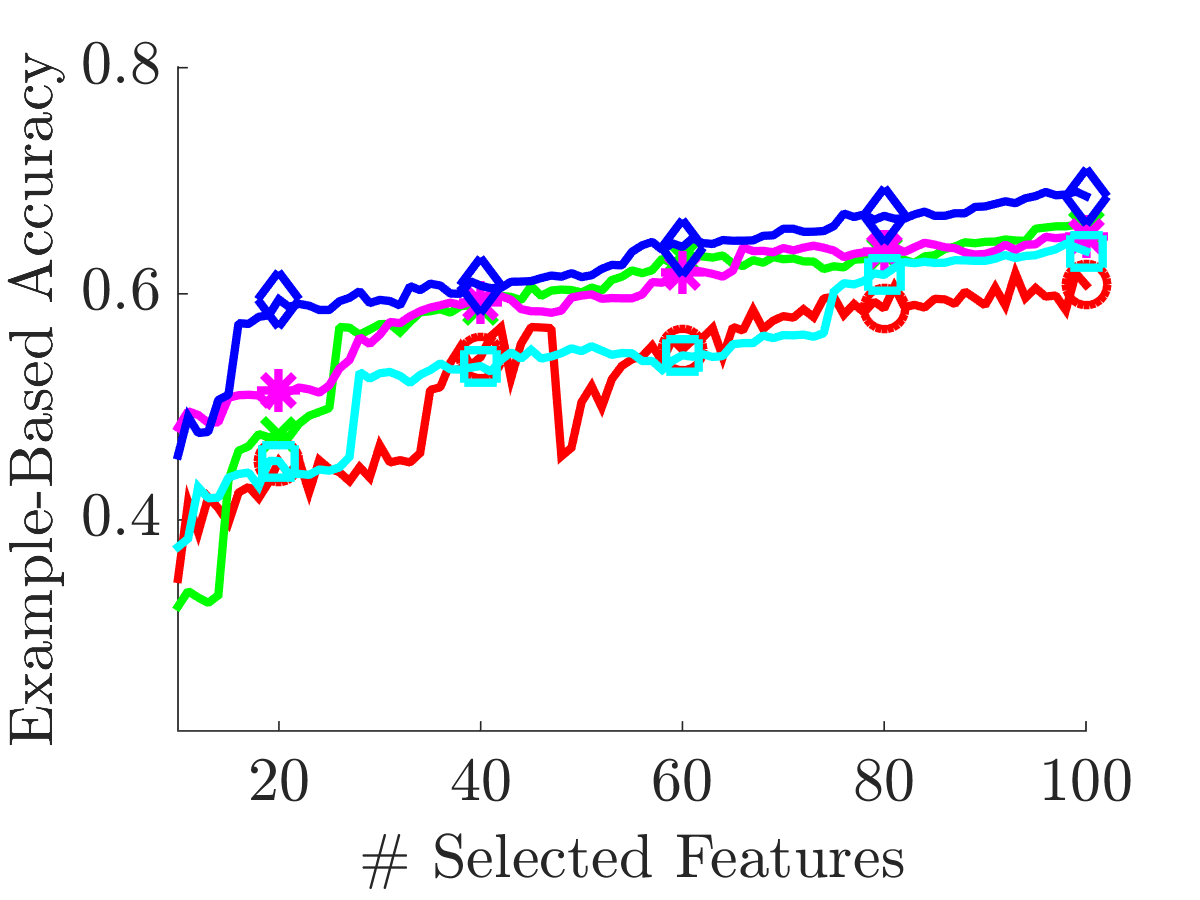}
\end{subfigure}
\vskip\baselineskip
\begin{subfigure}[t]{0.28\textwidth}
\centering
\includegraphics[width=1.05\linewidth]{./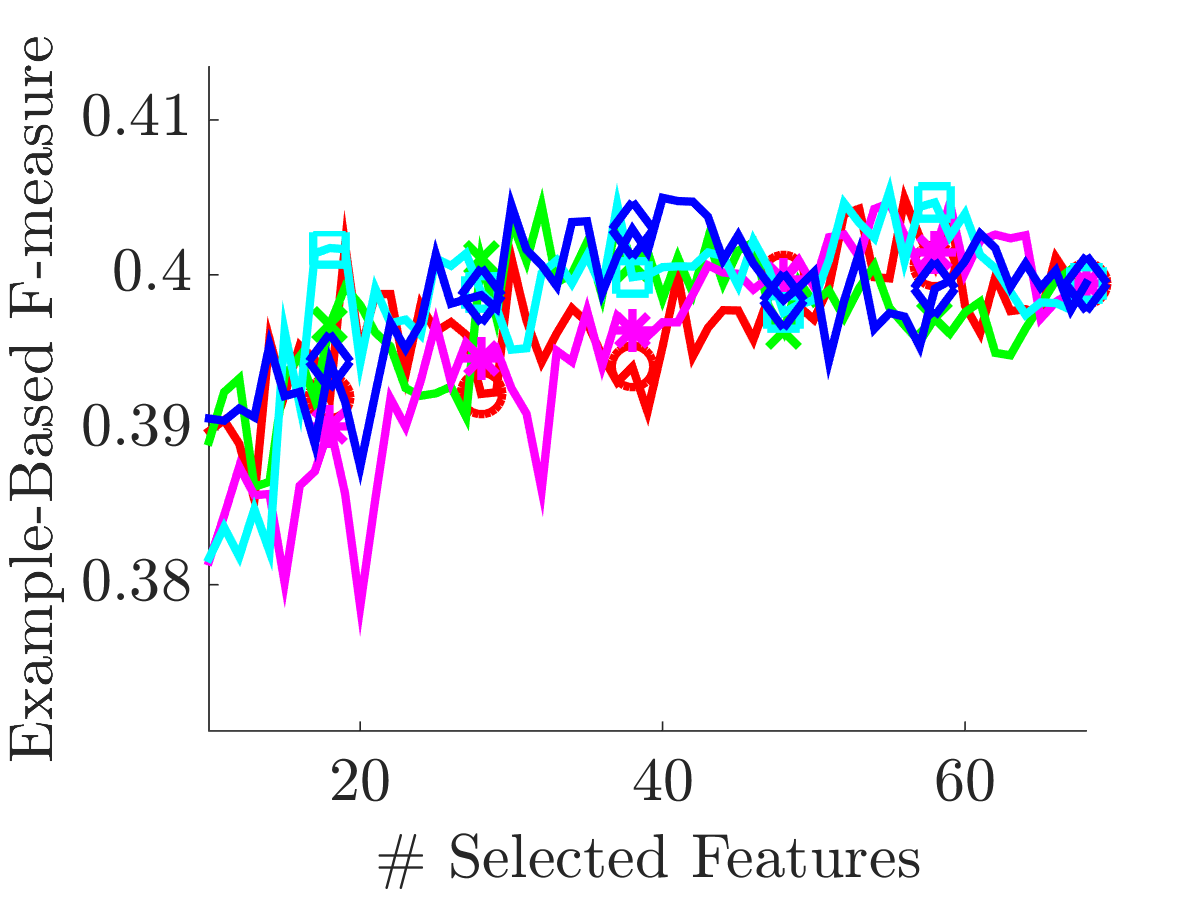}
\end{subfigure}
\begin{subfigure}[t]{0.28\textwidth}
\centering
\includegraphics[width=1.05\linewidth]{./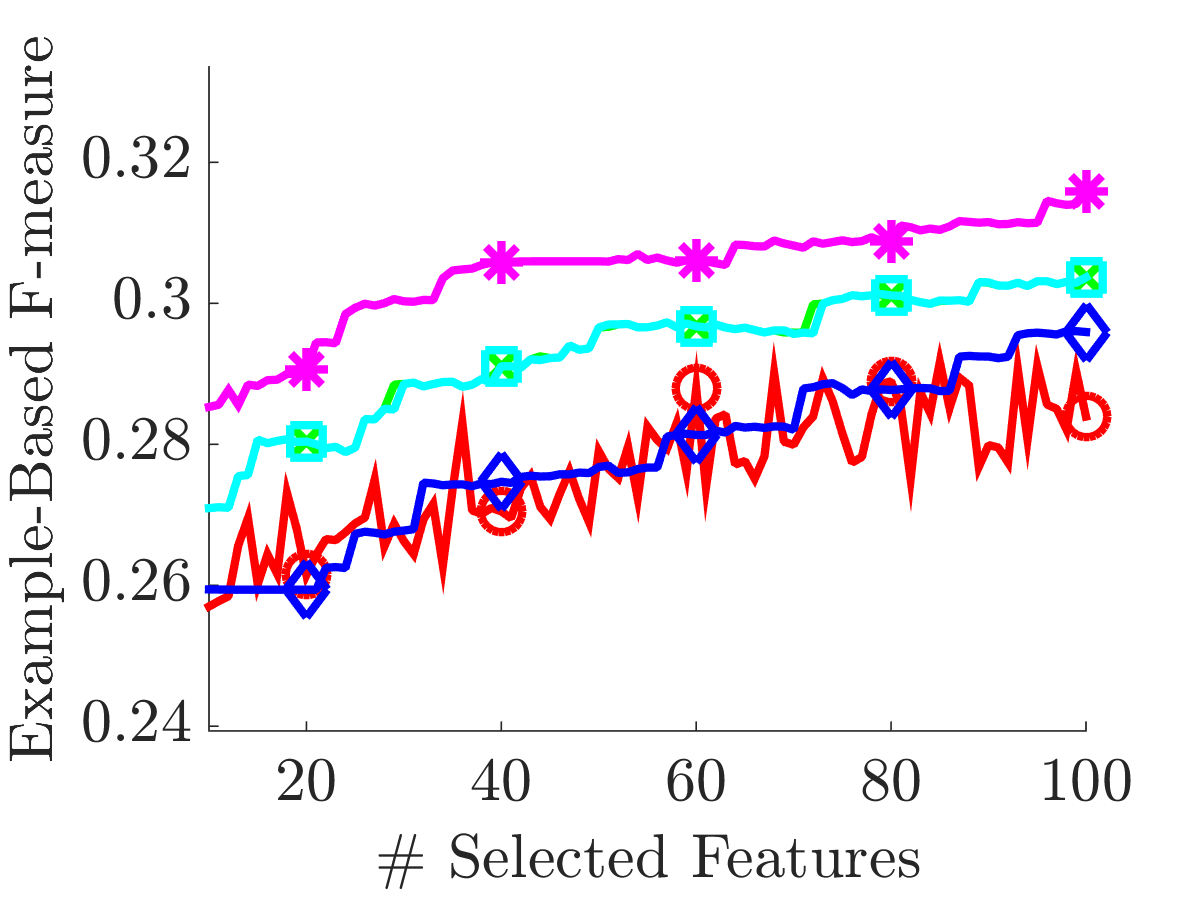}
\end{subfigure}
\begin{subfigure}[t]{0.28\textwidth}
\centering
\includegraphics[width=1.05\linewidth]{./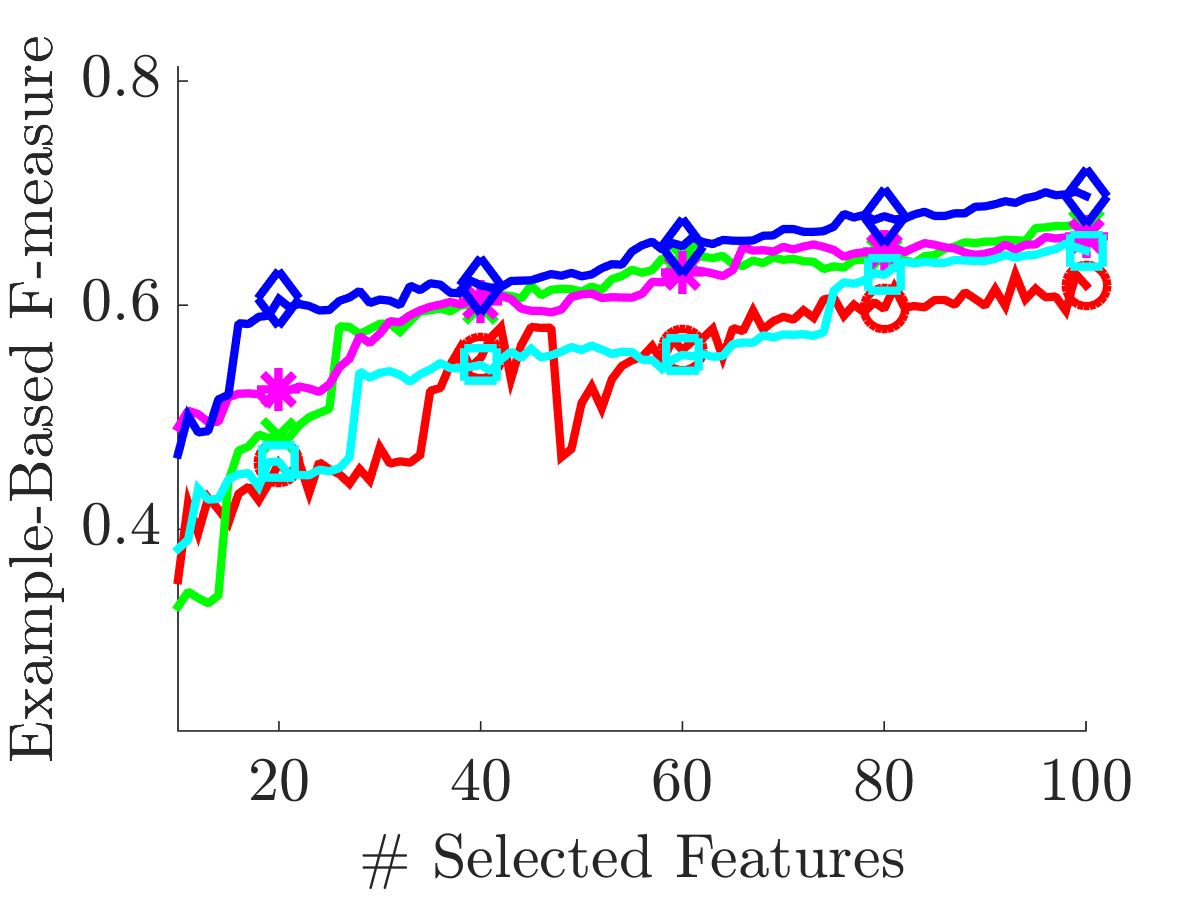}
\end{subfigure}
\vskip\baselineskip
\begin{subfigure}[t]{0.28\textwidth}
\centering
\includegraphics[width=1.05\linewidth]{./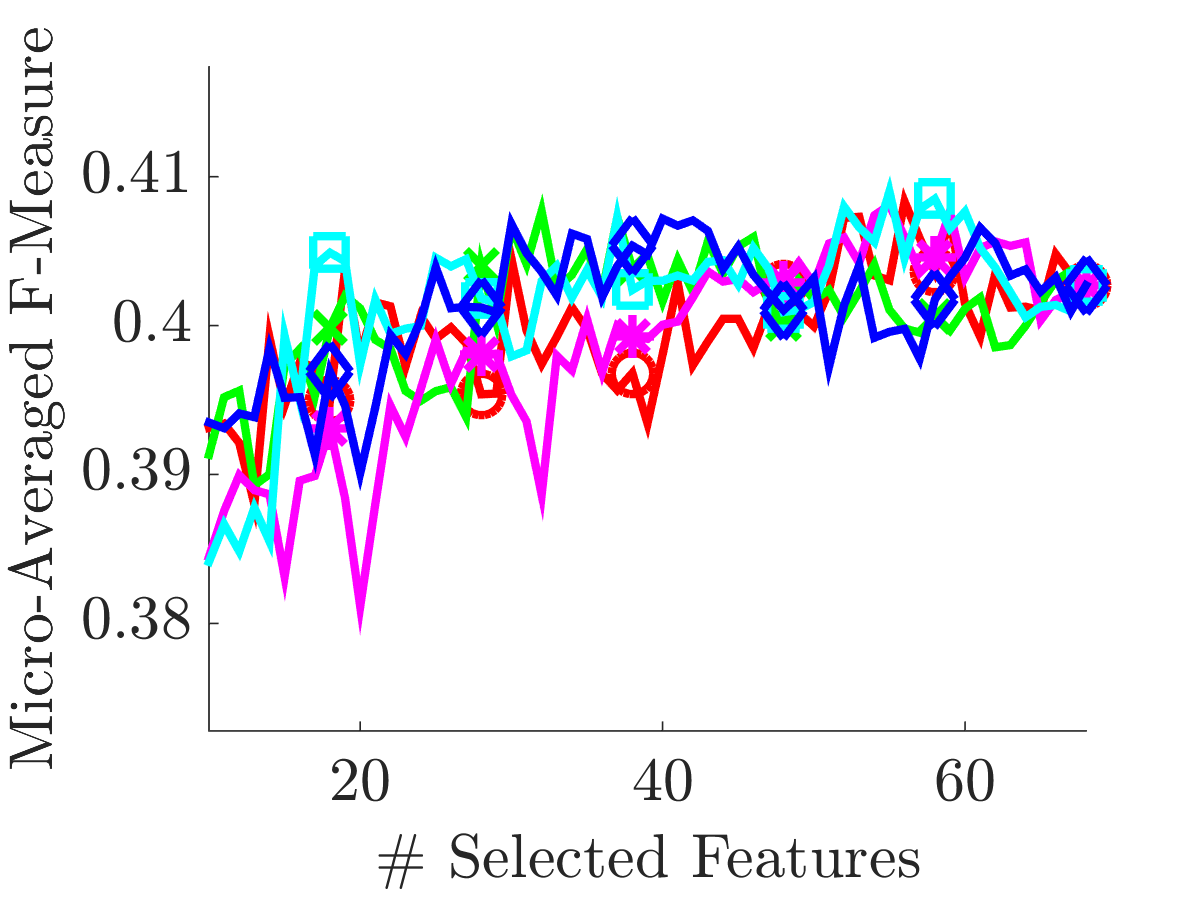}
\end{subfigure}
\begin{subfigure}[t]{0.28\textwidth}
\centering
\includegraphics[width=1.05\linewidth]{./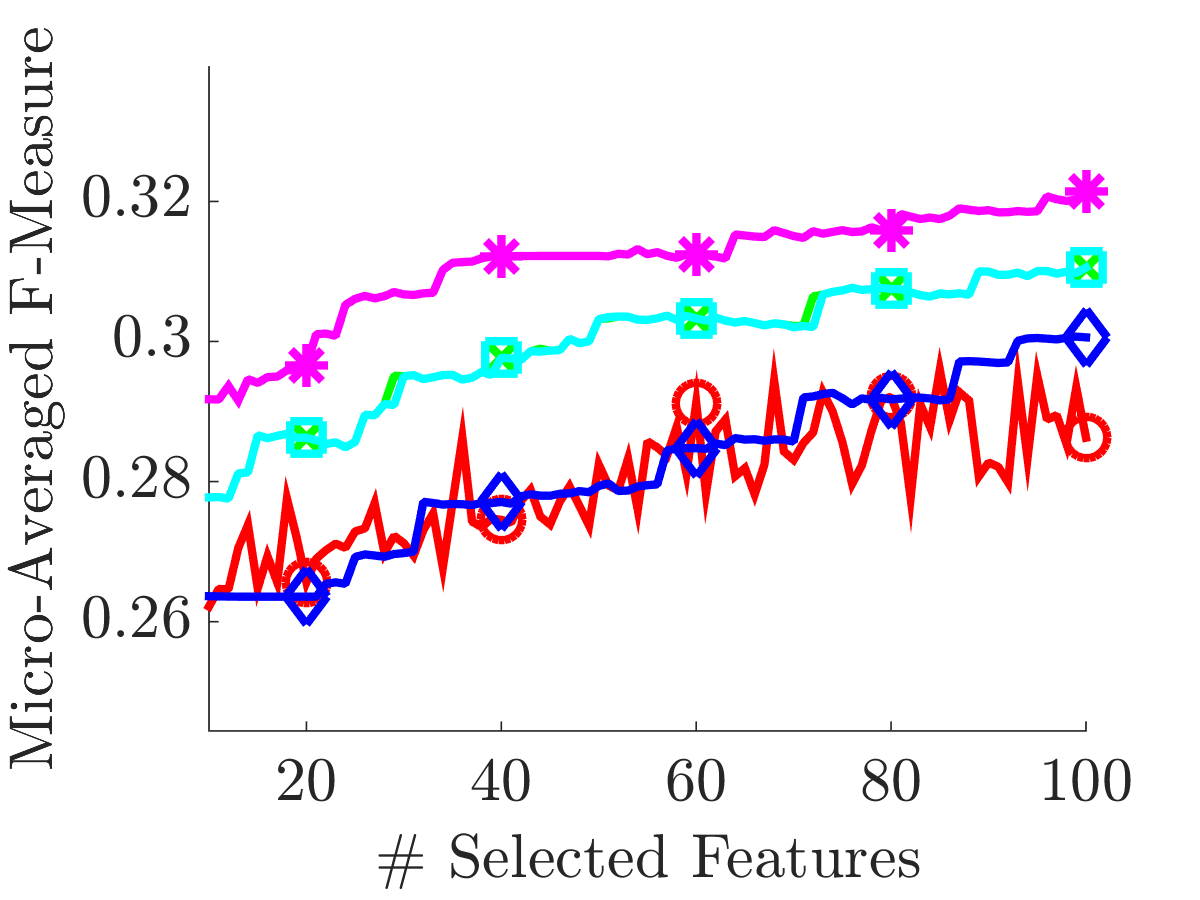}
\end{subfigure}
\begin{subfigure}[t]{0.28\textwidth}
\centering
\includegraphics[width=1.05\linewidth]{./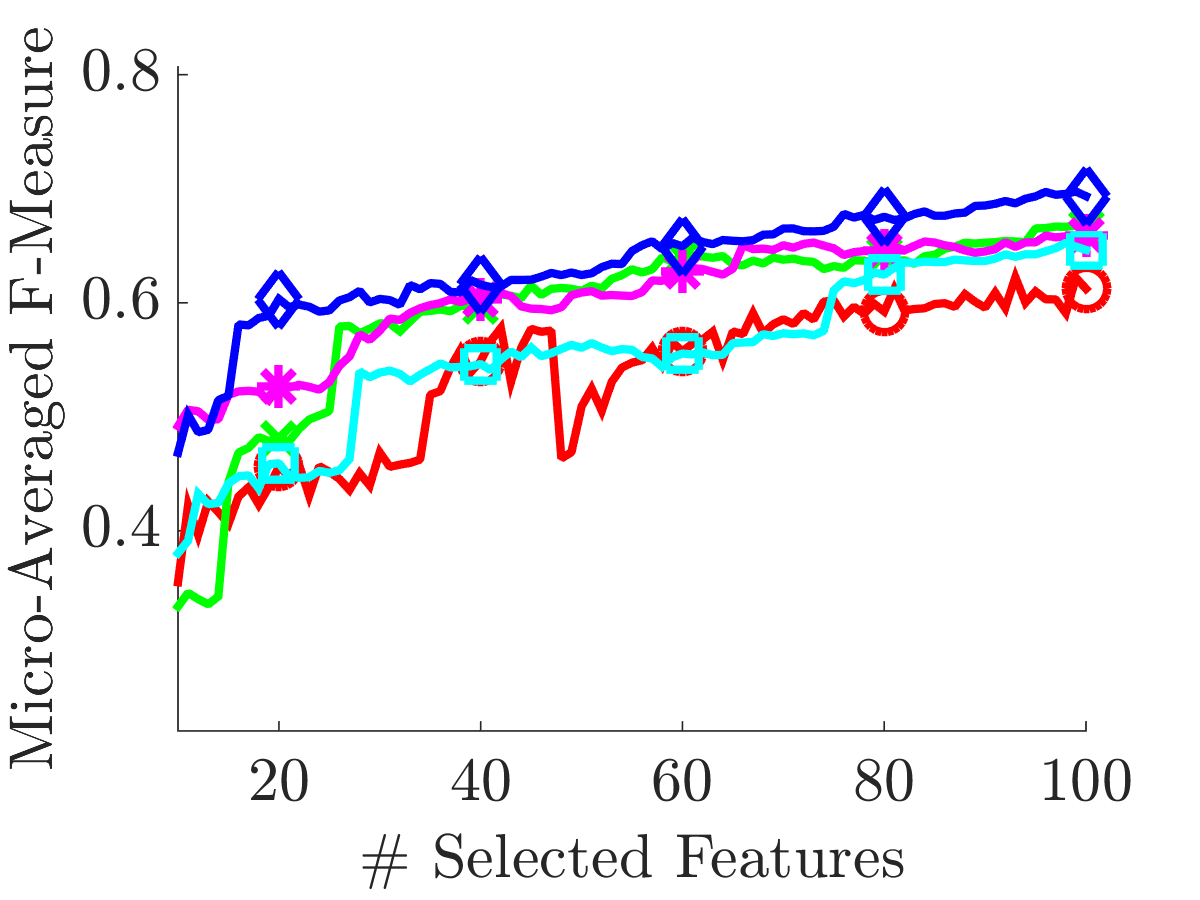}
\end{subfigure}
\vskip\baselineskip
\begin{subfigure}[t]{0.28\textwidth}
\centering
\includegraphics[width=1.05\linewidth]{./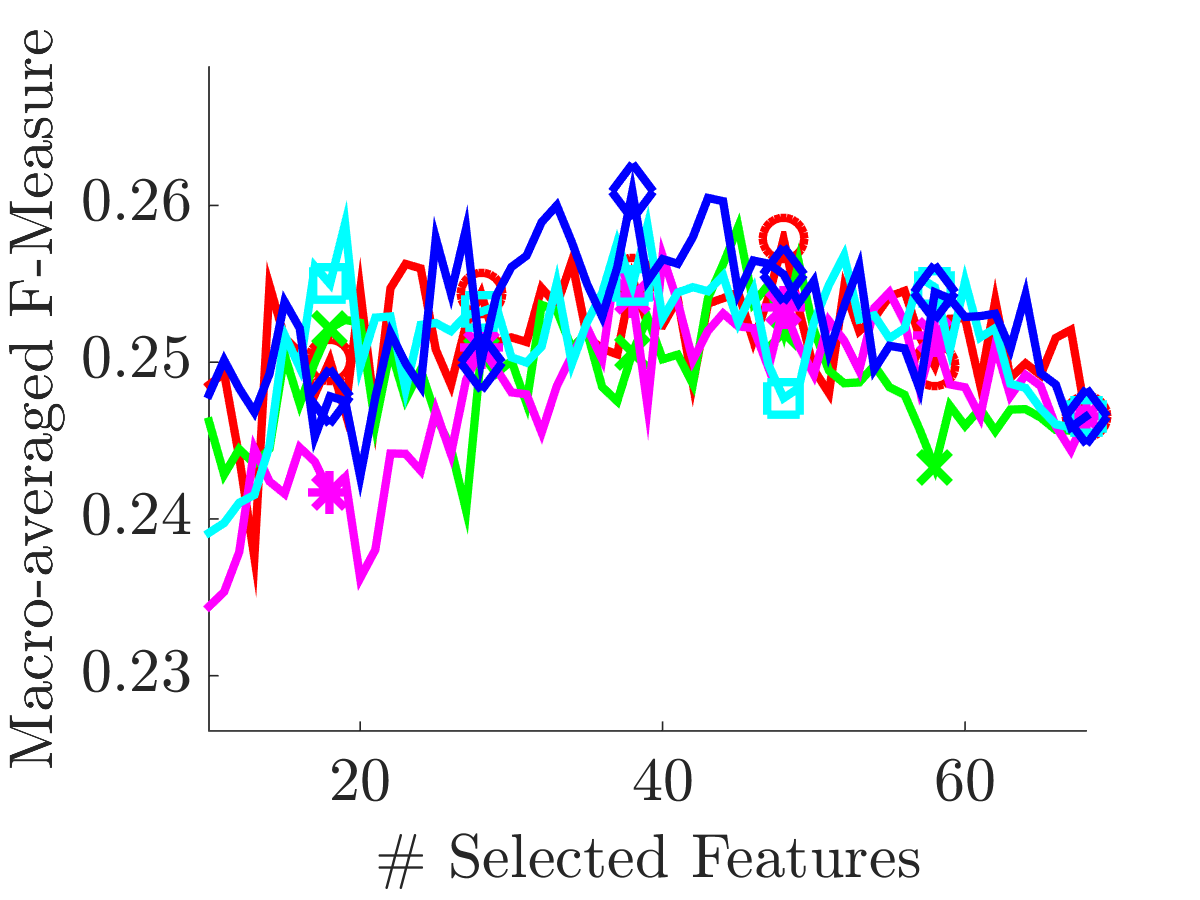}
\subcaption{CAL500}
\end{subfigure}
\begin{subfigure}[t]{0.28\textwidth}
\centering
\includegraphics[width=1.05\linewidth]{./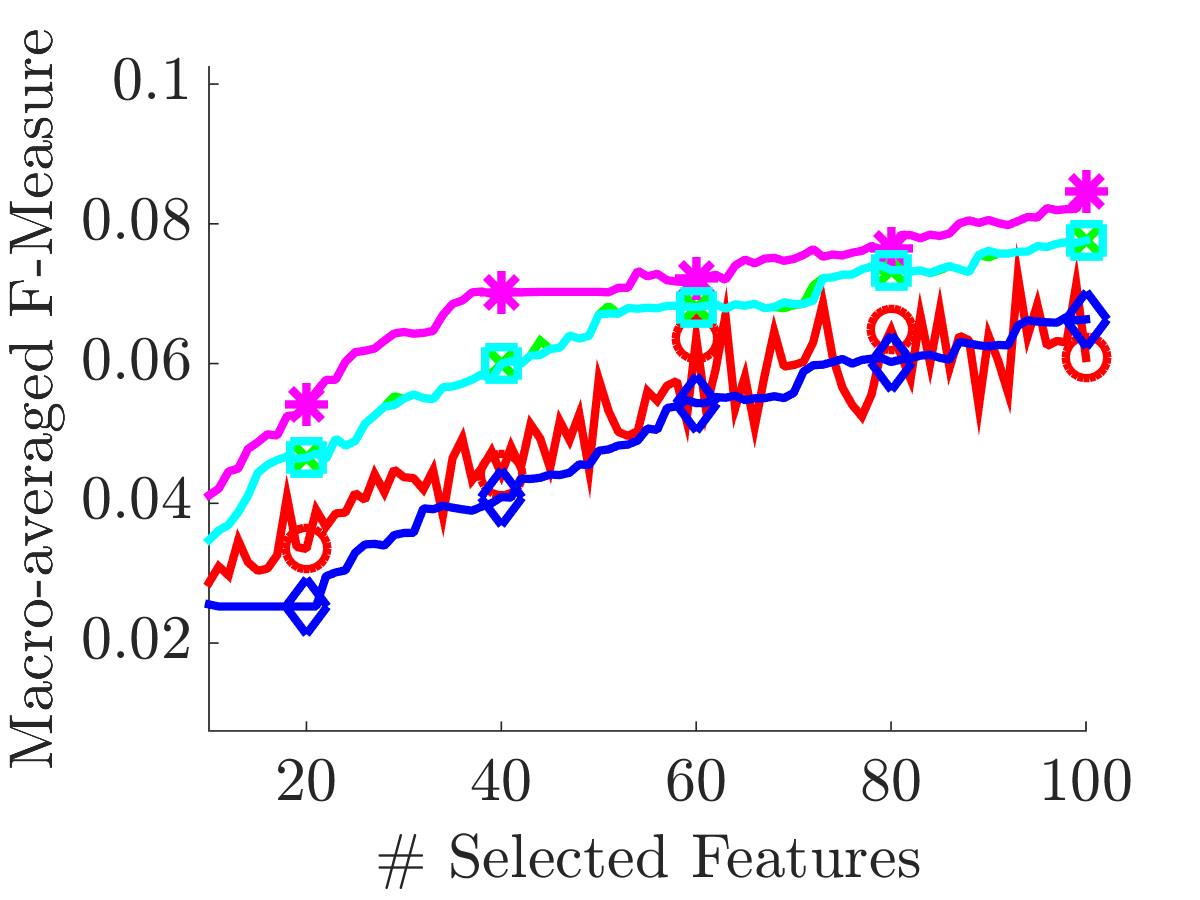}
\subcaption{Delicious}
\end{subfigure}
\begin{subfigure}[t]{0.28\textwidth}
\centering
\includegraphics[width=1.05\linewidth]{./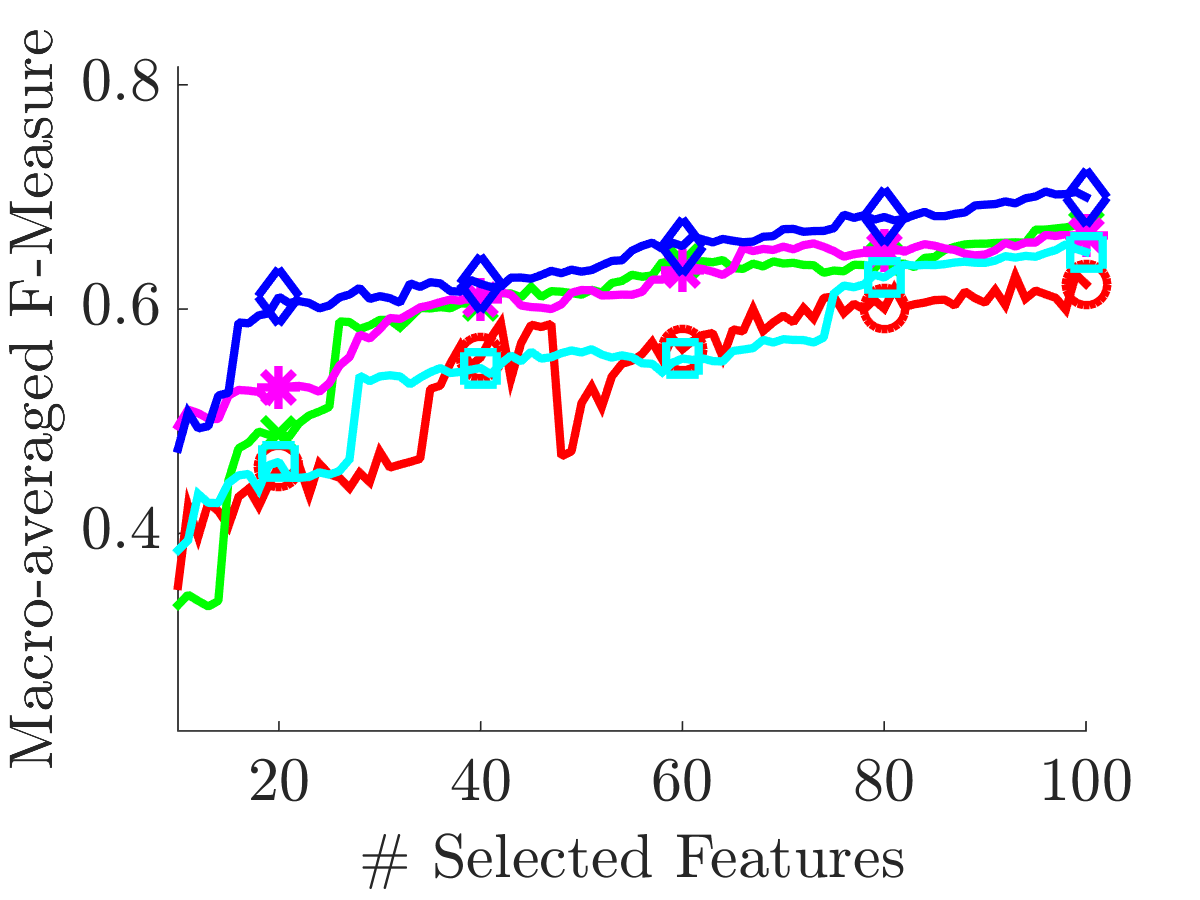}
\subcaption{Scene}
\end{subfigure}
\vskip\baselineskip
\caption{Comparison of proposed distributed method with centralized methods in the literature.}
\label{Fig:comparison3}
\end{figure*}
%\pagebreak
%\newpage
%\pagebreak
\section{Appendix E}
\label{centVSdist}
The performance of our distributed and centralized methods are compared in Figure~\ref{Fig:comparison4}.

\begin{figure*}[t]
\centering
%\advance\leftskip-0.5cm
\begin{subfigure}[t]{0.28\textwidth}
\centering
\includegraphics[width=1.05\linewidth]{./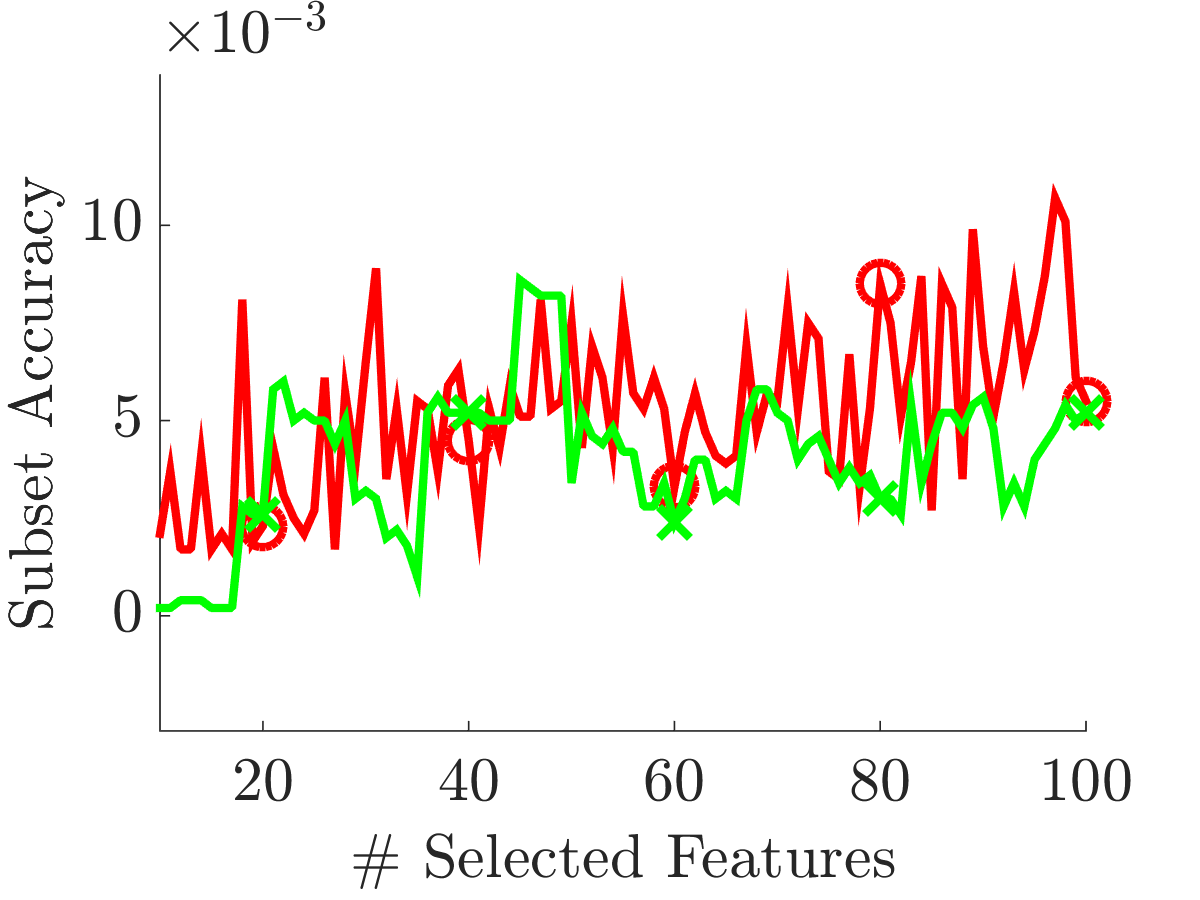}
\end{subfigure}
\begin{subfigure}[t]{0.28\textwidth}
\centering
\includegraphics[width=1.05\linewidth]{./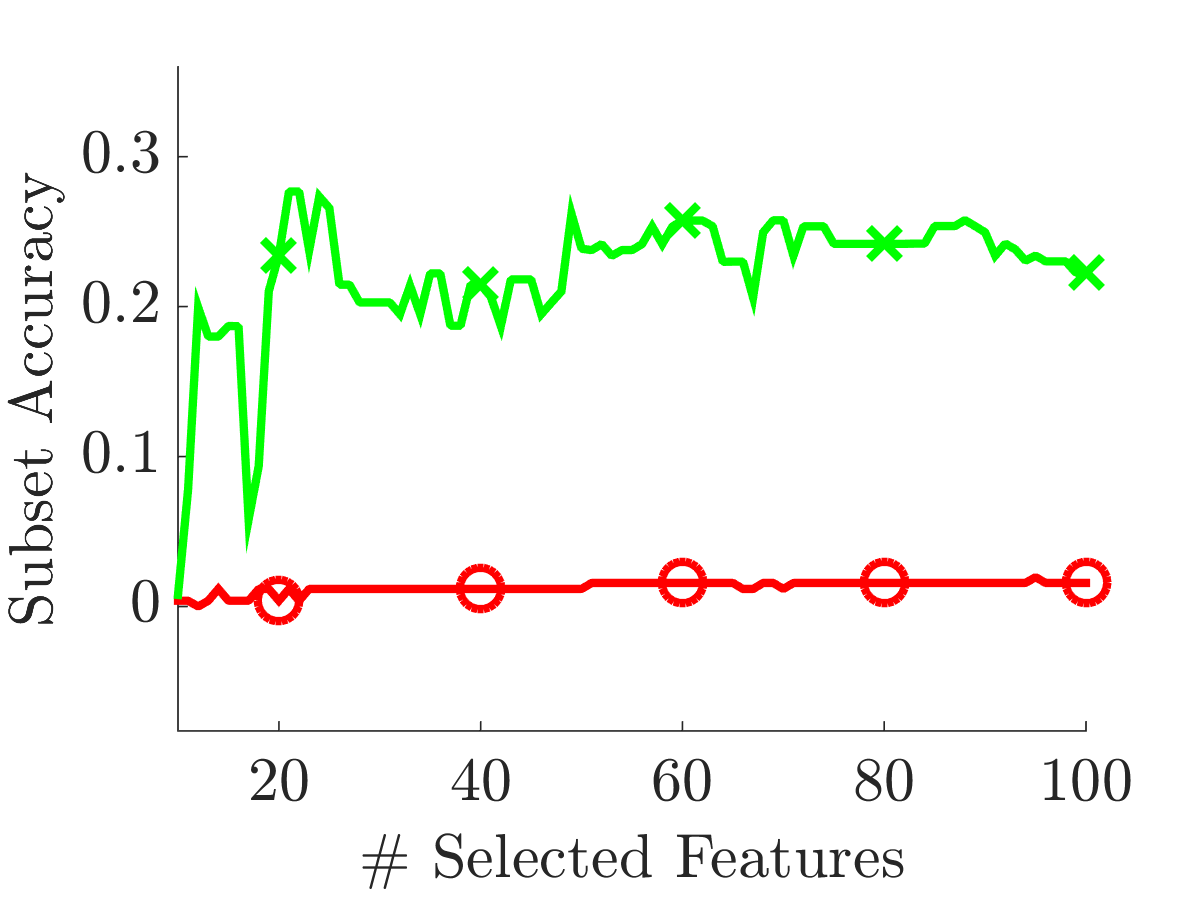}
\end{subfigure}
\begin{subfigure}[t]{0.28\textwidth}
\centering
\includegraphics[width=1.05\linewidth]{./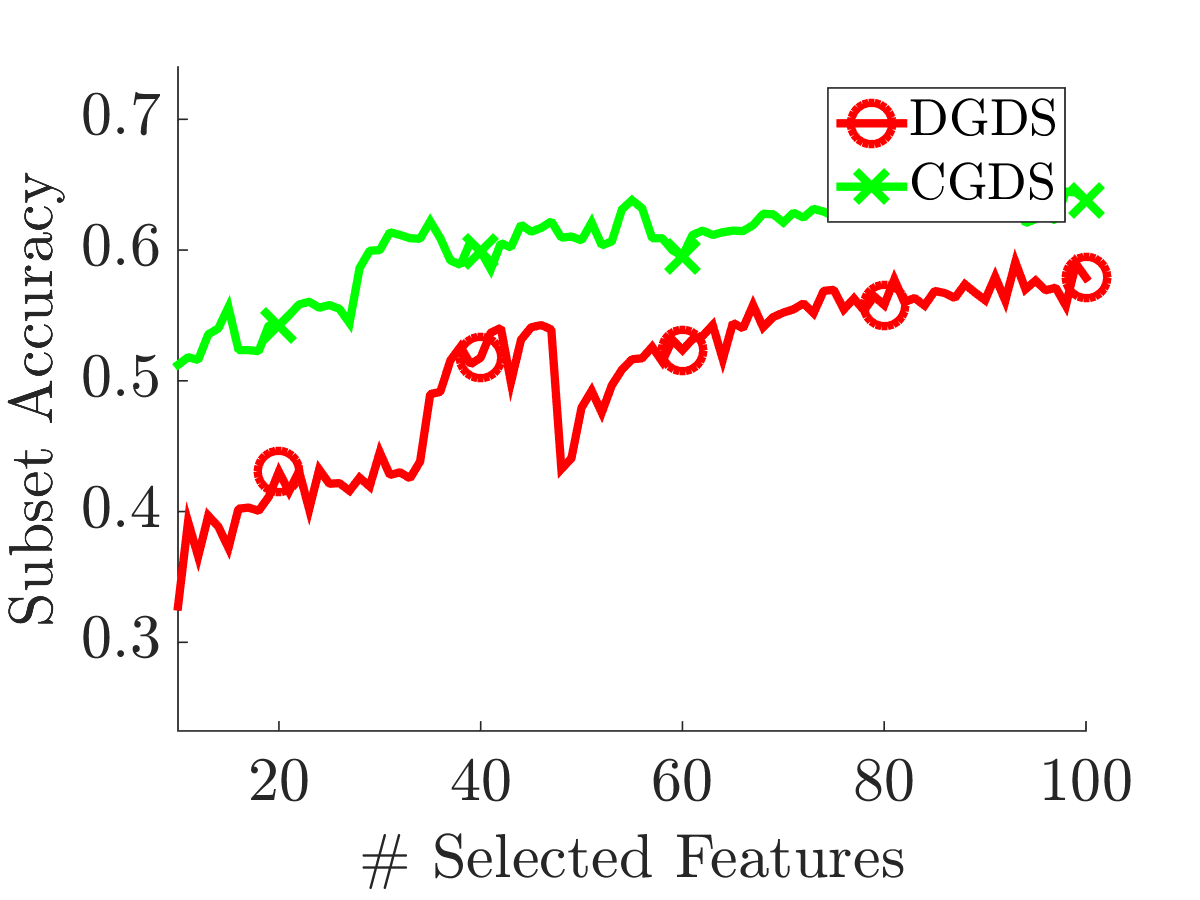}
\end{subfigure}
\vskip\baselineskip
\begin{subfigure}[t]{0.28\textwidth}
\centering
\includegraphics[width=1.05\linewidth]{./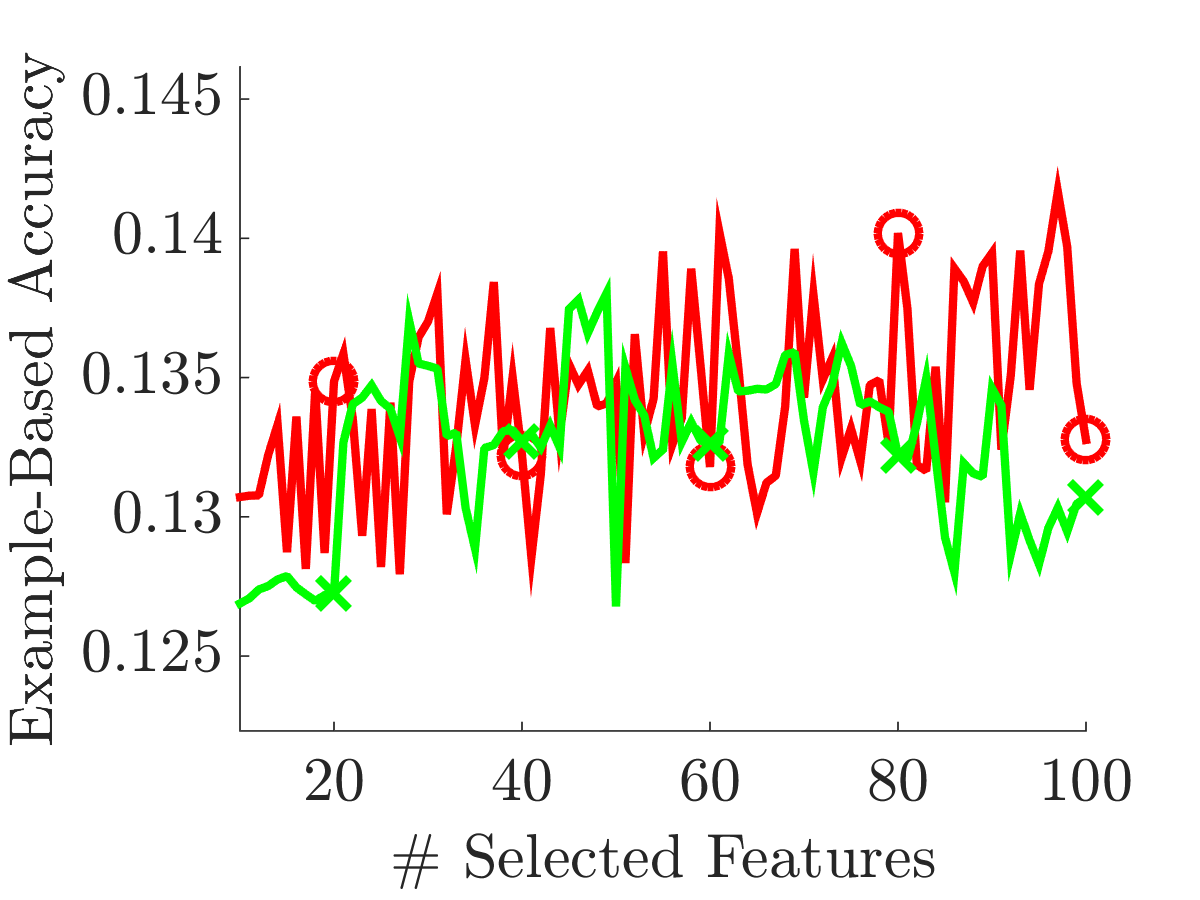}
\end{subfigure}
\begin{subfigure}[t]{0.28\textwidth}
\centering
\includegraphics[width=1.05\linewidth]{./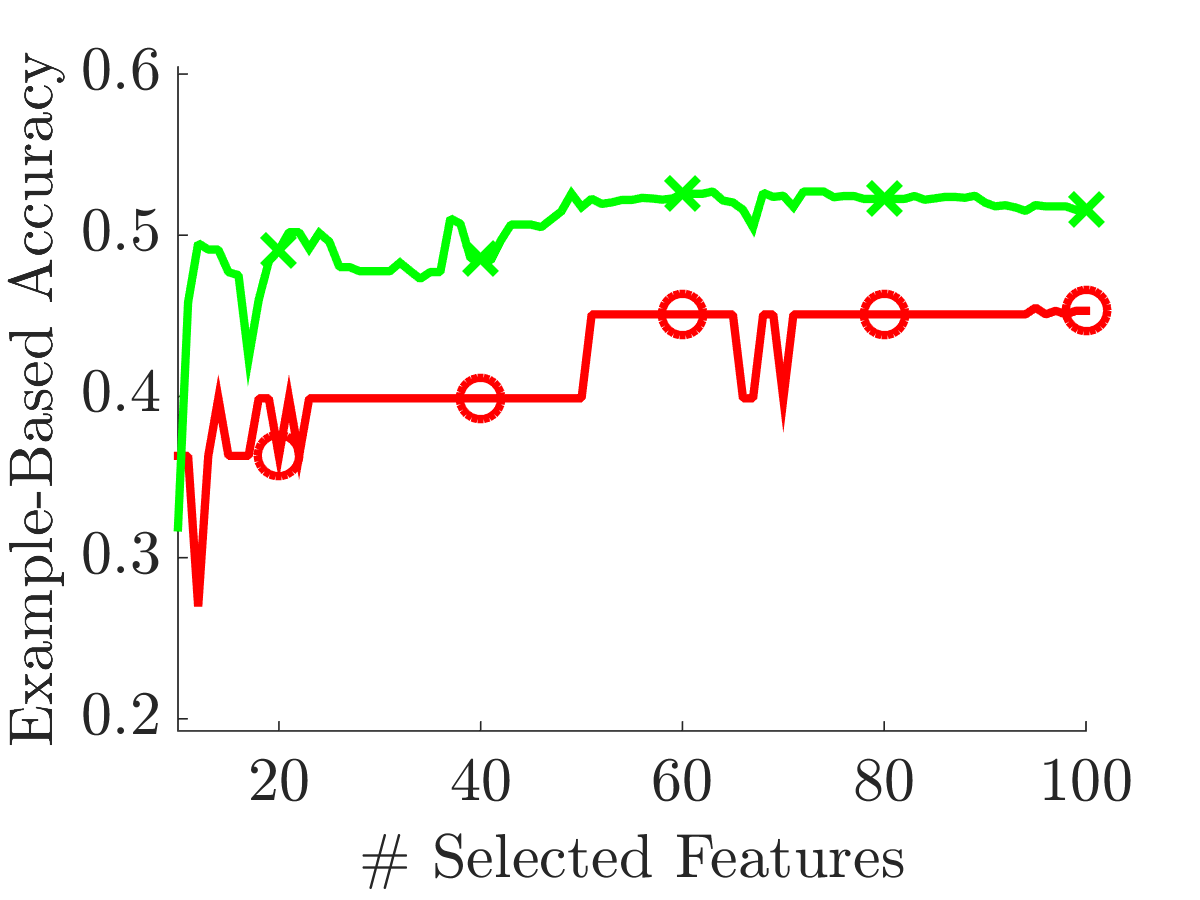}
\end{subfigure}
\begin{subfigure}[t]{0.28\textwidth}
\centering
\includegraphics[width=1.05\linewidth]{./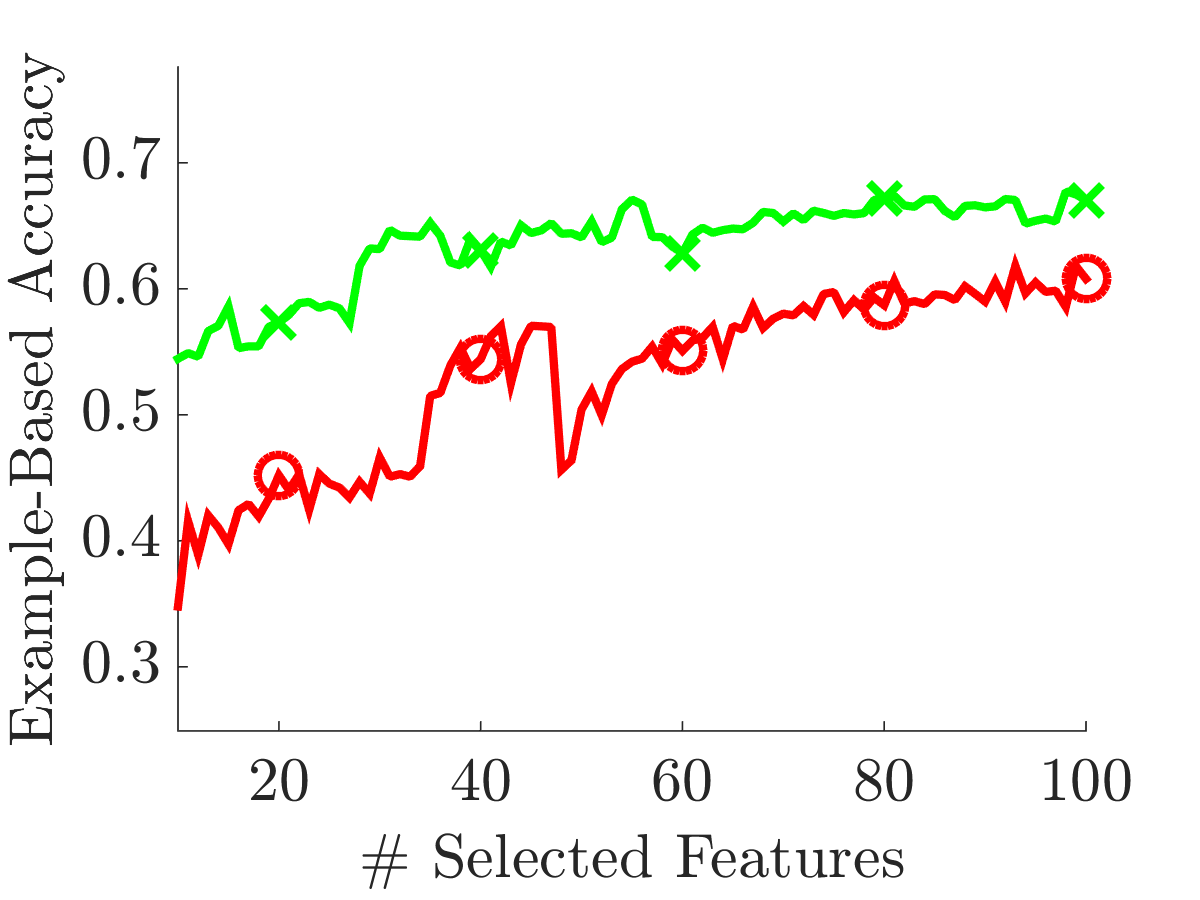}
\end{subfigure}
\vskip\baselineskip
\begin{subfigure}[t]{0.28\textwidth}
\centering
\includegraphics[width=1.05\linewidth]{./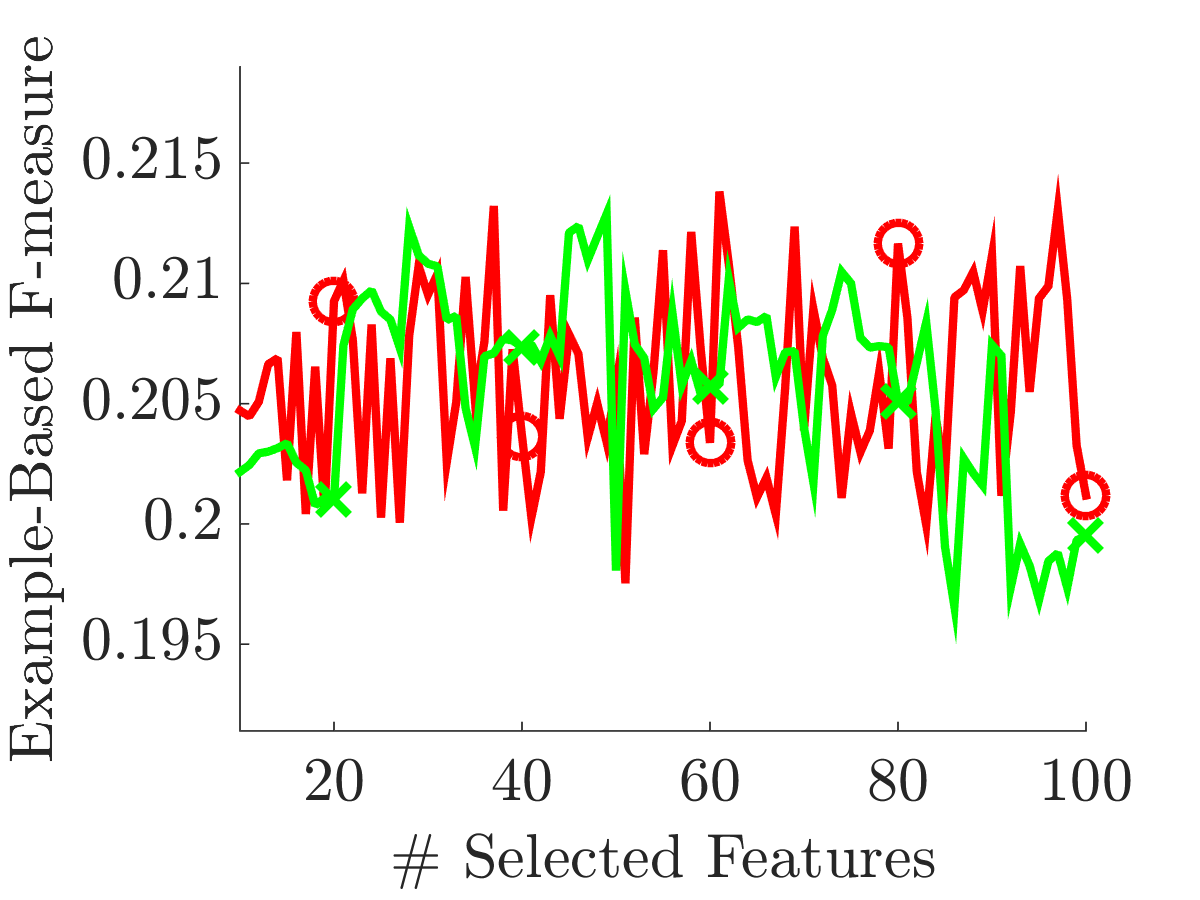}
\end{subfigure}
\begin{subfigure}[t]{0.28\textwidth}
\centering
\includegraphics[width=1.05\linewidth]{./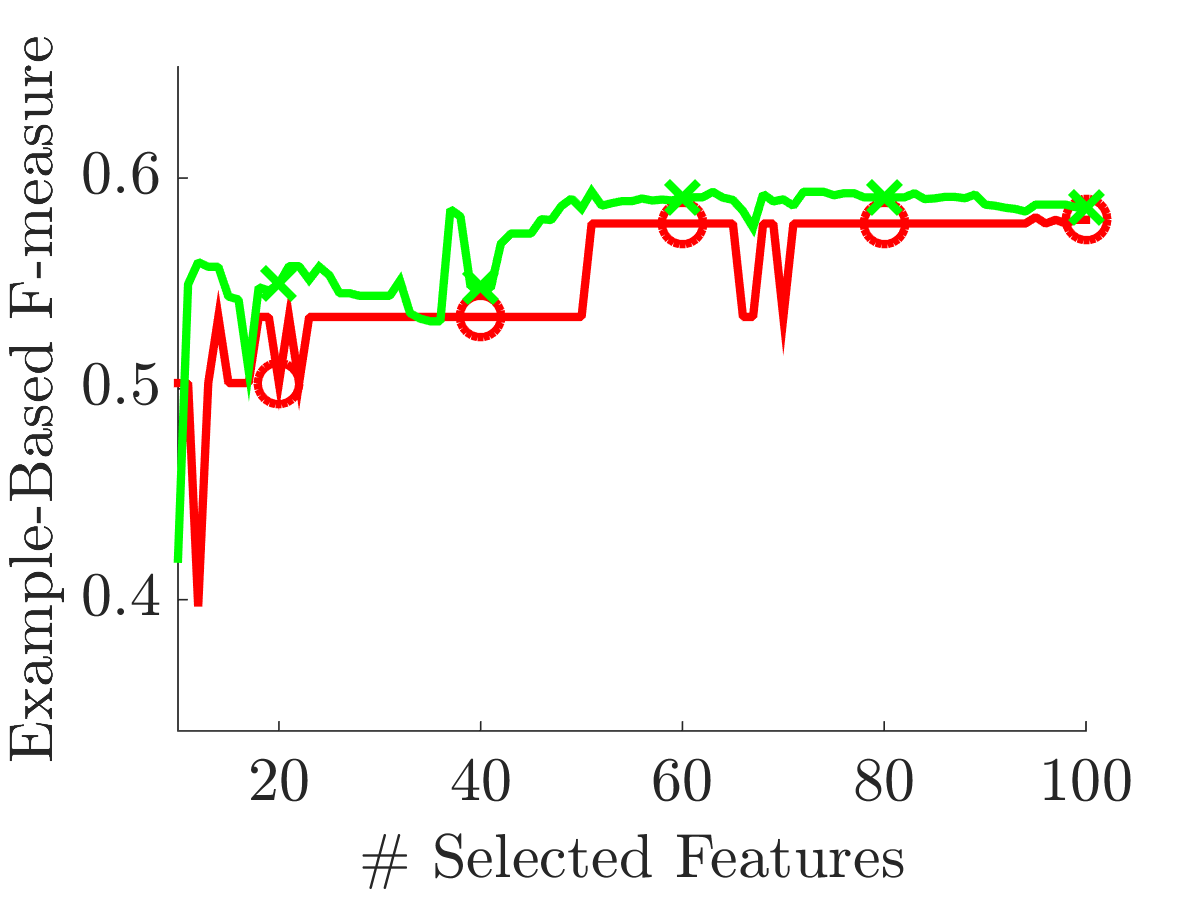}
\end{subfigure}
\begin{subfigure}[t]{0.28\textwidth}
\centering
\includegraphics[width=1.05\linewidth]{./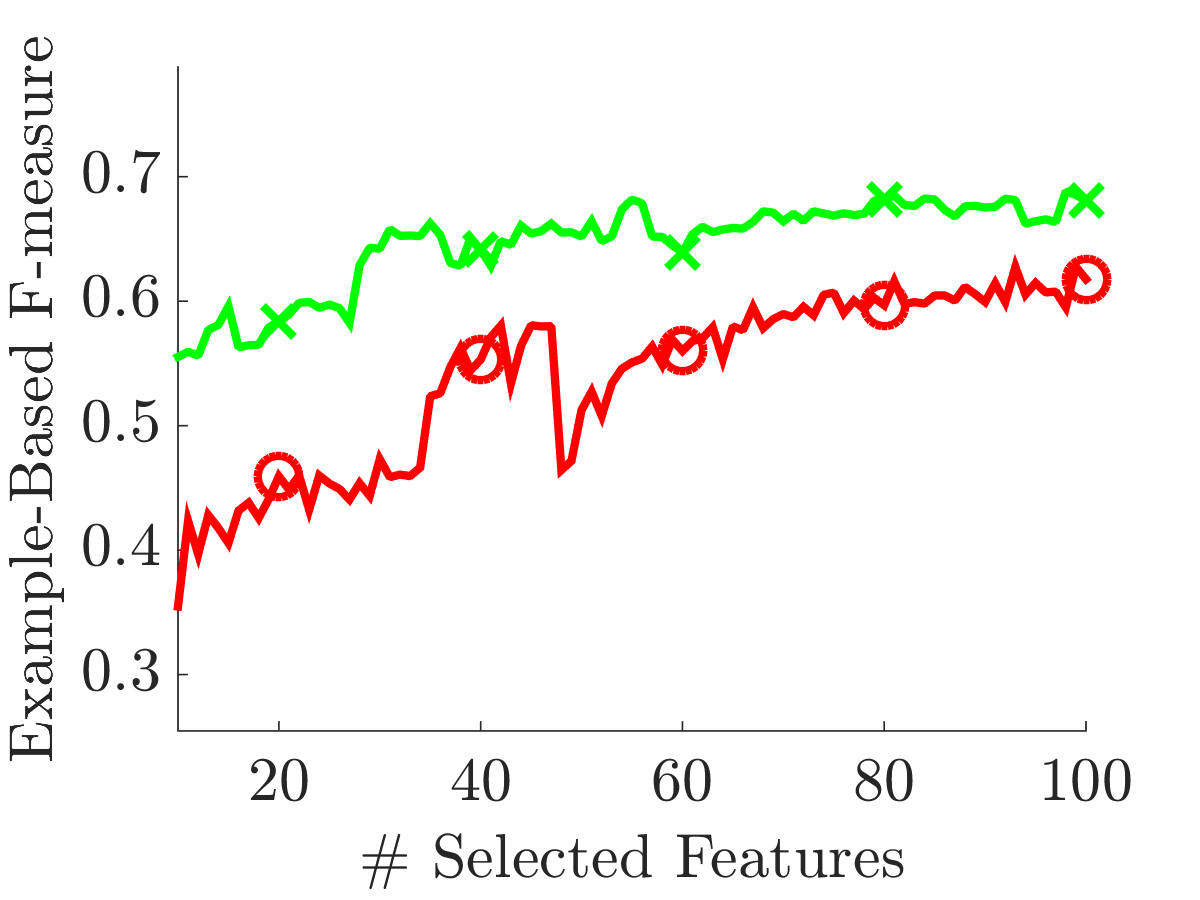}
\end{subfigure}
\vskip\baselineskip
\begin{subfigure}[t]{0.28\textwidth}
\centering
\includegraphics[width=1.05\linewidth]{./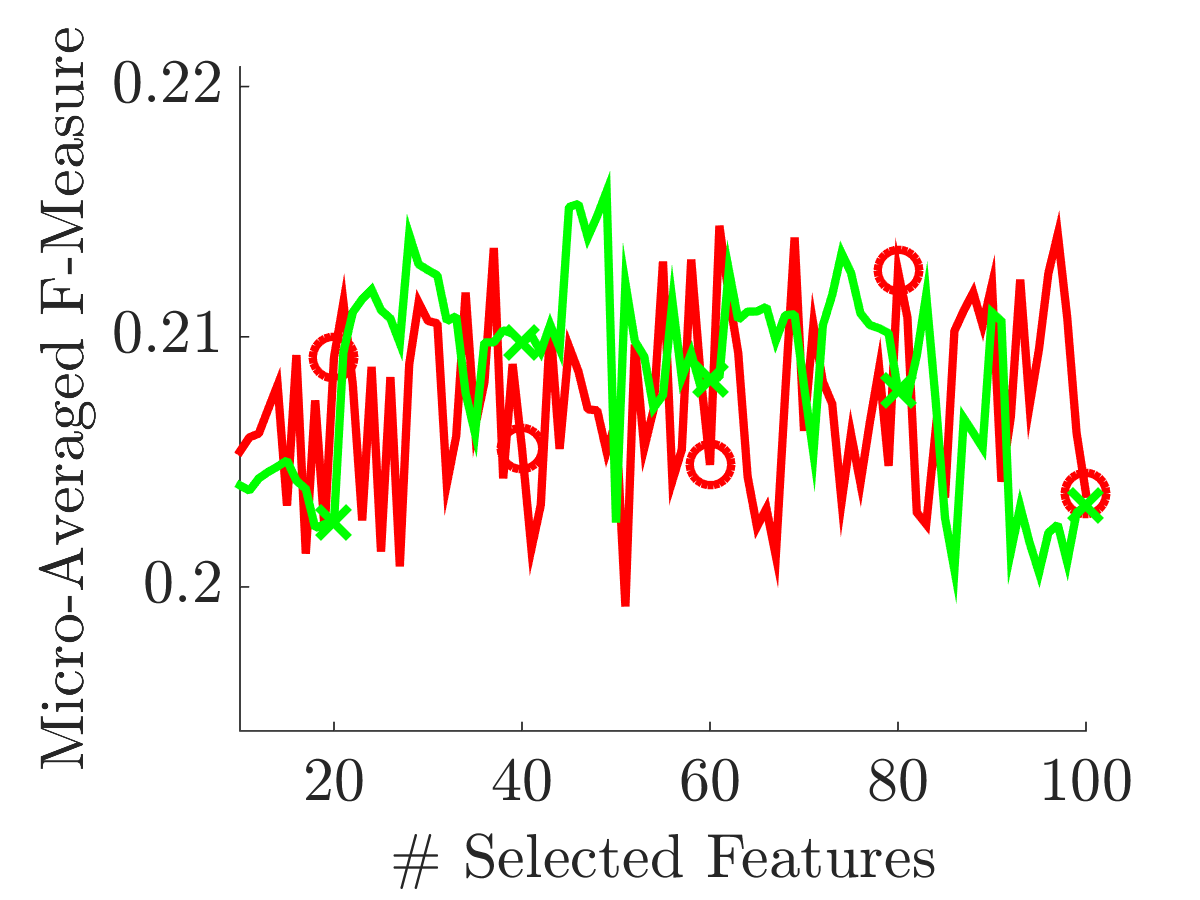}
\end{subfigure}
\begin{subfigure}[t]{0.28\textwidth}
\centering
\includegraphics[width=1.05\linewidth]{./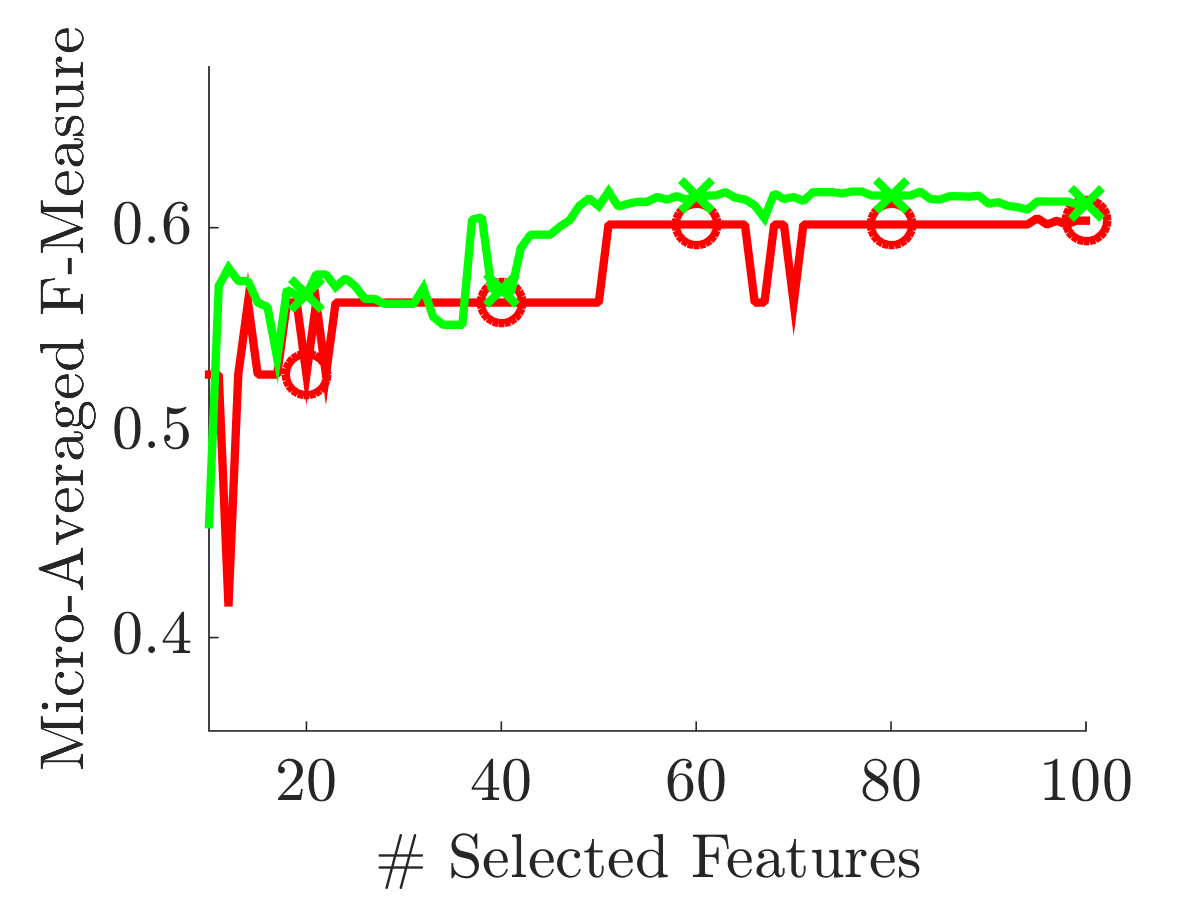}
\end{subfigure}
\begin{subfigure}[t]{0.28\textwidth}
\centering
\includegraphics[width=1.05\linewidth]{./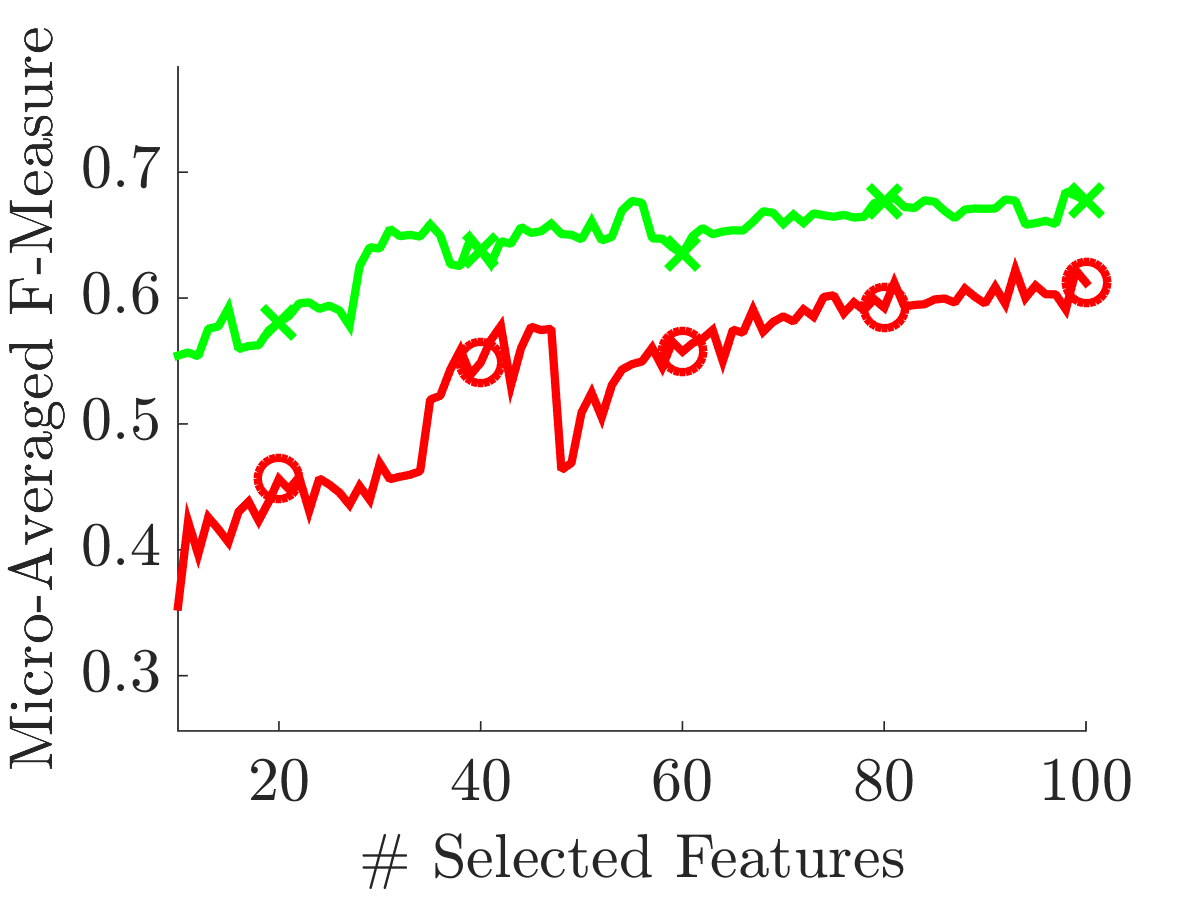}
\end{subfigure}
\vskip\baselineskip
\begin{subfigure}[t]{0.28\textwidth}
\centering
\includegraphics[width=1.05\linewidth]{./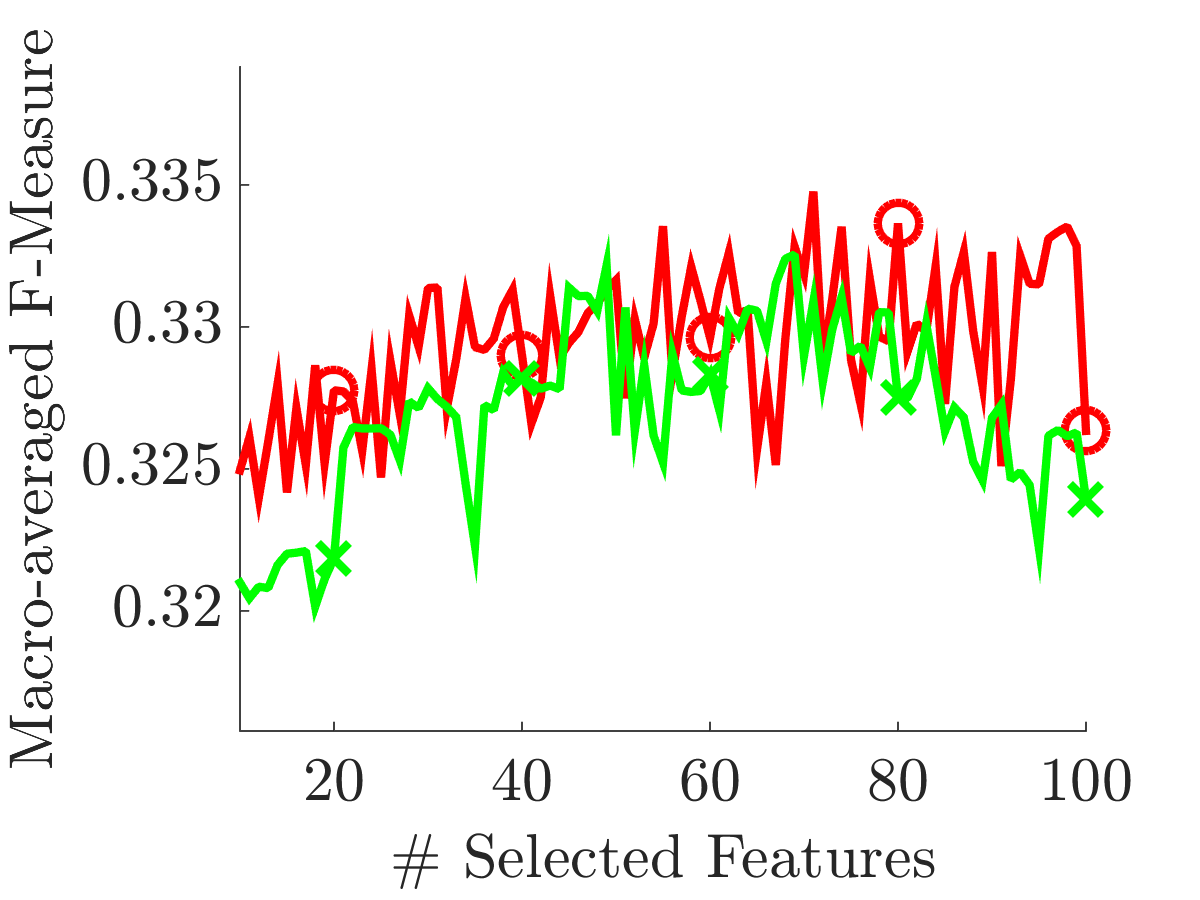}
\subcaption{Corel5k}
\end{subfigure}
\begin{subfigure}[t]{0.28\textwidth}
\centering
\includegraphics[width=1.05\linewidth]{./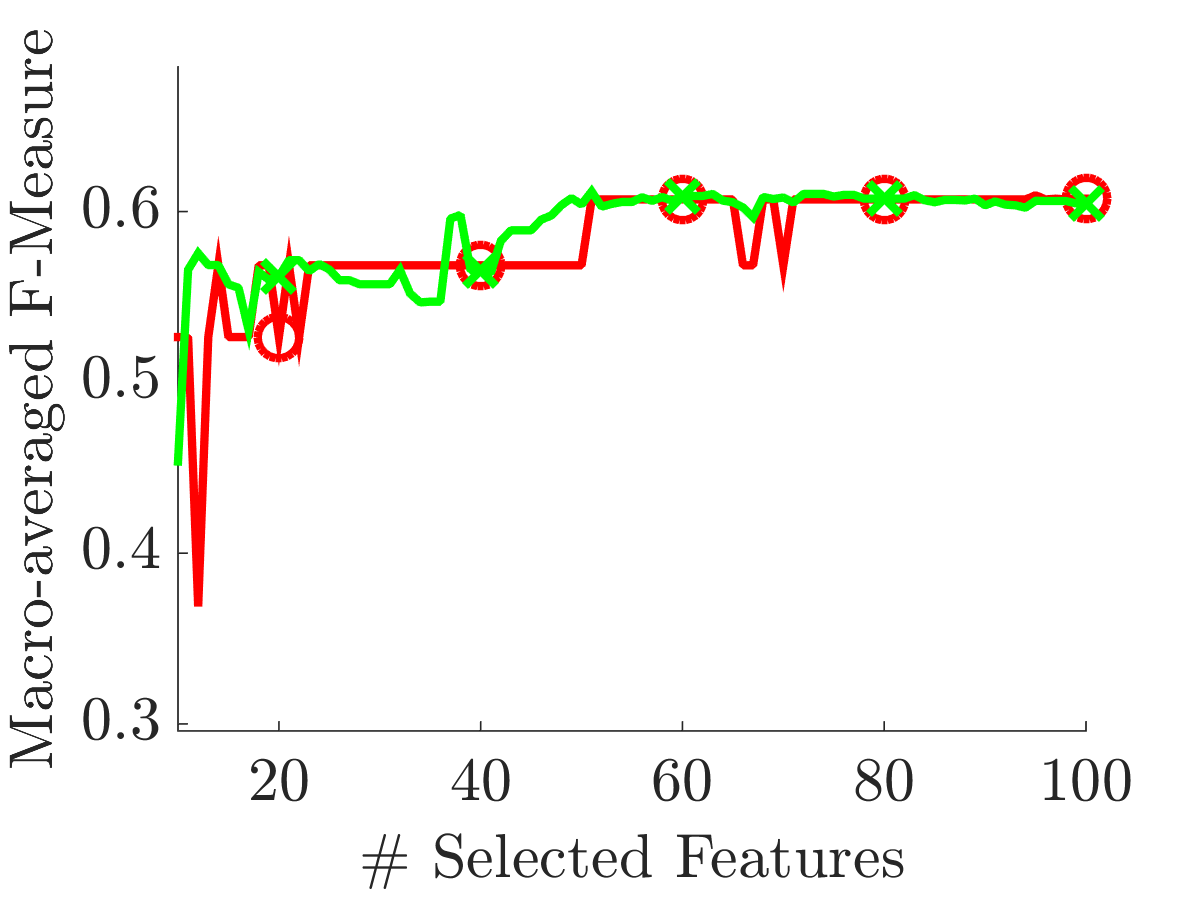}
\subcaption{Synthesized}
\end{subfigure}
\begin{subfigure}[t]{0.28\textwidth}
\centering
\includegraphics[width=1.05\linewidth]{./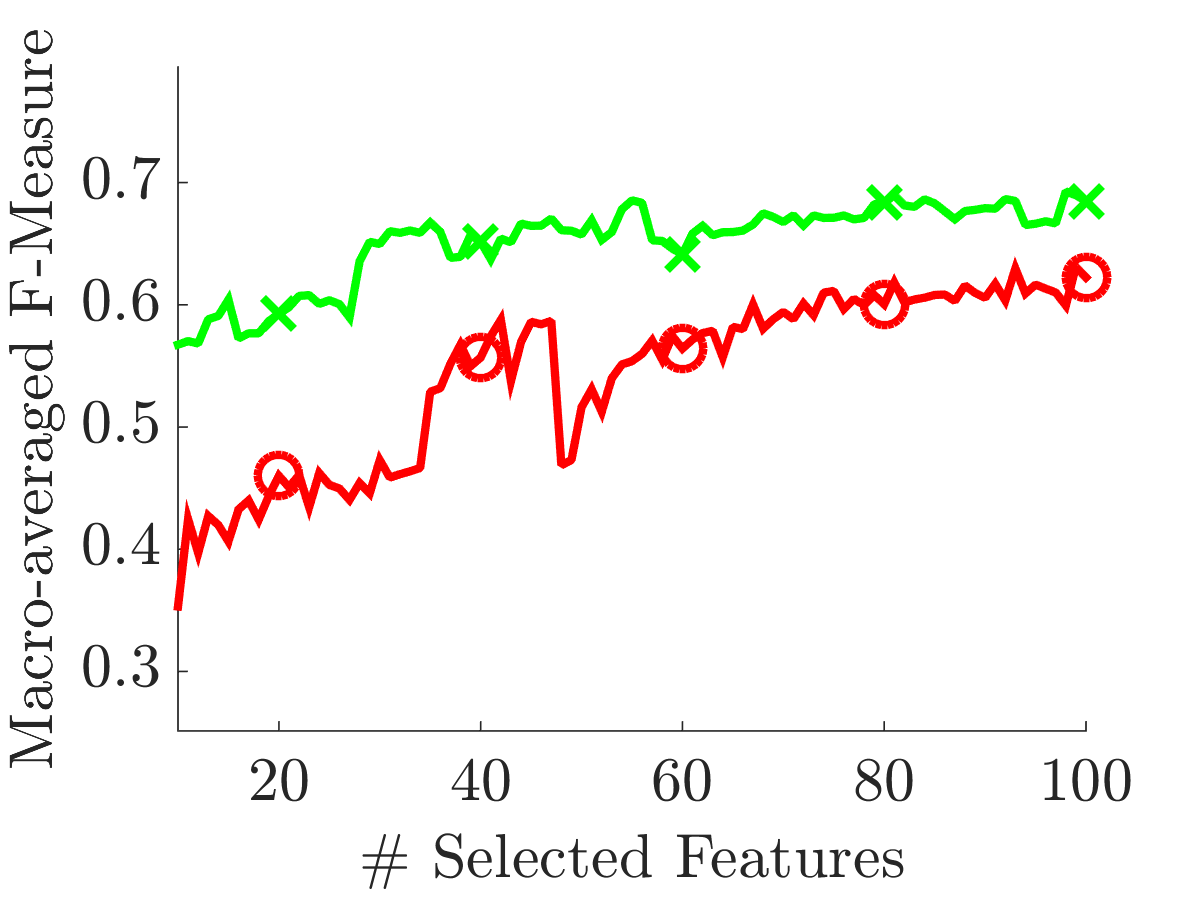}
\subcaption{Scene}
\end{subfigure}
\vskip\baselineskip
\caption{Comparison of proposed distributed method (DGDS) with proposed centralized method (CGDS) on the classification task.}
\label{Fig:comparison4}
\end{figure*}
%\fi

\end{document}